\newtheorem{corollary}{Corollary}[section]
\newtheorem{proposition}{Proposition}[section]
\newtheorem{conjecture}{Conjecture}[section]
\title{Toward a Physical Theory of Intelligence}
\author{
  Peter David Fagan \\
  School of Informatics \\
  University of Edinburgh \\
  United Kingdom \\
  \texttt{p.d.fagan@ed.ac.uk} \\
}
\begin{document}
\raggedbottom
\maketitle

\begin{abstract}
While often treated as abstract algorithmic properties, intelligence and computation are ultimately physical processes constrained by conservation laws. We introduce the Conservation-Congruent Encoding (CCE) framework as a unified, substrate-neutral physical framework for studying intelligence. We propose that information processing emerges when open systems undergo irreversible transitions, carving out macroscopic states from underlying reversible micro-dynamics. Generalizing Landauer's principle to arbitrary conserved quantities via metriplectic flows, we derive a universal bound for macroscopic computation. This yields physical metrics for intelligence and an operational analogue for consciousness, quantifying an agent's ability to extract work from the environment while minimizing its own dissipative dynamics. Applying CCE to the limits of physical observation, we model measurement as an active coarse-graining process rather than a passive projection. At the quantum scale, CCE recovers the Lindblad Master Equation, consistent with modelling decoherence as the dissipative exhaust required to record a measurement. Scaling to cosmological limits, we explore the hypothesis that gravity emerges as the macroscopic geometric footprint of these bounds. We show that, under this hypothesis, measurement-induced dissipation is consistent with a volumetric phase-space collapse, offering a dynamical route to the Bekenstein-Hawking area law. Equating the Landauer exhaust of this coarse-graining to horizon deformation outlines a limiting-case recovery of the Einstein Field Equations. Ultimately, by establishing a substrate-neutral link between thermodynamic dissipation, quantum measurement, and spacetime geometry, CCE provides physical constraints for understanding both natural and artificial intelligence.
\end{abstract}

\section{Introduction}
\label{sec:Introduction}

Intelligence is typically studied through behavioural, information-theoretic, or algorithmic lenses, yet all intelligent systems are ultimately physical systems. They evolve in time, consume resources, store and transform information, and interact with their environments through structured flows of energy and matter. Despite this, there is no unified account linking physical laws to the computational properties associated with intelligence. Existing accounts rely primarily on behavioural benchmarks or information-theoretic abstractions, and therefore lack an explicit grounding in physical law, making principled comparison across heterogeneous systems and the derivation of fundamental limits difficult. Furthermore, existing physical theories have historically remained largely confined to thermodynamics, linking computation to the export of heat rather than unifying concepts across all conserved quantities. Consequently, fields such as neuroscience, artificial intelligence, and quantum mechanics have developed disparate theoretical languages to describe how systems encode memory, process information, and exert causal influence over their environments.

This paper proposes a unified physical framework for studying intelligence, grounded in the dynamics of conserved quantities and irreversible information processing. We connect abstract computation to explicit physical costs by introducing the Conservation-Congruent Encoding (CCE) condition. Under CCE, informational distinctions correspond to metastable basins of attraction whose separability is enforced by fundamental conservation laws. Crucially, we show that the irreversible formation or collapse of these dynamically distinguishable equivalence classes requires the export of a conserved quantity, thereby generalising Landauer's bound across arbitrary physical domains. Whether an agent encodes information in the chemical potential of biological wetware, the electrical charge of a silicon GPU, or the angular momentum of a quantum spin, we show that the minimal energetic cost of irreversible computation is governed by the product of the channel's conjugate force ($\mathcal{F}$) and its fundamental characteristic scale ($\alpha$).

Within this physically grounded framework, we model an agent and its environment as coupled dynamical systems. We represent goals by macroscopic work performed over spatiotemporal trajectories. We define two complementary, scale- and boundary-relative physical efficiencies: Intelligence ($\chi$) measures the macroscopic goal-directed work produced per nat of irreversibly processed information, while consciousness ($\kappa$)---framed here strictly as an operational property---measures the goal-directed work supported per nat of preserved internal information. Furthermore, rather than assuming fixed system architectures, we treat the specification of the system boundary itself as a dynamic optimization problem, identifying the physically meaningful agent as the minimal boundary that maximises these efficiencies.

To rigorously validate the substrate neutrality of the CCE framework, we stress-test it at both the microscopic and macroscopic limits of physical observation. At the quantum scale, by framing the double-slit experiment as an information-extraction problem requiring the physical instantiation of a CCE, we map our generalised metriplectic dynamic equations directly to Hilbert space. This recovers the Lindblad Master Equation, suggesting that the destruction of quantum interference can be modelled as a consequence of the generalised dissipative exhaust required by a macroscopic detector to record a classical bit.

Scaling to the cosmological limit, we then explore the hypothesis that gravity emerges as the macroscopic geometric footprint of these informational bounds. We address the fundamental geometric paradox between volume-preserving microscopic dynamics and area-bounded macroscopic limits by showing that, under this hypothesis, measurement-induced dissipation is consistent with a volumetric phase-space collapse. By equating the localised Landauer exhaust of this coarse-graining to causal horizon deformation, we recover the Einstein Field Equations in this limiting construction via the Raychaudhuri integration. Extending this to the extreme limits of physical measurement, we establish a strict polynomial bound on macroscopic observation, suggesting that the structural fuel required for an autonomous observer to record a black hole's microstate may trigger mutual gravitational collapse if the observer crosses a critical threshold of approximately $2.52 R_S$.

Beyond these core definitions, we extend these metrics to coupled multi-component systems, yielding a quantitative notion of emergent intelligence as a coupling-dependent change in efficiency across scales. We analyse how reversible structure-preserving dynamics can reduce the irreversible information required for sustained goal-directed work, with oscillatory and near-critical regimes providing illustrative biological examples. We also develop a compositional theory of continuous dynamical circuits using ports and attractor semantics under CCE, and outline a physics-grounded perspective on AI safety based on strict constraints on irreversible information flow, entropy production, and structural homeostasis.

Taken together, the results propose a substrate-neutral theory that seeks to relate notions of intelligence directly to fundamental physical law, making its constraints and trade-offs explicit across all scales of nature.

\section{Mathematical Framework for Assessing Intelligence}
\label{sec:FormalModel}

This section develops a mathematical framework that renders intelligence a physically measurable property of coupled agent--environment dynamics. The central objective is to distinguish information transport from information use, and to identify when informational transformations incur unavoidable physical cost. To this end, both the agent and the environment are modelled as stochastic dynamical systems whose states evolve in time and are coupled through explicit ports. These ports define the physical interface across which information is exchanged and goal-directed behaviour emerges.

The framework isolates the minimal structural elements required to quantify intelligence as a physically
grounded efficiency. It is developed in four stages:
\begin{enumerate}[label=(\roman*)]
\item formal specification of the coupled agent--environment dynamics;
\item definition of useful work completed by the agent under a spatiotemporal framing;
\item introduction of a physically grounded information formulation that links computational irreversibility to physical cost;
\item definition of scale- and boundary-relative intelligence metrics derived from these elements.
\end{enumerate}

\subsection{Agent--Environment Dynamics Model}
\label{sec:AgentEnvironmentModel}

Let $(X,\mathscr{X})$ and $(E,\mathscr{E})$ denote the measurable state spaces of the internal (agent) and external (environment) systems, respectively. Define measurable port maps
\begin{align}
\label{eq:auto:0001}
\Pi_X : X \to \Xi_X, 
\qquad
\Pi_E : E \to \Xi_E,
\end{align}
where $\Xi_X$ and $\Xi_E$ are measurable port spaces representing the agent--environment interface variables at the chosen system boundary and scale. The coupled one-step transition dynamics are defined by measurable kernels
\begin{align}
\label{eq:auto:0002}
X_{t+1} &\sim K_X(\,\cdot\,\mid X_t,\,\Pi_E(E_t)), \\
E_{t+1} &\sim K_E(\,\cdot\,\mid E_t,\,\Pi_X(X_t)),
\end{align}
where $K_X$ and $K_E$ govern the evolution of internal and external states, respectively. Each system depends on the other only through its port variables, satisfying the conditional
independence relations
\begin{align}
\label{eq:auto:0003}
X_{t+1} &\perp\!\!\!\perp E_t \mid (X_t,\,\Pi_E(E_t)), \\
E_{t+1} &\perp\!\!\!\perp X_t \mid (E_t,\,\Pi_X(X_t)).
\end{align}
Hence, the ports $(\Pi_X(X_t),\,\Pi_E(E_t))$ form a Markov blanket that separates internal and external states, following the formalisms established in graphical models \cite{pearl2014probabilistic} and subsequently applied to the variational foundations of self-organising systems \cite{friston2013life, parr2022active}. Let $S_t := (X_t, E_t)$ denote the joint process with state space $(X \times E,\;\mathscr{X} \otimes \mathscr{E})$, and let $\mu_t := \mathsf{Law}(S_t) \in \mathcal{P}(X \times E)$ denote its probability law. The joint transition kernel $K$ on $(\mathscr{X} \otimes \mathscr{E})$ factorises via the ports as
\begin{align}
\label{eq:auto:0004}
K(dx',de' \mid x,e)
  &= K_X(dx' \mid x,\,\Pi_E(e))\;
     K_E(de' \mid e,\,\Pi_X(x)).
\end{align}
This factorization expresses that, conditioned on the current joint state, the next internal and external states are independent, each responding only through its respective port. The evolution of the joint measure is then given by
\begin{align}
\label{eq:auto:0005}
\mu_{t+1}(dx',de') 
  &= \!\int K(dx',de' \mid x,e)\,\mu_t(dx,de),
\end{align}
which defines a valid measure-valued dynamical system on $(X \times E,\;\mathscr{X} \otimes \mathscr{E})$. Under standard regularity assumptions (standard Borel spaces, measurable kernels, and optional Feller continuity), this recursion guarantees the existence of a unique probability law for the process $\{S_t\}_{t \ge 0}$ given an initial distribution $\mu_0$. The pushforward
\begin{align}
\label{eq:auto:0006}
\bar{\mu}_t  := (\Pi_X, \Pi_E)_\# \mu_t
\end{align}
defines the joint distribution over the agent–environment interface, capturing the measurable exchange between them. Time dependence of all informational and energetic quantities is thus implicit in
the evolution of $\mu_t$ under $K$.

\subsection{Useful Work as a Spatiotemporal Causal Footprint}
\label{sec:useful_work_causal_footprint}

Intelligent agents typically operate in open, spatiotemporally extended environments. Small, targeted
interventions at the agent--environment interface can unlock large downstream changes in the environment
(e.g.\ flipping a single bit that triggers a large automated control action, initiating a reaction by
introducing a catalyst, or issuing a command that coordinates distributed actuators). Evaluating
intelligence therefore requires a macroscopic notion of \emph{causal footprint}: the total downstream
structural consequences causally enabled by the agent's interventions.

Because explicitly simulating the environment $E$ at all microscopic scales is intractable, we evaluate
downstream dynamics through a spatiotemporal initial--boundary value problem (IBVP) constrained by
relativistic causality and local conservation
laws.

Let $\mathcal M$ denote a spacetime manifold foliated by spatial hypersurfaces $\Sigma_t$. We formalise the
agent's spatial boundary as a closed, codimension-one interface
$\Gamma_{\mathrm{port}}\subset \Sigma_t$. Over the horizon $t\in[0,T]$, this interface sweeps out a timelike
world-tube
\begin{align}
\label{eq:auto:0007}
\mathcal W_{\mathrm{port}} := \Gamma_{\mathrm{port}}\times [0,T]\subset \mathcal M.
\end{align}
This provides the physical bridge to the stochastic port model of \S\ref{sec:AgentEnvironmentModel}: the
abstract port variables $\Pi_X(X_t)$ and $\Pi_E(E_t)$ manifest physically as time-dependent boundary
data exchanged across $\mathcal W_{\mathrm{port}}$.

We designate causal past and future using standard notation. The agent's interventions at the port can only
influence the environment within its future domain of dependence $J^+(\mathcal W_{\mathrm{port}})$. The
historical data determining the initial state of the environment is strictly contained within the causal
past $J^-(\mathcal W_{\mathrm{port}})$. At time $t\ge 0$, the downstream region of influence is the spatial
volume
\begin{align}
\label{eq:auto:0008}
\Omega(t) := \Sigma_t \cap J^+(\mathcal W_{\mathrm{port}}).
\end{align}
Its boundary $\partial\Omega(t)$ is partitioned into two disjoint surfaces: (i) the active port interface
$\Gamma_{\mathrm{port}}$, and (ii) the expanding causal horizon $\Gamma_{\mathrm{LC}}(t)$ separating the
influenced region from the rest of the universe.

To remain substrate-neutral, we avoid committing to a single reference state. Instead, we model the continuum within $\Omega(t)$ via a generalised structural potential density $a(\mathbf r,t)$ and its spatial flux $\mathbf J_a(\mathbf r,t)$. To ensure dimensional consistency with the mesoscopic informational costs developed in subsequent sections, we formally parameterise this potential across the relevant physical channels as
\begin{align}
\label{eq:auto:0009}
a(\mathbf r,t) = \sum_c (\mathcal{F}_c \alpha_c) \rho_c(\mathbf r,t),
\end{align}
where $\rho_c(\mathbf r,t)$ is the local density of accessible structural distinctions (a dimensionless volumetric density) for channel $c$. The parameter $\alpha_c$ is the characteristic physical scale of that channel (e.g., $k_B$ for thermal limits), and $\mathcal{F}_c$ is the corresponding intensive conjugate force. 

This multiplexed formulation preserves complete substrate independence. It defines $a(\mathbf r,t)$ as the locally available structural capacity (a generalised free energy) evaluated at the chosen operational scale, while ensuring that the macroscopic causal footprint remains physically commensurate with the agent's internal computational costs. Let $\sigma_a(\mathbf r,t)\ge 0$ denote the corresponding local rate density of irreversible structural dissipation across the integrated channels.

To isolate the agent's causal influence, the environmental evolution is posed as an IBVP with two
constraints:
\begin{enumerate}[label=(\alph*)]
    \item \emph{Initial Cauchy data:} the state on $\Omega(0)$ is determined by the universe history in
    $J^-(\mathcal W_{\mathrm{port}})\cap\{t\le 0\}$, so inherited pre-existing dynamics are explicitly
    conditioned upon.
    \item \emph{Port boundary condition:} along $\mathcal W_{\mathrm{port}}$, the agent prescribes the
    active fluxes injected into the environment.
\end{enumerate}
We define the \emph{absolute enabled work} over horizon $T$ as the integral of all downstream generalised
dissipation within the causal region,
\begin{align}
\label{eq:auto:0010}
W_{\mathrm{enabled},T}
  &:= \int_0^T dt \int_{\Omega(t)} \sigma_a(\mathbf r,t)\, d^3\mathbf r.
\end{align}

Local conservation laws and irreversible degradation imply a continuity balance
\begin{align}
\label{eq:auto:0011}
\partial_t a + \nabla\cdot \mathbf J_a = -\sigma_a.
\end{align}
Applying the Reynolds transport theorem and Gauss's divergence theorem to the expanding volume $\Omega(t)$
relates volumetric structural decay to boundary fluxes:
\begin{align}
\label{eq:auto:0012}
\frac{d}{dt}\int_{\Omega(t)} a(\mathbf r,t)\,d^3\mathbf r
&=
\int_{\Gamma_{\mathrm{port}}} \mathbf J_a\cdot \mathbf n_{\mathrm{in}}\, dA
-
\int_{\Gamma_{\mathrm{LC}}(t)}\!\!\left(\mathbf J_a - a\,\mathbf v_{\mathrm{LC}}\right)\cdot \mathbf n_{\mathrm{out}}\, dA
-
\int_{\Omega(t)} \sigma_a(\mathbf r,t)\, d^3\mathbf r,
\end{align}
where $\mathbf n_{\mathrm{in}}$ is the inward normal from the port and $\mathbf v_{\mathrm{LC}}$ is the
expansion velocity of the causal horizon. Reynolds transport gives
\begin{align}
\label{eq:auto:0013}
\frac{d}{dt}\int_{\Omega(t)} a\,d^3\mathbf r
  = \int_{\Omega(t)} \partial_t a\,d^3\mathbf r
    + \int_{\partial\Omega(t)} a\,\mathbf v_{\partial\Omega}\cdot \mathbf n_{\mathrm{out}}\,dA,
\end{align}
with $\mathbf v_{\partial\Omega}=\mathbf 0$ on the fixed port boundary and
$\mathbf v_{\partial\Omega}=\mathbf v_{\mathrm{LC}}$ on the moving boundary $\Gamma_{\mathrm{LC}}(t)$.
Substituting the local balance $\partial_t a=-\nabla\cdot \mathbf J_a-\sigma_a$ and applying Gauss's theorem
yields
\begin{align}
\label{eq:auto:0014}
\frac{d}{dt}\int_{\Omega(t)} a\,d^3\mathbf r
&= -\int_{\partial\Omega(t)} \mathbf J_a\cdot \mathbf n_{\mathrm{out}}\, dA
   +\int_{\partial\Omega(t)} a\,\mathbf v_{\partial\Omega}\cdot \mathbf n_{\mathrm{out}}\,dA
   -\int_{\Omega(t)} \sigma_a\, d^3\mathbf r,
\end{align}
which, after splitting $\partial\Omega(t)=\Gamma_{\mathrm{port}}\cup \Gamma_{\mathrm{LC}}(t)$ and using
$\mathbf n_{\mathrm{in}}=-\mathbf n_{\mathrm{out}}$ on $\Gamma_{\mathrm{port}}$, gives the stated form.
The first boundary term evaluates to the physical power injected by the agent, $\dot W_{\mathrm{port}}(t)$.
Let $\Phi_{\mathrm{LC}}(t)$ denote the net flux escaping across the expanding causal horizon. Under causal isolation of the region, there is no incoming flux across the horizon, so the
horizon term reduces to outward leakage only (and in the idealised perfectly isolating limit,
$\Phi_{\mathrm{LC}}(t)=0$). Integrating over $[0,T]$ yields the exact relation
\begin{align}
W_{\mathrm{enabled},T}
&= W_{\mathrm{port},T} - \Delta A_{\mathrm{env}} - \int_0^T \Phi_{\mathrm{LC}}(t)\,dt,
\label{eq:auto:0015}
\end{align}
where $W_{\mathrm{port},T}:=\int_0^T \dot W_{\mathrm{port}}(t)\,dt$ and $\Delta A_{\mathrm{env}} :=
A_{\mathrm{env}}(T)-A_{\mathrm{env}}(0)$ is the net change in total bulk structural potential within
$\Omega(t)$.

The initial data on $\Omega(0)$ already encode gradients inherited from the past light cone. Autonomous
relaxation of these pre-existing gradients will produce structural change and flux across
$\Gamma_{\mathrm{LC}}(t)$ even if the agent behaves as a strictly passive object. To avoid conflating active
causal influence with deterministic background drift, define a baseline using the same past-light-cone
conditioned initial data but imposing a passive boundary condition at the port world-tube (e.g.\
$\mathbf J_a\cdot \mathbf n=0$ on $\mathcal W_{\mathrm{port}}$). Let
$W_{\mathrm{enabled},T}^{\mathrm{default}}$ and $\Delta A_{\mathrm{env}}^{\mathrm{default}}$ denote the
enabled work and bulk structural change under this passive baseline.

By domain-of-dependence, the state on and outside the
expanding causal horizon $\Gamma_{\mathrm{LC}}(t)$ cannot depend on boundary conditions applied at
$\mathcal W_{\mathrm{port}}$ during $t\in[0,T]$. Therefore, the horizon flux $\Phi_{\mathrm{LC}}(t)$ is
identical in the actual and default dynamics. Defining the purely causally enabled structural footprint as
\begin{align}
\label{eq:auto:0016}
W_{\mathrm{causal},T}
  &:= W_{\mathrm{enabled},T}^{\mathrm{actual}} - W_{\mathrm{enabled},T}^{\mathrm{default}},
\end{align}
the unwieldy light-cone terms cancel exactly:
\begin{align}
\label{eq:auto:0017}
W_{\mathrm{causal},T}
&= W_{\mathrm{port},T} - \Big(\Delta A_{\mathrm{env}}^{\mathrm{actual}}-\Delta A_{\mathrm{env}}^{\mathrm{default}}\Big)
 = W_{\mathrm{port},T} - \Delta A_{\mathrm{env}}^{\mathrm{causal}}.
\end{align}
This boundary-value formulation isolates the agent's net structural footprint without assuming an
``unperturbed'' external vacuum, and ensures an agent receives no efficiency credit for merely witnessing
pre-existing cascades (a large autonomous background event yields
$\Delta A_{\mathrm{env}}^{\mathrm{causal}}=0$).

Often we are interested not in total causal footprint but in influence aligned with a task or purpose at
the chosen operational scale. To account for this a projection operator $\pi_{\mathcal G}$ that extracts the
goal-relevant component of the interface-level footprint can be leveraged. We define the expected work over horizon $T$ as
\begin{align}
\label{eq:auto:0018}
W_{\mathrm{goal},T}
&:= \mathbb{E}_{\mu_{0:T}}\!\left[\pi_{\mathcal G}\!\left(W_{\mathrm{port},T}-\Delta A_{\mathrm{env}}^{\mathrm{causal}}\right)\right],
\end{align}
where $\mu_{0:T}$ is the law of the coupled process on $[0,T]$.

\subsection{Information Processing}
\label{sec:InformationMetrics}

Intelligent systems process information through the evolution of physical dynamics. Throughout, we adopt the language of information processing as a descriptive convenience. The underlying physical primitive is entropy production, which can be related (at an appropriate level of description) to information-theoretic quantities, but is not identical to them. To relate abstract notions of information to measurable quantities, we formalise information processing at the level of probability distributions over system trajectories. In this framework, information corresponds to dynamically preserved distinctions in the system’s evolution, while irreversible information loss corresponds to the collapse of such distinctions under time evolution. 

In practice, irreversible information processing may be diagnosed via divergence between forward and time-reversed trajectory distributions. Such divergences provide bounds on irreversible loss of information, becoming informative only after projection onto a dynamically induced quotient space that determines which distinctions are physically meaningful.

While this path-space formulation applies to general dynamical systems, relating informational irreversibility to explicitly measurable physical costs typically requires additional structure. In particular, not every operational coarse-graining admits a clean metastable encoding at a given scale. We therefore introduce the \emph{Conservation-Congruent Encoding} (CCE) framework, which specifies a class of systems in which dynamically preserved distinctions admit metastable physical realizations constrained by conservation laws. In this regime, information-theoretic measures of irreversibility can be directly related to entropy production and the dissipation of conserved quantities such as heat or angular momentum, recovering and generalising Landauer-type bounds \cite{landauer1961irreversibility,bennett1982thermodynamics,parrondo2015thermodynamics}.

\subsubsection{Path-Space Information Measures for Dynamical Systems}
\label{sec:PathMeasures}

Information processing arises from the irreversible evolution of coupled dynamical systems. In this work, we take the base state spaces $(X \times E, \mathscr{X} \otimes \mathscr{E})$ to represent the absolute microscopic reality of the agent and environment, governed fundamentally by deterministic, reversible dynamics. Consequently, the stochasticity present in the induced agent transition kernel $K_X(dx'|x)$ is not an axiomatic assumption, but a strict mathematical artifact of the port maps $\Pi_X$ and $\Pi_E$. Because the agent interacts with the environment exclusively through these lower-dimensional boundaries, the unresolved microscopic degrees of freedom of the environment act as a bath, driving the irreversible stochastic evolution of the open subsystem $X$.

While instantaneous distributions $\mu_t^X = \mathrm{Law}(X_t)$ describe the subsystem at a given time, questions of reversibility and irreversible information loss are inherently temporal and are therefore most naturally formulated at the level of entire trajectories. Let 
\begin{align}
\label{eq:auto:0019}
\mathcal{T}_T := \{\tau = (x_0, x_1, ..., x_T) | x_t \in X\}
\end{align}
denote the space of agent trajectories over a finite horizon $T$. Given an initial distribution $\mu_0^X \in \mathcal{P}(X)$ and the induced stochastic transition kernel $K_X$, the dynamics generate a probability measure over trajectories, $\mathbb{P}_{\mu_0^X}$, defined on $\mathcal{T}_T$.

Operational notions of information and computation may not depend on all microscopic distinctions between internal states. Furthermore, in complex systems, information processing is often multiplexed across multiple spatiotemporal scales and macroscopic physical channels. We therefore define a multiplexed operational interface via a projection $\Phi := \{\phi_t: X \rightarrow Z\}_{t=0}^T$ where the target $Z$ acts as a quotient space grouping microscopic states into macroscopic equivalence classes. To align with the macroscopic properties of the environment, the operational space is strictly a Cartesian product of distinct physical channels, $Z = Z_1 \times Z_2 \times \dots \times Z_C$. Each component projection $\phi^{(c)}_t: X \rightarrow Z_c$ isolates the dynamical equivalence classes associated with a specific macroscopic conserved quantity (e.g., thermal heat, chemical gradients, or angular momentum). 

This projection induces a corresponding trajectory-level mapping $\pi_\Phi: \mathcal{T}_T \rightarrow \mathcal{Z}_T$. The induced statistics of coarse-grained trajectories are captured by the pushforward measure $\mathbb{P}_{\mu_0^X}^\Phi := (\pi_\Phi)_\# \mathbb{P}_{\mu_0^X}$. To assess temporal asymmetry at this operational level, we introduce a time-reversal operator $\Theta$ and define the time-reversed induced path measure $(\mathbb{P}_{\mu_0^X}^{rev})^\Phi$.

We define the operational (or encoding-level) irreversibility over horizon $T$ as the statistical distinguishability between the forward and time-reversed induced path measures:
\begin{align}
\label{eq:auto:0020}
\mathcal{I}_\Phi(T) := D_{KL}(\mathbb{P}_{\mu_0^X}^\Phi \,\|\, (\mathbb{P}_{\mu_0^X}^{rev})^\Phi).
\end{align}
Operational reversibility corresponds to the vanishing of this quantity ($\mathcal{I}_\Phi(T) = 0$). This formulation ensures invariance under internal permutations, reparameterizations, or spatial translations within the system. For example, if an encoded predictive memory is physically routed from one spatial region of the agent to another, the projection identifies these distinct micro-trajectories as belonging to the same preserved equivalence class, correctly yielding zero irreversible computational cost for pure spatial transport.

By the data-processing inequality, encoding-level irreversibility is bounded above by the raw trajectory-level asymmetry of the open system:
\begin{align}
\label{eq:auto:0021}
\mathcal{I}_\Phi(T) \le \mathcal{I}(T).
\end{align}
Crucially, because $X$ is an open system coupled to an unobserved environment via the port maps, its induced microscopic evolution is non-unitary. Therefore, the base trajectory asymmetry $\mathcal{I}(T)$ is strictly positive, representing the total physical entropy exported to the environmental bath. The data-processing inequality thus asserts a strict physical hierarchy: the irreversibility at the abstract, multiplexed operational level ($\Phi$) cannot exceed the total irreversibility of the open physical system ($X$).

\subsubsection{Conservation-Congruent Encoding (CCE)}
\label{sec:EncodingPhysicalState}

The trajectory-level framework developed in \S\ref{sec:PathMeasures} characterises information in terms of dynamically induced equivalence classes, grouping microscopic configurations that yield statistically indistinguishable macroscopic trajectories under a chosen multiplexed coarse-graining $\Phi$. These equivalence classes are abstract objects; they specify which internal histories are informationally indistinguishable across various macroscopic channels, but do not, by themselves, determine how such distinctions are physically realised. In this subsection, we identify conditions under which these abstract equivalence classes admit metastable physical realizations, making them easier to measure and study. We refer to such realizations as \emph{conservation-congruent encodings}.

Let $X$ denote the internal physical state space of the agent and let $C$ denote a set of externally adjustable
control parameters (e.g., barrier heights, biases, or couplings). The chosen projection $\Phi$ maps microscopic states to a composite quotient space $Z = Z_1 \times Z_2 \times \dots \times Z_C$, whose elements label the abstract equivalence classes induced by the dynamics across distinct physical channels. A physical
system realises a composite coarse-grained state $z = (z_1, \dots, z_C) \in Z$ through a probability distribution over its internal physical states under a
given control configuration $c \in C$. This is captured by a representation map
\begin{align}
\label{eq:auto:0022}
f : Z \times C \to \mathcal{P}(X),
\end{align}
where $f(z,c)$ denotes the distribution over microscopic configurations $x \in X$ corresponding to the realization of
the joint macroscopic state $z$ under control $c$. At this level, no assumptions are made regarding the stability, uniqueness, or cost of such realizations.

To connect abstract equivalence classes to explicit physical cost, we impose the
\emph{Conservation-Congruent Encoding (CCE)} conditions, which specify when the realizations $f(z,c)$ are
metastable and controllably manipulable:

\begin{enumerate}[label=(\roman*)]
    \item \textbf{Metastable realization:}
    Each composite equivalence class $z \in Z$ is realised by a metastable basin $B_z \subset X$, such that basins
    corresponding to distinct $z$ are separated by activation barriers associated with the effective, scale-dependent conserved
    physical quantities that emerge under the chosen projection $\Phi$. Local equilibration within each basin occurs on timescales short compared to those
    governing inter-basin transitions or control-induced deformation of the basin structure
    \cite{kramers1940brownian,hanggi1990reaction}. 

    \item \textbf{Quasistatic, conservation-respecting control:}
    Control trajectories $c_t$ deform the basin structure $\{B_z\}$ quasistatically relative to internal
    relaxation. Transitions between basins proceed along paths constrained by the conserved-quantity
    structure of the dynamics, rather than through arbitrary or
    nonphysical jumps \cite{callen1985thermodynamics,sekimoto1998langevin}.
\end{enumerate}

Under the CCE assumptions, the representation map $f$ is locally invertible on each metastable basin:
physical configurations identify equivalence classes up to fluctuations set by the characteristic scales
$\alpha_c$ of the specific environmental baths to which the channel couples (e.g., $k_B$ for thermal channels, or corresponding intensive parameters for chemical or mechanical channels). As a result,
operations that irreversibly merge dynamically distinguishable equivalence classes necessarily entail
irreversible physical transformations of these scale-dependent conserved quantities.

It is critical to explicitly distinguish the physical state variables that instantiate an encoding from the conserved quantities exported to erase it. Under CCE, these need not be the same physical property. For example, in a classical CMOS gate, the information is encoded electromagnetically via spatial charge accumulation, but the conserved quantity exported during irreversible erasure is thermal energy (heat). The encoding is termed \emph{conservation-congruent} because its metastable geometry---the energetic barrier separating its states---is strictly stabilised by, and mathematically coupled to, the specific conservation laws governing the exhaust channel. The irreversible collapse of the encoding state ($x$) strictly forces a proportional export of entropy via the specific conserved quantities via the dissipative flow.

The CCE conditions are sufficient but not necessary for irreversibility to entail entropy export. Crucially, this relationship is scale-dependent. At the absolute microscopic limit of a closed system, dynamic evolution is purely reversible and volume-preserving. Irreversibility and generalised dissipation strictly emerge only at mesoscopic or macroscopic scales where the system is open, requiring the export of conserved quantities into unresolved environmental degrees of freedom. Therefore, for open systems at these higher scales, the collapse of dynamically induced equivalence classes inevitably requires dissipation into degrees of freedom not retained by the system, even when no clean metastable CCE description exists. The CCE framework identifies a broad and practically important class of systems---particularly engineered and experimentally tractable ones---for which this macroscopic cost can be more easily isolated and measured.

In complex systems, information processing is often multiplexed across multiple spatiotemporal scales, meaning distinct conservation-congruent encodings can simultaneously emerge under different coarse-grainings. Consequently, finding strict joint metastability across all physical channels within the raw state space is highly improbable. In such multi-scale regimes, the relevant effective conserved quantities (and their corresponding metastable basins) can be formally understood as distinct slow eigenfunctions of the system's underlying transfer operator, each associated with a different relaxation timescale.  The chosen projection $\Phi$ thus acts as a tunable filter: it sets the specific scale of observation, isolating the functionally relevant metastable encodings at that level from both faster, high-entropy microscopic fluctuations and slower background drift.

\subsubsection{Irreversibility and Conserved-Quantity Cost}
\label{sec:IrreversibilityCost}

Within the quotient-based framework of \S\ref{sec:PathMeasures}, irreversible information loss corresponds
to a many-to-one mapping between dynamically induced equivalence classes defined by statistically indistinguishable macroscopic trajectories under a chosen multiplexed coarse-graining. Such a collapse of equivalence classes
necessarily entails the destruction of physically realizable distinctions and therefore requires entropy
export to degrees of freedom not retained by the system. This establishes a general connection between
irreversibility and entropy production that is independent of any particular encoding scheme.

The Conservation-Congruent Encoding (CCE) conditions introduced in \S\ref{sec:EncodingPhysicalState}
identify a broad class of systems for which this entropy cost can be made explicit. Under CCE, abstract
equivalence classes admit metastable physical realizations, and irreversible information loss occurs when
control protocols deform the metastable basin structure so that previously distinct basins merge. During
such operations, the physical state no longer encodes which equivalence class the system originated from,
and restoring this information necessarily requires exporting entropy via the specific conserved quantities governing the dissipative flow to the environment
\cite{landauer1961irreversibility,parrondo2015thermodynamics}.

Encoding-level reversibility concerns whether the induced mapping between equivalence classes remains
bijective at the operational level of description. Conserved-quantity reversibility, by contrast, concerns
whether the underlying physical evolution preserves the relevant conserved measures and produces no net
entropy. Under the CCE assumptions, these notions coincide: reversible manipulation of encoded information must correspond to reversible evolution with respect to the conserved quantities of the environmental baths coupled to the operational channels
\cite{bennett1973logical,bennett1982thermodynamics}.

Consider an irreversible operation in which a set of equivalence classes
$\{\ell_i\}_{i=1}^{m}$ is merged into a single physically indistinguishable state. Under the CCE
assumptions, the minimal entropy export required to perform this operation is bounded below by the Shannon
entropy of the erased distribution,
\begin{align}
\label{eq:auto:0023}
S_{\min}^{\mathrm{CCE}}
= -\alpha \sum_{i=1}^{m} p_i \ln p_i,
\end{align}
where $p_i$ denotes the prior probability of class $\ell_i$, and $\alpha$ is the characteristic entropy
scale associated with the conserved quantity (e.g., $k_B$ for thermal channels or $\hbar$ for angular
momentum). For equiprobable inputs this bound reduces to $S_{\min}^{\mathrm{CCE}} = \alpha \ln m$,
recovering Landauer-type limits as a special case
\cite{landauer1961irreversibility,bennett1982thermodynamics}.

While the metrics developed in \S\ref{sec:InformationMetrics} fundamentally compare macroscopic goal-directed influence to abstract information-processing costs, it is often instructive to evaluate them in a common physical currency. When we seek to frame these measures explicitly as dimensionless physical efficiencies (e.g., Joules of structural work per Joule of computational cost), we require an energetic conversion. This is naturally provided by standard conjugacy relations: the irreversible export of a conserved quantity across a boundary occurs against an intensive conjugate force $\mathcal{F}$ fixed by the environment. Multiplying the scaled informational erasure by this conjugate force yields the minimal energetic cost of the computation, allowing for direct comparison against the macroscopic physical footprint \cite{callen1985thermodynamics}.

Under CCE, the Shannon term in $S_{\min}^{\mathrm{CCE}}$ is precisely the dimensionless irreversibility of
the induced many-to-one map on equivalence classes, while the prefactor sets the physical scale of the
associated export. The corresponding generalised energetic Landauer bound is
\begin{align}
E_{\min}^{\mathrm{CCE}}
  &= \mathcal{F}\,S_{\min}^{\mathrm{CCE}}
   = -\big(\mathcal{F}\alpha\big)\sum_{i=1}^{m} p_i \ln p_i .
\label{eq:auto:0024}
\end{align}
This makes explicit the bridge from abstract irreversibility to energetic cost: the trajectory-level
asymmetry measures of \S\ref{sec:PathMeasures} supply the dimensionless log-likelihood ratio (hence the
Shannon term), and CCE supplies the conversion factor $(\mathcal{F}\alpha)$ into Joules.

For thermal entropy export, $\mathcal{F}=T$ and $\alpha=k_B$, yielding the usual Landauer form
\begin{align}
\label{eq:auto:0025}
E_{\min}
  &= k_B T\,\Big(-\sum_{i=1}^{m} p_i \ln p_i\Big).
\end{align}
For an encoding stabilised by angular momentum, $\mathcal{F}=\omega$ and $\alpha=\hbar$, giving
\begin{align}
\label{eq:auto:0026}
E_{\min}
  &= \hbar\omega\,\Big(-\sum_{i=1}^{m} p_i \ln p_i\Big).
\end{align}
Other conserved channels admit analogous expressions with their corresponding conjugate forces.

Real agents typically flux multiple conserved quantities. If an irreversible operation exports entropy
through channels indexed by $a$, each with characteristic entropy scale $\alpha_a$ and conjugate force
$\mathcal{F}_a$, then the unified energetic lower bound is additive across channels,
\begin{align}
E_{\min}^{\mathrm{CCE}}
  &= \sum_a \mathcal{F}_a\,S_{\min,a}^{\mathrm{CCE}}
   = -\Big(\sum_a \mathcal{F}_a\alpha_a\Big)\sum_{i=1}^{m} p_i \ln p_i ,
\label{eq:auto:0027}
\end{align}
where the final form applies when the same erased distribution $\{p_i\}$ is implemented across channels
under CCE. Interpreting $\boldsymbol{\alpha}=(\alpha_a)_a$ and $\mathbf{F}=(\mathcal{F}_a)_a$, the
prefactor is the inner product $\mathbf{F}\cdot\boldsymbol{\alpha}$.

In cases where the chosen operational interface fails to align with a metastable encoding, the irreversible collapse of dynamically induced equivalence classes still necessitates physical export. Crucially, however, this lack of a clean encoding is an epistemic artifact of the measurement, not a physical breakdown. Because the chosen coarse-graining does not adequately track the relevant conserved quantities, the physical costs leak into unmonitored degrees of freedom. Consequently, while the fundamental conservation laws remain exact, the observed cost becomes highly distributed and evades a clean, closed-form expression at that specific level of description.

\subsection{System Boundaries and Structured Information Processing}
\label{sec:SystemBoundaries}

The information-processing framework developed in \S\ref{sec:InformationMetrics} characterises irreversible computation in terms of path-level distinguishability under a chosen multiplexed operational coarse-graining $\Phi$. Such quantities are well defined only once the physical limits of the system have been specified. In this section, we formalise the notion of a \emph{system boundary} in a manner consistent with the joint agent--environment dynamics of \S\ref{sec:AgentEnvironmentModel} and the path-space framework of \S\ref{sec:InformationMetrics}, treating both the boundary and the internal encodings as derived, scale-dependent objects rather than primitive assumptions.

Throughout, the base joint dynamics of the universe $(X, E)$ are assumed to be strictly microscopic and deterministic. The stochastic process $(X_t, E_t)$ evolving under a joint transition kernel, as introduced in \S\ref{sec:AgentEnvironmentModel}, strictly emerges as a mathematical consequence of marginalising over unobserved degrees of freedom. We now treat the choice of these boundary variables---alongside the choice of internal operational variables---as a rigorous joint search problem driven by the optimization of physical efficiencies.

We take $X$ to denote the candidate microscopic system variables. A fundamental requirement is that $X$ itself forms a \emph{dynamically connected system}. At a given multiplexed coarse-graining $\Phi : X \rightarrow Z_1 \times \dots \times Z_C$, this means that the internal degrees of freedom cannot be decomposed into a nontrivial partition whose components evolve independently while preserving the same path-level informational and thermodynamic structure. This requirement excludes purely abstract collections of variables and ensures that $X$ corresponds to a single cohesive physical system rather than a disjoint aggregate. 

Given a connected system $X$, defining the agent as an open thermodynamic entity requires specifying two distinct mappings: the physical boundary projections $(\Pi_X, \Pi_E)$ that mediate environmental interaction, and the internal multiplexed projection $\Phi$ that defines the operational state. These maps are not assumed \emph{a priori}. Instead, they are jointly selected so that the induced internal dynamics satisfy a strict set of criteria, ultimately serving to optimise the system's operational goals:

\begin{enumerate}[label=(\roman*)]
    \item \textbf{Dynamical Closure and Noise Generation:} Environmental influence on the internal evolution must enter only through the boundary variables. Because the boundary projections $\Pi_X: X \rightarrow \Xi_X$ and $\Pi_E: E \rightarrow \Xi_E$ map the deterministic microscopic reality down to lower-dimensional interface ports, they strictly define the unresolved degrees of freedom that act as the environmental bath. Variations in microscopic environmental degrees of freedom orthogonal to $\Pi_E(E_t)$ are marginalised, generating the stochastic noise floor of the induced transition kernel $K_X$. The boundary must be selectively permeable, constraining exactly which physical fluxes (and thus which conjugate forces) are allowed to perturb the system.
    
    \item \textbf{Encoding Preservation (Maximising the Numerator):} Beyond closure, admissibility requires the preservation of structured information processing against the boundary-induced noise. The internal operational projection $\Phi$ must map to a composite quotient space $Z$ that aligns with the system's Conservation-Congruent Encodings (CCE). The joint selection of the interface triplet $(\Pi_X, \Pi_E, \Phi)$ must preserve the multidimensional metastable basins required for meaningful prediction and control. Coarsening the boundary or the internal mapping too aggressively collapses these task-relevant macroscopic distinctions, directly reducing the macroscopic goal-directed work $W_{\mathrm{causal},T}$ that the system can perform.
    
    \item \textbf{Minimality (Minimising the Denominator):} Among closed and preserving architectures, the optimal interface $(\Pi_X, \Pi_E, \Phi)$ is selected by minimising unnecessary physical cost. Boundaries that expose superfluous internal variables ($\Pi_X$) or track irrelevant environmental fluctuations ($\Pi_E$) force the system to process and eventually erase non-predictive noise. This, along with internal encodings ($\Phi$) that track irrelevant physical channels, introduces unneeded physical dissipation. It inflates the system's irreversible computational cost, captured by the generalised multi-channel Landauer sum $E_{\min}^{\mathrm{CCE}} = \sum_c \mathcal{F}_c \alpha_c S_{\min}^{(c)}$, without improving prediction or control, thereby degrading its overall efficiency.
\end{enumerate}

Under this definition, identifying a macroscopic agent becomes a formal optimization problem. We treat the identification of the system and its computational states as a fundamental modelling decision: the most physically meaningful description of an agent is precisely the joint choice of boundary $(\Pi_X, \Pi_E)$ and internal multiplexed encoding $\Phi$ that simultaneously maximises its intelligence ($\chi$) and consciousness ($\kappa$) metrics. Once this optimal interface is identified, the stochastic port-based formulation of \S2.1 and the path-space thermodynamics of \S2.3 click together to provide a faithful, energetically bounded representation of the agent.

\subsection{From Communication to Intelligence}
\label{sec:CommVsIntel}

Shannon's communication theory established the fundamental limits on how much information can be transmitted across a physical channel subject to constraints such as noise and available signal power. Crucially, the theory treats communication as a process whose objective is faithful transmission rather than irreversible transformation. Shannon's bounds therefore quantify how efficiently a physical medium may transport information, not how efficiently a system may use information to guide its own evolution \cite{shannon1948communication}.

Irreversible information processing is physically distinct from communication. Whenever encodings are collapsed, a physical system must export entropy associated with a conserved quantity. This generalised form of Landauer’s insight establishes that information use has intrinsic physical cost \cite{landauer1961irreversibility,bennett1982thermodynamics}. The Conservation-Congruent Encoding (CCE) assumptions introduced in \S\ref{sec:EncodingPhysicalState} ensure that such costs are well defined across all substrates.

The present framework unifies these perspectives by relating communication to the irreversible use of information in performing goal-directed work. An intelligent system does not just receive or relay information; it irreversibly transforms information so that it constrains future behaviour and enables goal-directed physical work. Let $W_{\mathrm{causal}}$ denote the useful work performed on the environment (as derived via the spatiotemporal boundary value problem in \S\ref{sec:useful_work_causal_footprint}), and let $I_{\mathrm{irr}}$ denote the total irreversible information processed by the agent (bounded by the CCE limits in \S\ref{sec:IrreversibilityCost}). We define the \emph{intelligence} as
\begin{align}
\label{eq:auto:0028}
\chi = \frac{W_{\mathrm{causal}}}{I_{\mathrm{irr}}},
\end{align}
representing the amount of goal-directed work extracted per nat of irreversible information processing. This ratio is well defined whenever $I_{\mathrm{irr}} > 0$. In the limiting case $I_{\mathrm{irr}} = 0$, the agent performs no irreversible transformations, preserves all encodings, and cannot generate any persistent, time-asymmetric influence on its environment. Such a fully reversible process therefore cannot be assigned a meaningful finite intelligence under this metric. Systems with larger $\chi$ convert their physically costly informational transformations into useful work more effectively. This definition does not measure task success or reward accumulation directly, but rather the physical efficiency with which irreversible informational transformations are converted into work aligned with externally specified goals.

In time-dependent settings, intelligence admits a natural flux formulation. Let $\dot I_{\mathrm{irr}}(t)$ and $\dot W_{\mathrm{causal}}(t)$ denote the instantaneous rates of irreversible information processing and work, respectively. We define the \emph{instantaneous intelligence} as
\begin{align}
\label{eq:auto:0029}
\chi(t) = \frac{\dot W_{\mathrm{causal}}(t)}{\dot I_{\mathrm{irr}}(t)},
\end{align}
which characterises the local efficiency with which irreversible information processing is converted into useful work at time $t$. The physically meaningful measure of intelligence over a finite horizon $T$ is the ratio of integrated fluxes,
\begin{align}
\label{eq:auto:0030}
\chi_T = \frac{\int_0^T \dot W_{\mathrm{causal}}(t)\,dt}{\int_0^T \dot I_{\mathrm{irr}}(t)\,dt},
\end{align}
which quantifies the total amount of goal-directed work extracted per nat of irreversibly processed information. Note that $\chi_T$ is defined as a ratio of integrated fluxes and does not, in general, equal the time integral of $\chi(t)$. This reflects the fact that intelligence is not an instantaneous property of a system, but an emergent efficiency defined over irreversible transformations accumulated across time.

In this sense, intelligence extends Shannon’s program; it concerns not only how efficiently information can be communicated across a channel, but how efficiently it can be irreversibly transformed within a system to produce physically meaningful work.

\section{Quantitative Measures of Intelligence}
\label{sec:QuantitativeMetrics}

Having established the coupled agent--environment dynamics, the representation of goals, and the notion of
admissible system boundaries, we now formalise the core quantitative measures of the present framework.
These measures characterise intelligence as a physical efficiency arising from irreversible and preserved
information processing, rather than as a behavioural or algorithmic property.

We introduce two complementary quantities. \emph{Intelligence} quantifies how efficiently irreversible
information processing is converted into goal-directed physical work. \emph{Consciousness} quantifies how
efficiently preserved, structure-maintaining information supports the same process, capturing the role of
persistent internal organization in enabling sustained, long-horizon behaviour. Together, these quantities
separate the contributions of irreversible intervention and reversible structural reuse in physically embodied
intelligent systems.

Both measures are defined directly in terms of physically measurable quantities---goal-directed work and
irreversibly or reversibly processed information---and are evaluated relative to a chosen admissible system
boundary and operational coarse-graining, as defined in \S\ref{sec:SystemBoundaries}.

\subsection{Intelligence}
\label{sec:Intelligence}

Given the macroscopic causal footprint developed in \S\ref{sec:useful_work_causal_footprint}, intelligence is primarily evaluated with respect to the expected causal work $W_{\mathrm{causal},T}$ performed on the environment. Over a finite horizon $T$, this is quantified by the expected net extraction of structural capacity over the interval:
\begin{align}
\label{eq:auto:0031}
W_{\mathrm{causal},T} = \mathbb{E}_{\mu_{0:T}}[W_{\mathrm{port},T} - \Delta A_{\mathrm{env}}^{\mathrm{causal}}]
\end{align}
Let $I_{\mathrm{irr},T}$ denote the total amount of information irreversibly processed by the agent over the same horizon:
\begin{align}
\label{eq:auto:0032}
I_{\mathrm{irr},T} = \int_0^T \dot{I}_{\mathrm{irr}}(t)\,dt
\end{align}
where $\dot{I}_{\mathrm{irr}}(t)$ is the instantaneous irreversible information rate defined in \S\ref{sec:InformationMetrics}. The \emph{cumulative intelligence} over $[0,T]$ is then defined as:
\begin{align}
\label{eq:auto:0033}
\chi_T = \frac{W_{\mathrm{causal},T}}{I_{\mathrm{irr},T}}
\end{align}
which quantifies the amount of causal structural work extracted per nat of irreversibly processed information. This ratio is well defined whenever $I_{\mathrm{irr},T} > 0$ and characterises the fundamental physical efficiency with which irreversible computation is expressed as macroscopic causal influence, independent of the absolute timescale of the underlying dynamics.

For analyses conducted at the level of instantaneous fluxes, the causal work rate is defined via the expected instantaneous power:
\begin{align}
\label{eq:auto:0034}
\dot{W}_{\mathrm{causal}}(t) := \mathbb{E}_{\mu_t}[\dot{W}_{\mathrm{port}}(t) - \dot{A}_{\mathrm{env}}^{\mathrm{causal}}(t)]
\end{align}
The corresponding instantaneous intelligence is:
\begin{align}
\label{eq:auto:0035}
\chi(t) = \frac{\dot{W}_{\mathrm{causal}}(t)}{\dot{I}_{\mathrm{irr}}(t)}
\end{align}
which serves as a local diagnostic of how irreversible information processing is expressed as causal influence at a given moment in time.

While raw causal intelligence $\chi_T$ measures the absolute scale of an agent's structural influence, evaluating intelligence in the context of a specific task requires isolating the goal-aligned component of this footprint. To account for this, a projection operator $\pi_{\mathcal G}$ that extracts the goal-relevant component of the interface-level footprint can be leveraged. As introduced in \S\ref{sec:useful_work_causal_footprint}, we define the expected goal-directed work as:
\begin{align}
\label{eq:auto:0036}
W_{\mathrm{goal},T} := \mathbb{E}_{\mu_{0:T}}\!\left[\pi_{\mathcal G}\!\left(W_{\mathrm{port},T}-\Delta A_{\mathrm{env}}^{\mathrm{causal}}\right)\right]
\end{align}
This yields a complementary \emph{goal-directed intelligence} metric:
\begin{align}
\label{eq:auto:0037}
\chi_{\mathrm{goal},T} = \frac{W_{\mathrm{goal},T}}{I_{\mathrm{irr},T}}
\end{align}
This subsidiary metric characterises how efficiently a system converts irreversible computation specifically into work aligned with externally specified goals, isolating functional utility from undirected macroscopic disruption. The definition of intelligence is agnostic to the specific control, inference, or decision mechanisms employed by the agent. It depends only on the physical evolution of the joint measure $\mu_t$, the resulting boundary fluxes, and the irreversible information transformations generated by that evolution.

\subsection{Consciousness}
\label{sec:ConsciousnessMetric}

The intelligence metric $\chi_T$ in \S\ref{sec:Intelligence} quantifies how efficiently \emph{irreversible} information processing is converted into causal work. A complementary quantity characterises the extent to which this macroscopic footprint is supported by \emph{preserved} internal informational structure. We refer to this quantity as a \emph{consciousness} metric in the present framework.

Consciousness, as defined here, is an operational and physical quantity rather than a phenomenological one. It measures the contribution of internal informational distinctions that persist over time and continue to constrain future behaviour without requiring repeated irreversible reconstruction. Such preserved structure may include stable encodings (maintained as CCE metastable basins), memories, predictive internal models, or invariant dynamical organization. Whereas intelligence measures the efficiency of distinction-destroying computation, consciousness measures the efficiency with which \emph{preserved distinctions} support causal influence over extended horizons.

Let $\dot{I}_{\mathrm{rev}}(t)$ denote the instantaneous rate of \emph{reversible information processing} within the system, defined as the rate at which internal distinctions are transformed while remaining recoverable without additional irreversible cost. Equivalently, $\dot{I}_{\mathrm{rev}}(t)$ quantifies information processing that preserves membership in dynamically induced equivalence classes under the chosen admissible system boundary and operational coarse-graining. By contrast, $\dot{I}_{\mathrm{irr}}(t)$ accounts for irreversible collapse of such distinctions.

Over a finite horizon $T$, define the cumulative preserved (reversible) information as:

\begin{align}
\label{eq:auto:0038}
I_{\mathrm{rev},T} = \int_0^T \dot{I}_{\mathrm{rev}}(t)\,dt
\end{align}

We then define the \emph{cumulative consciousness} over $[0,T]$ as:
\begin{align}
\label{eq:auto:0039}
\kappa_T = \frac{W_{\mathrm{causal},T}}{I_{\mathrm{rev},T}}
\end{align}
which quantifies the amount of causal structural work extracted per nat of preserved information. This ratio is well defined whenever $I_{\mathrm{rev},T} > 0$. An instantaneous analogue, utilising the power fluxes defined in \S\ref{sec:Intelligence}, may likewise be defined as:

\begin{align}
\label{eq:auto:0040}
\kappa(t) = \frac{\dot{W}_{\mathrm{causal}}(t)}{\dot{I}_{\mathrm{rev}}(t)}
\end{align}

serving as a local diagnostic of how preserved internal structure contributes to the extraction of causal structural capacity at time $t$.

Just as we isolated the goal-aligned component of intelligence, we can project the consciousness metric onto a task-specific subspace to evaluate functional utility. Utilizing the expected goal-directed work $W_{\mathrm{goal},T}$, we define the \emph{goal-directed consciousness} as:

\begin{align}
\label{eq:auto:0041}
\kappa_{\mathrm{goal},T} = \frac{W_{\mathrm{goal},T}}{I_{\mathrm{rev},T}}
\end{align}

and its corresponding instantaneous rate:

\begin{align}
\label{eq:auto:0042}
\kappa_{\mathrm{goal}}(t) = \frac{\dot{W}_{\mathrm{goal}}(t)}{\dot{I}_{\mathrm{rev}}(t)}
\end{align}

This evaluates how effectively preserved internal structure specifically supports work aligned with external operational goals, distinct from general environmental disruption.

The pair $(\chi_T,\kappa_T)$ separates two complementary informational contributions to physical behaviour. Intelligence $\chi_T$ captures the efficiency of \emph{irreversible}, distinction-destroying computation, while consciousness $\kappa_T$ captures the efficiency with which \emph{preserved internal structure} supports the same macroscopic footprint. In systems capable of sustained, long-horizon behaviour, high $\kappa_T$ indicates that substantial causal influence is enabled by reuse of internal informational organization rather than by continual irreversible reconstruction. In this sense, consciousness quantifies the physical role of persistent informational structure in enabling intelligence over extended horizons.

\section{The Microscopic Limits of Observation: Deriving Quantum Decoherence}
\label{sec:QuantumMeasurement}

To examine the substrate neutrality of the Conservation-Congruent Encoding (CCE) framework, we apply it to a canonical limit of physical observation: the quantum mechanical double-slit experiment. In standard quantum theory, the collapse of the wavefunction upon measurement is often introduced as an abstract mathematical postulate. However, viewed through the lens of our framework, the destruction of the interference pattern can be modelled as a consequence of the energetic constraints required to establish a CCE, as derived in \S\ref{sec:EncodingPhysicalState}.

Extracting a classical bit of information from a quantum system requires a macroscopic detector to dynamically instantiate a metastable internal memory state. Under CCE, carving out this dynamically isolated equivalence class requires the irreversible export of a conserved quantity. This framework thus links the abstract concept of quantum ``observation'' directly to the physical dissipation required to record information.

\subsection{The Unobserved System: Pure Reversible Flow}
When a particle is fired at the double slits and unobserved, it exists in a coherent spatial superposition. In the language of the present framework, the particle's state evolves entirely within the reversible, non-dissipative subspace of the universe's dynamics. Because no macroscopic distinctions are recorded by the environment, the physical state does not collapse into dynamically isolated equivalence classes.

In this regime, the dynamics are entirely structure-preserving. Within the metriplectic decomposition introduced earlier, the generalised entropy gradient vanishes ($\nabla\Xi = 0$). The evolution of the system's density matrix $\rho_t$ is driven solely by the skew-symmetric Poisson structure $J(\rho_t)$, generating the standard unitary von Neumann equation:
\begin{align}
\label{eq:auto:0043}
\dot{\rho}_t = J(\rho_t)\nabla H(\rho_t) = -\frac{i}{\hbar}[\hat{H}, \rho_t],
\end{align}
where $\hat{H}$ is the Hamiltonian operator. This reversible flow preserves all off-diagonal phase coherences, yielding the classic wave interference pattern on the detector screen.

\subsubsection{The Detector and the Formation of a CCE}

The physical situation changes fundamentally when a macroscopic detector is placed at the slits to determine which path the particle took. To operate as a continuous measuring device, the detector cannot simply entangle with the particle once and freeze; it must function as an open macroscopic system, continuously tracking the particle's state while maintaining a finite memory capacity. Within our framework, this means the detector must actively instantiate, and subsequently erase and reset, Conservation-Congruent Encodings (CCEs). It must drive its own internal state into one of two metastable macroscopic basins (representing the logical bit $L \in \{\text{Slit A}, \text{Slit B}\}$), and then clear that basin to register the next interaction.

While the initial entanglement between the particle and the detector's pointer state can be reversible (unitary), the requirement to continuously reset the detector's operational interface to maintain a steady measurement rate is strictly irreversible. To clear and stably reset $1$ bit (or $\ln 2$ nats) of ``which-path'' information, the detector must export a conserved quantity to the environmental bath. For a typical quantum-optical or mechanical channel, where the fundamental scale is the reduced Planck constant ($\alpha = \hbar$) and the conjugate force driving the reset is the characteristic measurement frequency ($\mathcal{F} = \omega_{\mathrm{meas}}$), the minimum physical cost required to irreversibly erase this specific CCE basin is defined by our generalised Landauer relation:
\begin{align}
\label{eq:auto:0044}
E_{\min}^{\mathrm{CCE}} = \hbar \omega_{\mathrm{meas}} \ln 2.
\end{align}

Observation is therefore not a passive mathematical projection. Sustaining a macroscopic non-demolition measurement record requires continuous, irreversible physical erasure of a CCE. The physical action dissipated to pay this reset cost can be interpreted as the environmental exhaust that suppresses the reversible phase geometry of the quantum superposition.

\subsubsection{Recovering the Lindblad Master Equation}

\begin{figure}[h]
    \centering
    \includegraphics[width=0.8\textwidth]{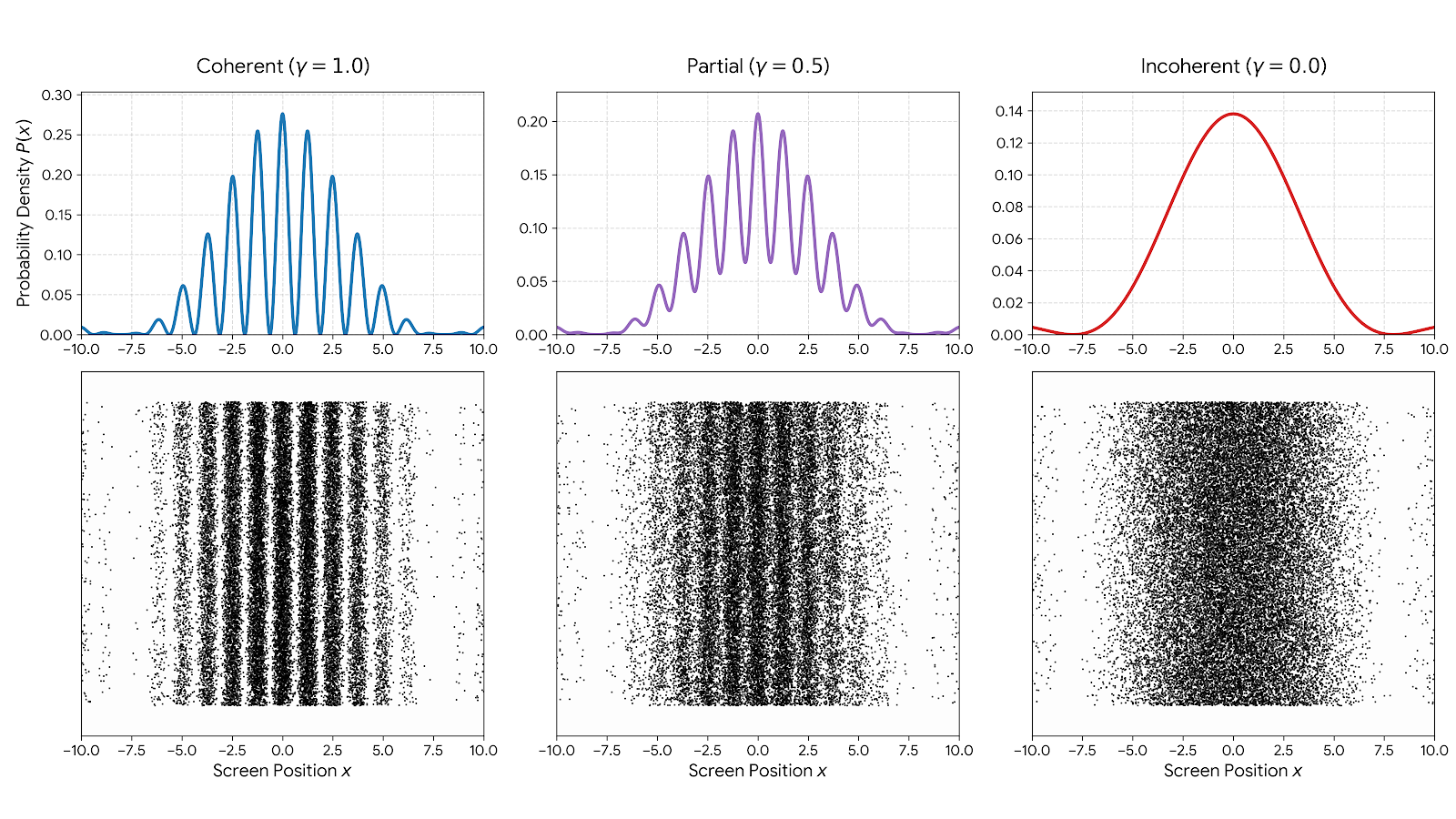}
    \caption{\small Solutions of the double-slit experiment in our CCE setup. A macroscopic which-path detector instantiates and then irreversibly erases a CCE over $L\in\{\text{Slit A},\text{Slit B}\}$, incurring the generalised Landauer reset cost. In this formulation the resulting decoherence rate $\gamma$ is fixed by the detector's CCE reset power and is not a free parameter.}
    \label{fig:example_image}
    
\end{figure}

This observer effect is captured naturally by the full metriplectic equation. In standard quantum mechanics, the state of the system is given by the density matrix $\rho_t$, and the kinematics are governed by a free-energy functional $F = H - \Xi$, where $H$ is the Hamiltonian and $\Xi$ is the generalised entropy potential associated with the coherence of the state. The CCE equation of motion is defined by the competing reversible and irreversible gradients:

\begin{align}
\label{eq:auto:0045}
\dot{\rho}_t = J(\rho_t)\nabla H(\rho_t) - \mathcal{R}(\rho_t)\nabla\Xi(\rho_t).
\end{align}
The reversible flow is generated by the skew-symmetric Poisson tensor $J(\rho_t)$. Mapping this to the Hilbert-Schmidt inner product $\langle A, B \rangle = \mathrm{Tr}(A^\dagger B)$, the geometric action of $J$ on the Hamiltonian gradient ($\nabla H = \hat{H}$) yields the standard von Neumann commutator:

\begin{align}
\label{eq:auto:0046}
J(\rho_t)\nabla H(\rho_t) = -\frac{i}{\hbar}[\hat{H}, \rho_t],
\end{align}
which exactly describes the unobserved, unitary evolution of the wave superposition. When the detector is active, the physical requirement to continuously form and reset Conservation-Congruent Encodings activates the dissipative metric tensor $\mathcal{R}(\rho_t)$. To extract ``which-path'' information without absorbing the particle's kinetic energy (a non-demolition measurement), the macroscopic detector couples to the spatial observable of the particle, represented by a Hermitian projector $\hat{A}$. To ensure that the resulting irreversible flow represents a memoryless, Completely Positive, Trace-Preserving (CPTP) Markovian process, we invoke the Born-Markov approximation. This requires a strict separation of timescales: the macroscopic detector's reset cycle—the time required to irreversibly erase the CCE and dump the resulting exhaust into the thermal bath—must be orders of magnitude shorter than the characteristic coherence time of the observed quantum system ($\tau_{\mathrm{reset}} \ll \tau_{\mathrm{sys}}$). Under this condition, the environment retains no memory of the interaction. By the Gorini-Kossakowski-Sudarshan-Lindblad (GKSL) theorem, any such Markovian CPTP evolution must geometrically take a double-commutator form. Therefore, to correctly map this physical constraint into the metriplectic framework, the metric superoperator $\mathcal{R}(\rho_t)$ must be defined by its geometric action on a generic gradient $X$:

\begin{align}
\label{eq:auto:0047}
\mathcal{R}(\rho_t) \circ X = -\frac{\gamma}{2\lambda} \big[ \hat{A}, [\hat{A}, X] \big],
\end{align}
where $\gamma$ is the measurement coupling rate, and $\lambda$ is a scaling constant. Under continuous observation, the driving force pulling the quantum system toward a classical mixture is the physical penalty on off-diagonal coherence. It is crucial here to distinguish the thermodynamic fuel of the computation from its kinematic trajectory. While the absolute minimum generalised Landauer cost required to reset the CCE is bounded by the von Neumann entropy (as established in \S 2.3.3), the geometric flow of decoherence through state space is driven dynamically by the decay of state purity. Therefore, we define the generalised metric potential driving this kinematic relaxation using the Tsallis linear entropy (purity), $\Xi(\rho_t) = -\frac{\lambda}{2}\mathrm{Tr}(\rho_t^2)$. The corresponding Fréchet derivative yields the generalised entropy gradient:

\begin{align}
\label{eq:auto:0048}
\nabla\Xi(\rho_t) = -\lambda \rho_t.
\end{align}
Contracting the metric tensor with this specific informational gradient directly yields the irreversible flow. The scaling constant $\lambda$ cancels, yielding the CPTP dissipator:

\begin{align}
\label{eq:auto:0049}
- \mathcal{R}(\rho_t) \circ \nabla\Xi(\rho_t) = -\frac{\gamma}{2} \big[ \hat{A}, [\hat{A}, \rho_t] \big].
\end{align}
Combining the reversible and irreversible flows back into the CCE equation of motion seamlessly embeds the Lindblad Master Equation:

\begin{align}
\label{eq:auto:0050}
\dot{\rho}_t = -\frac{i}{\hbar}[\hat{H}, \rho_t] - \frac{\gamma}{2} \big[ \hat{A}, [\hat{A}, \rho_t] \big].
\end{align}
Here lies the central result of this mapping: within standard quantum mechanics, the decoherence rate $\gamma$ is introduced as an empirical fitting parameter. Crucially, under the CCE framework, it is strictly bound by physical exhaust. For the detector to continuously resolve the particle's path, it must sustain a continuous rate of CCE erasure. From the generalised thermodynamic bounds established in \S\ref{sec:IrreversibilityCost}, the continuous physical power $\mathcal{P}_{\mathrm{det}}$ exhausted by the detector to drive this macroscopic reset is strictly bounded by the CCE energetic relation:

\begin{align}
\label{eq:auto:0051}
\mathcal{P}_{\mathrm{det}}(t) \ge \mathcal{F} \alpha \dot{I}_{\mathrm{irr}}(t),
\end{align}
where $\alpha$ is the characteristic fundamental scale of the channel and $\mathcal{F}$ is the intensive conjugate force. Because the double-commutator dissipator governs the exponential decay of the off-diagonal interference terms, the temporal rate of this phase destruction ($\gamma$) must perfectly match the rate at which the detector physically clears its classical bits. Therefore, assuming $\hat{A}$ is normalised to the slit paths, the coupling rate is explicitly defined by the detector's generalised thermodynamic exhaust:

\begin{align}
\label{eq:auto:0052}
\gamma(t) = c \frac{\mathcal{P}_{\mathrm{det}}(t)}{\mathcal{F} \alpha},
\end{align}
where $c$ is a dimensionless proportionality constant relating the detector's physical reset efficiency to the quantum phase-damping rate. For a detector multiplexing across multiple physical channels, the denominator naturally generalises to the vector inner product $\mathbf{F} \cdot \boldsymbol{\alpha}$ derived previously. To explicitly demonstrate how this physical exhaust destroys the quantum interference, we compare the formal solutions of the density matrix. Let the particle's spatial state be defined in the basis of the two paths, $|0\rangle$ and $|1\rangle$.  The measurement operator distinguishing the paths is $\hat{A} = |0\rangle\langle 0| - |1\rangle\langle 1|$. The action of the double-commutator on the off-diagonal coherence term $|0\rangle\langle 1|$ evaluates strictly to:
\begin{align}
\label{eq:auto:0053}
\big[\hat{A}, [\hat{A}, |0\rangle\langle 1|]\big] = 4|0\rangle\langle 1|.
\end{align}
The differential equation for the off-diagonal elements under the full metriplectic flow thus becomes:
\begin{align}
\label{eq:auto:0054}
\dot{\rho}_{01}(t) = -i\omega_{01}\rho_{01}(t) - 2\gamma(t)\rho_{01}(t).
\end{align}
Integrating this equation yields the formal solution for the coherence term at the time of screen impact $T$:
\begin{align}
\label{eq:auto:0055}
\rho_{01}(T) = \rho_{01}(0) \exp\left(-i\int_0^T \omega_{01} dt\right) \exp\left(-2\int_0^T \gamma(t) dt\right).
\end{align}
Substituting the strict physical constraint that the coupling rate is driven by the generalised dissipative exhaust ($\gamma(t) = c \frac{\mathcal{P}_{\mathrm{det}}(t)}{\mathcal{F} \alpha}$), the total decay of the interference term is governed directly by the cumulative physical energy dissipated by the detector ($E_{\mathrm{det},T}$):

\begin{align}
\label{eq:auto:0056}
\exp\left(-2c \int_0^T \frac{\mathcal{P}_{\mathrm{det}}(t)}{\mathcal{F} \alpha} dt\right) = \exp\left(-2c \frac{E_{\mathrm{det},T}}{\mathcal{F} \alpha}\right) = \exp\left(-2c I_{\mathrm{irr},T}\right).
\end{align}
The resulting spatial probability distribution on the screen under observation is therefore:
\begin{align}
\label{eq:auto:0057}
P_{\mathrm{CCE}}(x) = \langle x | \rho_T | x \rangle = \frac{1}{2}|\psi_0(x)|^2 + \frac{1}{2}|\psi_1(x)|^2 + e^{-2c I_{\mathrm{irr},T}} \mathrm{Re}\left(\psi_0(x)\psi_1^*(x)\right).
\end{align}
This formal solution makes the mechanism of measurement mathematically explicit. The visibility of the quantum interference fringe is strictly bounded by $e^{-2c I_{\mathrm{irr},T}}$. If the macroscopic detector performs zero irreversible erasures ($I_{\mathrm{irr},T} = 0$), the exponential term evaluates to $1$, and the interference pattern is perfectly preserved. If the detector completes a sufficient number of erasures to establish a perfectly distinguishable and continuous CCE record, the exponential suppression physically drives the cross-term to zero, yielding the classical addition of independent probabilities. The Lindblad dissipator is thus recovered as a mathematically consistent representation of the physical power required to reset an open system's boundary, rather than as an ad-hoc term used only to model experimental fuzziness. In this interpretation, the disappearance of the quantum interference pattern can be modelled as arising from the continuous flow of irreversible exhaust ($\dot{I}_{\mathrm{irr}}$), which provides the dissipative friction ($\gamma$) that breaks the skew-symmetric flow required for superposition.

\section{The Macroscopic Limits of Observation: Emergent Spacetime and Epistemic Collapse}
\label{sec:GravitationalLimits}
In \S\ref{sec:QuantumMeasurement}, we established that the extraction of classical information from a quantum superposition necessitates the export of a conserved quantity, mathematically recovering the Lindblad Master Equation. We now extend our analysis from microscopic measurement to macroscopic by analysing the measurement of black holes. If microscopic dynamics are fundamentally reversible and volume-preserving, one interpretive possibility is that gravity is not a fundamental microscopic mechanism. Within the Conservation-Congruent Encoding (CCE) framework, we therefore explore the hypothesis that gravity emerges at the macroscopic level as the geometric footprint of the universe's informational constraints. On this reading, gravity is treated as a macroscopic dynamical property related to the physical capacity of localised systems to measure, record, and erase information. This perspective aligns with and extends the foundational thermodynamic and entropic gravity programs pioneered by Jacobson \cite{Jacobson1995} and Verlinde \cite{Verlinde2011}. However, while standard theoretical treatments often rely on phenomenological concepts of heat and thermal baths, the CCE framework generalises this relationship. We show that, within this interpretive framework, thermodynamic bounds on spacetime are consistent with substrate-neutral information processing limits. Instantiating a CCE to record a macroscopic state requires a generalised dissipation flux that can backreact on the metric. To ground this hypothesis, we mathematically bridge the microscopic informational friction of the metriplectic flow with the macroscopic curvature of spacetime ($g_{\mu\nu}$).

\subsection{Informational Limits and the Bekenstein-Hawking Area Law}
\label{sec:emergence_bh}

We must first resolve a fundamental geometric discrepancy: the absolute microscopic dynamics of a closed universe are phase-space volume preserving, yet macroscopic gravitational information is bounded strictly by the area of its bounding causal horizons.
Consider the exact, un-coarse-grained microscopic state of a closed system. Its temporal evolution is governed entirely by the reversible, Hamiltonian vector field $v_{\mathrm{rev}} = J\nabla H$. The evolution of an arbitrary microscopic phase-space volume $\mathcal{V}_{\mathrm{micro}}$ is dictated by the Lie derivative of the volume form $\Omega$, determined by the flow's divergence:

\begin{align}
 \label{eq:auto:0058}
\mathcal{L}_{v_{\mathrm{rev}}} \Omega = (\nabla \cdot v_{\mathrm{rev}}) \Omega 
\end{align}

By Liouville's Theorem, the divergence of this skew-symmetric Hamiltonian flow vanishes identically ($\nabla \cdot (J\nabla H) = 0$). Consequently, a universe described purely at the exact microscopic limit possesses no mechanism to constrain information density to an area; its raw structural capacity scales volumetrically. The Bekenstein-Hawking geometric area law ($I_{\mathrm{max}} \propto A$) must therefore emerge strictly as a macroscopic phenomenon, originating from the irreversible metric tensor $\mathcal{R}$.
To understand why macroscopic measurement mathematically necessitates the activation of this tensor, we must view observation as an inherently coarse-graining operation. Recording a unit of information related to a macroscopic property (instantiating a CCE) requires mapping multiple distinct microscopic initial states into a single dynamically isolated equivalence class. Because the exact microscopic flow $v_{\mathrm{rev}}$ is strictly bijective, it is geometrically impossible to form this many-to-one macroscopic basin under pure reversible dynamics. Instead, the macroscopic detector must trace over unobserved environmental degrees of freedom. This coarse-graining induces an effective, open-system mesoscopic state $x_t$ that evolves under the full metriplectic decomposition:
\begin{align}
 \label{eq:auto:0059}
\dot{x}_t = v_{\mathrm{rev}}(x_t) + v_{\mathrm{irr}}(x_t) = J(x_t)\nabla H(x_t) - \mathcal{R}(x_t)\nabla \Xi(x_t) 
\end{align}
Here, the contractive vector field required to merge trajectories and form the measurement record is uniquely provided by the dissipative flow $v_{\mathrm{irr}} = -\mathcal{R}\nabla\Xi$. Because $\mathcal{R}$ is positive semi-definite and $\Xi$ acts as a convex entropy potential near these equilibria, the divergence of this irreversible measurement flow is strictly negative:
\begin{align}
 \label{eq:auto:0060}
\nabla \cdot v_{\mathrm{irr}} = -\nabla \cdot (\mathcal{R}\nabla \Xi) \equiv -\Lambda(x) < 0 
\end{align}
Integrating this divergence over the macroscopic operational phase-space volume yields a strict exponential geometric collapse of the effective state space:
\begin{align}
 \label{eq:auto:0061}
\frac{d\mathcal{V}_{\mathrm{macro}}}{dt} = -\int_{\mathcal{V}_{\mathrm{macro}}} \Lambda(x) \Omega \implies \mathcal{V}_{\mathrm{macro}}(t) \le \mathcal{V}_{\mathrm{macro}}(0) e^{-\bar{\Lambda} t} 
\end{align}
where $\bar{\Lambda} = \inf_{x \in \mathcal{V}_{\mathrm{macro}}} \Lambda(x) > 0$.
This dissipative contraction dynamically foliates the state space, projecting the continuous mesoscopic phase-space volume onto a lower-dimensional attractor. Crucially, this abstract phase-space collapse rigorously constrains physical spatial geometry. In a local field theory, microscopic degrees of freedom scale extensively with the 3D physical volume. However, macroscopic observation across a causal horizon requires continuously tracing out the unobservable internal bulk. Because General Relativity is fundamentally a diffeomorphism-invariant gauge theory, the physical bulk Hamiltonian is composed entirely of constraints that vanish on-shell. Consequently, when the interior microstates are integrated out, the surviving measurable energy and informational capacity of the subregion mathematically reduce purely to a boundary integral (analogous to the ADM or Brown-York quasi-local mass). The physical capacity to store dynamically distinct CCE basins can no longer scale volumetrically; it is strictly projected onto, and bounded by, the discrete geometric elements of the 2D causal horizon. But why is the minimal geometric extent of a CCE basin exactly proportional to the Planck area ($l_p^2 = \frac{G\hbar}{c^3}$)? Within this framework, this is not an arbitrary quantum gravity postulate, but a strict thermodynamic derivation.
To irreversibly record a macroscopic distinction—effectively tracing out a microscopic degree of freedom as it crosses the causal horizon—the exterior observer must pay the generalised Landauer cost, $\delta Q = k_B T_U \delta I_{\mathrm{irr}}$. For a localised macroscopic observer, the vacuum establishes the ambient thermal state at the Unruh temperature, $T_U = \frac{\hbar\kappa}{2\pi k_B c}$, where $\kappa$ is the local surface gravity. The minimal physical heat transferred to erase $\delta I_{\mathrm{irr}}$ nats of microscopic information is therefore:
\begin{align}
 \label{eq:auto:0062}
\delta Q = \frac{\hbar\kappa}{2\pi c} \delta I_{\mathrm{irr}} 
\end{align}
Simultaneously, the classical First Law of Black Hole Mechanics states that the physical energy required to geometrically deform a causal horizon by an area $\delta A$ is $dE = \frac{\kappa c^2}{8\pi G} \delta A$. 

Because the macroscopic horizon acts as the physical causal interface that must absorb this irreversible measurement exhaust to instantiate the coarse-grained state, we equate the Landauer energy cost to the geometric deformation ($\delta Q = dE$):

\begin{align}
 \label{eq:auto:0063}
\frac{\hbar\kappa}{2\pi c} \delta I_{\mathrm{irr}} = \frac{\kappa c^2}{8\pi G} \delta A 
\end{align}

The local kinematic variable $\kappa$ perfectly cancels from both sides, revealing a universal, observer-independent geometric conversion rate for information. Rearranging for the area required to store one unit of irreversibly coarse-grained information ($\delta I_{\mathrm{irr}}$) yields:
\begin{align}
 \label{eq:auto:0064}
\delta A = \left(\frac{4G\hbar}{c^3}\right) \delta I_{\mathrm{irr}} = 4 l_p^2 \delta I_{\mathrm{irr}} 
\end{align}
The Planck area ($l_p^2$) is therefore the exact geometric equivalent of the generalised Landauer cost in a gravitational channel. The total macroscopic informational capacity of the local vacuum ($I_{\mathrm{max}}$) is simply the total boundary area divided by this fundamental CCE basin size, directly recovering the exact Bekenstein-Hawking entropy relation:
\begin{align}
 \label{eq:auto:0065}
S_{BH} = k_B I_{\mathrm{max}} = k_B \frac{A}{4 l_p^2} = \frac{k_B c^3 A}{4G\hbar} 
\end{align}
Because this derivation mathematically establishes the localised heat-to-area relation ($\delta Q \propto \delta A$) purely from fundamental informational erasure limits, it provides a candidate microscopic mechanism for Jacobson's thermodynamic formulation of gravity. Within the CCE framework, the Einstein Field Equations can be interpreted as macroscopic thermodynamic equations of state governing this $\mathcal{R}$-driven informational channel. Crucially, this bounding horizon is not a static mathematical wall; it is a dynamically maintained operational interface. To sustain these macroscopic equivalence classes against the continuous entanglement of infalling quantum vacuum fluctuations, the horizon operates via the continuous action of the macroscopic dissipative tensor $\mathcal{R}$.
However, by the Fluctuation-Dissipation Theorem, any system exhibiting continuous macroscopic dissipation is not expected to remain a purely absorptive sink; the irreversible tensor $\mathcal{R}$ is accompanied by a conjugate stochastic noise term. While the horizon continuously absorbs the Landauer exhaust of coarse-grained infalling modes, this dynamic dissipation is coupled to microscopic thermal emission. Viewed through this lens, Hawking radiation aligns naturally with the stochastic fluctuation counterpart to $\mathcal{R}$-driven dissipation. Furthermore, because the asymptotic vacuum of our universe acts as a lower-temperature environment, this continuous thermal emission is consistent with a net outward flux. Since this shed energy is drawn from the structural mass of the spacetime itself, it is likewise consistent with net black hole evaporation. Holographic bounds and black hole radiance are thus interpreted here as geometric and non-equilibrium thermodynamic consequences of measurement-associated phase-space contraction.

\subsection{Outlining a Recovery of the Einstein Field Equations}

To rigorously map the discrete, scalar thermodynamic relationship of \S\ref{sec:emergence_bh} into the continuous tensor geometry of a spacetime manifold, we must formalise the connection between the abstract metriplectic dissipation of the agent and the macroscopic stress-energy tensor, $T_{\mu\nu}$. 

In the CCE framework, the irreversible formation of an equivalence class is driven by the dissipative flow $v_{\mathrm{irr}} = -\mathcal{R}\nabla\Xi$. The local rate of generalised physical dissipation (entropy production) is governed exactly by the inner product of the generalised entropy gradient and this flow:

\begin{align}
\label{eq:auto:0066}
\dot{S}_{\mathrm{phys}}(x,t) = \langle \nabla\Xi(x,t), \mathcal{R}(x,t) \nabla\Xi(x,t) \rangle
\end{align}

When coarse-grained over a macroscopic spatial volume, this continuous irreversible information processing manifests physically as a localised power density of measurement exhaust, $\mathcal{P}_{exh}$. Scaling this by the channel's conjugate force $\mathcal{F}$ and characteristic scale $\alpha$, we first define an effective mass-equivalent exhaust tensor $\widetilde{T}_{\mu\nu}$. The canonical stress-energy tensor used in Einstein's equations is then recovered by $T_{\mu\nu} = c^2 \widetilde{T}_{\mu\nu}$. Thus, $T_{\mu\nu}$ appears only after restoring the mass--energy conversion factor.

Therefore, the flux of energy crossing a local null horizon generated by the vector field $k^\mu$ is strictly the spacetime projection of the observer's continuous coarse-graining:

\begin{align}
\label{eq:auto:0067}
\widetilde{T}_{\mu\nu} k^\mu k^\nu \propto \mathcal{F}\alpha \langle \nabla\Xi, \mathcal{R} \nabla\Xi \rangle
\end{align}

This formal equivalence allows us to build upon the thermodynamic formulation of spacetime pioneered by Ted Jacobson, adapting his framework to accommodate continuous informational dissipation as the strict physical source term.

In \S\ref{sec:emergence_bh}, we established that localised generalised Landauer exhaust directly deforms the causal horizon according to the scalar energy relation $\delta Q = \frac{\kappa c^2}{8\pi G} \delta A$. We now map this discrete thermodynamic statement into the continuous tensor geometry of a spacetime manifold, building upon \cite{Jacobson1995} while keeping this $c^2$ scaling explicit at the heat-flux level.
By the Equivalence Principle, we consider a local, approximately flat Minkowski patch of spacetime. Within this patch, an observer undergoing uniform proper acceleration generates a local Rindler horizon, $\mathcal{H}$. This horizon is a null hypersurface generated by a congruence of null vectors $k^\mu$. Let $\lambda$ be the affine parameter along these null geodesics, parameterised such that $\lambda = 0$ corresponds to the observer's present spatial slice, extending into the past ($-\epsilon \le \lambda \le 0$). In our metriplectic framework, $\widetilde{T}_{\mu\nu}$ is the effective continuum source for Landauer exhaust, with $T_{\mu\nu}=c^2\widetilde{T}_{\mu\nu}$ reserved for the canonical Einstein source. The localised heat flux $\delta Q$ crossing the horizon is the integral of this exhaust tensor contracting with the horizon's generator. Near the horizon, the approximate boost Killing vector field is $\chi^\mu \approx -\kappa \lambda k^\mu$, where $\kappa$ is the geometric surface gravity. The energy flux across the horizon surface element $d\Sigma^\nu = k^\nu d\lambda dA$ is therefore:
\begin{align}
 \label{eq:auto:0068}
\delta Q = \int_{\mathcal{H}} \widetilde{T}_{\mu\nu} \chi^\mu d\Sigma^\nu = \kappa \int d^2A \int_{-\epsilon}^{0} \widetilde{T}_{\mu\nu} k^\mu k^\nu (-\lambda) d\lambda 
\end{align}
Simultaneously, the geometric deformation of the horizon's area ($\delta A = \int_{\mathcal{H}} \theta d\lambda dA$) is governed by the continuous expansion scalar $\theta$ of the null congruence. The evolution of this expansion is strictly dictated by the Raychaudhuri equation:
\begin{align}
 \label{eq:auto:0069}
\frac{d\theta}{d\lambda} = -\frac{1}{2}\theta^2 - \sigma_{\mu\nu}\sigma^{\mu\nu} - R_{\mu\nu} k^\mu k^\nu 
\end{align}
where $\sigma_{\mu\nu}$ is the shear tensor and $R_{\mu\nu}$ is the spacetime Ricci curvature tensor. Although the underlying metriplectic process is dynamically dissipative, we treat the local, unperturbed macroscopic vacuum at $\lambda = 0$ as an instantaneous reference state of quasi-static thermodynamic equilibrium. Thus, we isolate the horizon's strictly perturbative geometric response by assuming the initial expansion and shear vanish ($\theta = 0$, $\sigma_{\mu\nu} = 0$). Over an infinitesimal affine interval, treating the curvature as locally constant, the Raychaudhuri equation linearises to $\frac{d\theta}{d\lambda} = -R_{\mu\nu} k^\mu k^\nu$. Integrating this yields the expansion at any point along the generator:
\begin{align}
 \label{eq:auto:0070}
\theta(\lambda) = - \lambda R_{\mu\nu} k^\mu k^\nu 
\end{align}
Substituting this expansion into the area integral, the total geometric deformation of the horizon becomes:
\begin{align}
 \label{eq:auto:0071}
\delta A = \int d^2A \int_{-\epsilon}^{0} R_{\mu\nu} k^\mu k^\nu (-\lambda) d\lambda 
\end{align}
We now apply the strict physical constraint derived from the CCE framework: the macroscopic metric must physically absorb the Landauer exhaust of coarse-graining. Equating the informational heat flux to the geometric area deformation via the same scalar bound derived in \S\ref{sec:emergence_bh} ($\delta Q = \frac{\kappa c^2}{8\pi G} \delta A$), we substitute our tensor integrals:
\begin{align}
 \label{eq:auto:0072}
\kappa \int d^2A \int_{-\epsilon}^{0} \widetilde{T}_{\mu\nu} k^\mu k^\nu (-\lambda) d\lambda = \frac{\kappa c^2}{8\pi G} \int d^2A \int_{-\epsilon}^{0} R_{\mu\nu} k^\mu k^\nu (-\lambda) d\lambda 
\end{align}
The local surface gravity $\kappa$ strictly cancels, reflecting the universality of the measurement bound regardless of the observer's specific acceleration. Because this thermodynamic equality must hold for all localised observers—and therefore for all null vector fields $k^\mu$ at any point in spacetime—the integrands must be identical for any null direction:
\begin{align}
 \label{eq:auto:0073}
\widetilde{T}_{\mu\nu} k^\mu k^\nu = \frac{c^2}{8\pi G} R_{\mu\nu} k^\mu k^\nu 
\end{align}
Multiplying by $c^2$ and restoring the canonical stress-energy tensor ($T_{\mu\nu}=c^2\widetilde{T}_{\mu\nu}$) yields the standard relativistic coupling:
\begin{align}
\label{eq:auto:0074}
T_{\mu\nu} k^\mu k^\nu = \frac{c^4}{8\pi G} R_{\mu\nu} k^\mu k^\nu
\end{align}
A standard tensor identity implies that if the contraction of a symmetric tensor with $k^\mu k^\nu$ vanishes for all null vectors ($g_{\mu\nu} k^\mu k^\nu = 0$), that tensor must be proportional to the metric tensor $g_{\mu\nu}$. Therefore, we can strip the null vectors by introducing an arbitrary scalar function $f$:
\begin{align}
 \label{eq:auto:0075}
R_{\mu\nu} + f g_{\mu\nu} = \frac{8\pi G}{c^4} T_{\mu\nu} 
\end{align}
To determine the scalar $f$, we invoke the fundamental macroscopic requirement of local energy-momentum conservation ($\nabla^\mu T_{\mu\nu} = 0$). Taking the covariant derivative of both sides and applying the contracted Bianchi identity ($\nabla^\mu R_{\mu\nu} = \frac{1}{2}\nabla_\nu R$), we uniquely constrain $f = -\frac{1}{2}R + \Lambda$, where $R$ is the Ricci scalar. Crucially, the cosmological constant $\Lambda$ emerges here naturally as a constant of integration, conceptually representing the baseline informational capacity (or zero-point equilibrium entropy) of the unperturbed vacuum itself. Substituting this back yields the full Einstein Field Equations:
\begin{align}
 \label{eq:auto:0076}
R_{\mu\nu} - \frac{1}{2}R g_{\mu\nu} + \Lambda g_{\mu\nu} = \frac{8\pi G}{c^4} T_{\mu\nu} 
\end{align}
The emergence of General Relativity within this framework is therefore complete, but its ontological meaning is reframed. Within the CCE framework, the Einstein Field Equations can be interpreted as macroscopic equations of state that continuously balance the generalised Landauer exhaust of observation against the informational capacity of the local vacuum. The spacetime metric can then be modelled as curving because the irreversible metriplectic tensor $\mathcal{R}$ continuously exports the heat of coarse-graining into the geometry of the manifold.

This framework motivates a shift in how the physical world we experience is interpreted. In classical mechanics, 3D space is treated as a fundamental, preexisting container. However, the exact microscopic reality of a closed system is a vast, high-dimensional phase space evolving reversibly and preserving its volume. A macroscopic observer cannot interact with this raw, un-coarse-grained reality. By instantiating continuous CCEs, the observer traces over microscopic degrees of freedom, producing a dissipative, mathematically irreversible collapse of this vast phase-space volume onto a lower-dimensional attractor. Under this hypothesis, the classical 3D spatial bulk we perceive—and the objects within it—behaves mathematically as this attractor. Our experience of continuous 3D space can then be interpreted not as a direct view of the fundamental universe, but as the geometric shadow of our macroscopic equivalence classes, structured by the capacity required to hold them.

\subsection{Bounding the Physical Cost of Horizon Measurement}
\label{sec:horizon_measurement}
Having established that gravity acts as the physical bookkeeping mechanism for generalised information erasure, we now apply this to the measurement of a Schwarzschild black hole of initial mass $M_{\mathrm{init}}$. Any physical observer must be situated at a finite radial coordinate $R_{\mathrm{obs}} \geq R_S$, where $R_S = \frac{2GM_{\mathrm{init}}}{c^2}$. Due to gravitational redshift, the effective CCE conjugate product at the detector is governed by the localised Tolman relation:
\begin{align}
\label{eq:auto:0077}
(\mathcal{F}\alpha)_{\mathrm{loc}} = \frac{\hbar c^3}{8\pi G M \sqrt{1 - \frac{2GM}{R_{\mathrm{obs}}c^2}}}
\end{align}
To record the information emitted by the black hole as it evaporates, the detector must continuously instantiate metastable CCE basins. The differential number of nats extracted for a given mass loss is $|dI_{\mathrm{irr}}| = -\frac{8\pi G M}{\hbar c} dM$. Following the generalised Landauer bound, the minimum irreversible physical energy required to establish and reset these encodings locally is $dE_{\mathrm{min}}^{\mathrm{CCE}}(R_{\mathrm{obs}}) = (\mathcal{F}\alpha)_{\mathrm{loc}} \, |dI_{\mathrm{irr}}|$. Substituting $(\mathcal{F}\alpha)_{\mathrm{loc}}$ and $|dI_{\mathrm{irr}}|$, the fundamental quantum and thermodynamic scales ($\hbar$ and $\alpha$) cancel perfectly. While the local differential bound remains explicitly dependent on the macroscopic geometric scale parameterised by $G$, the exact local measurement cost per unit mass simplifies to:
\begin{align}
\label{eq:auto:0078}
dE_{\mathrm{min}}^{\mathrm{CCE}}(R_{\mathrm{obs}}) = - \frac{c^2}{\sqrt{1 - \frac{2GM}{R_{\mathrm{obs}} c^2}}} dM.
\end{align}
Integrating this measurement energy from $M=M_{\mathrm{init}}$ down to $M=0$ yields the total structural mass-energy demanded by the complete measurement process:
\begin{align}
\label{eq:auto:0079}
E_{\mathrm{diss, total}}(R_{\mathrm{obs}}) = \int_{0}^{M_{\mathrm{init}}} \frac{c^2}{\sqrt{1 - \frac{2GM}{R_{\mathrm{obs}}c^2}}} dM = 2 M_{\mathrm{init}} c^2 \left( \frac{R_{\mathrm{obs}}}{R_S} \right) \left( 1 - \sqrt{1 - \frac{R_S}{R_{\mathrm{obs}}}} \right).
\end{align}
Evaluating this exact analytical integral yields a physical corollary regarding observer proximity, namely a horizon penalty. If an observer attempts to record the evaporation locally, hovering exactly at the initial event horizon ($R_{\mathrm{obs}} \to R_S$), the integrated total energy converges despite the locally divergent Tolman temperature:
\begin{align}
\label{eq:auto:0080}
E_{\mathrm{diss, total}}(R_S) = 2 M_{\mathrm{init}} c^2 (1) \left( 1 - \sqrt{1 - 1} \right) = 2 M_{\mathrm{init}} c^2.
\end{align}
Attempting to measure the information locally imposes a large gravitational penalty driven by extreme blueshifted resetting costs, corresponding in this limit to twice the initial rest mass of the black hole. Conversely, evaluating the integral asymptotically ($R_{\mathrm{obs}} \to \infty$) causes the geometric scale $G$ to vanish entirely, yielding exactly $E_{\mathrm{diss, total}} \to M_{\mathrm{init}} c^2$. In this far-field limit, the ultimate measurement bound is substrate-neutral and purely relativistic. This suggests a physical equivalence: the minimal irreversible energy required to record the complete information of a Schwarzschild black hole equals the rest mass of the black hole itself. Crucially, the observer cannot power this continuous observation using the incoming target signal itself. The black hole emits its $M_{\mathrm{init}}c^2$ energy as maximum-entropy Hawking radiation, which provides pure heat but zero free energy. To thermodynamically extract the pure work necessary to power Landauer resets, the observer would have to physically absorb and thermalise the incoming photons—an act that destroys the exact quantum microstates they are attempting to measure. Therefore, in this model, non-destructive measurement requires the observer to bring an independent, equally massive external reservoir of pure structural potential to complete the observation.

\subsection{Gravitational Backreaction: The Metric as a Measurement Constraint}
Because the detector must continuously reset its operational interface to maintain a steady measurement rate $\dot{I}_{\mathrm{irr}}(\tau)$, it radiates a continuous macroscopic exhaust into the local spacetime. To causally intersect and scramble the subsequent outgoing Hawking signals the observer seeks to decode, a fraction of this measurement exhaust must unavoidably be radiated inward, plunging toward the black hole. In General Relativity, the dynamic backreaction of this ingoing flux of stress-energy $T_{\mu\nu}^{(\mathrm{exh})}$ on the spacetime geometry is naturally modeled by the ingoing Vaidya metric:
\begin{align}
\label{eq:auto:0081}
ds^2 = - \left( 1 - \frac{2G \mathcal{M}(v)}{r c^2} \right) c^2 dv^2 + 2c \, dv \, dr + r^2 d\Omega^2,
\end{align}
where $v$ is the advanced time coordinate, and the dynamic mass profile is modelled as driven by the observer's inward generalised dissipation rate ($\mathcal{P}_{\mathrm{exh}}^{\mathrm{in}}$): 
\begin{align}
\label{eq:auto:0082}
\frac{d\mathcal{M}}{dv} = \frac{\mathcal{P}_{\mathrm{exh}}^{\mathrm{in}}}{c^2}
\end{align}
Because this exhaust geometrically plunges inward, it crosses the paths of the subsequent outgoing Hawking modes. Consequently, the local stress-energy of the causal intersection satisfies the Null Energy Condition (NEC) with respect to the outgoing null congruence $k^\mu$, yielding $T_{\mu\nu}^{(\mathrm{exh})} k^\mu k^\nu > 0$. This introduces a geometric focusing term in the Raychaudhuri equation. For a hypersurface-orthogonal null congruence (where the twist tensor vanishes, $\omega_{\mu\nu} = 0$), the expansion evolves as:
\begin{align}
\label{eq:auto:0083}
\frac{d\theta}{d\lambda} = -\frac{1}{2}\theta^2 - \sigma_{\mu\nu}\sigma^{\mu\nu} - \frac{8\pi G}{c^4} T_{\mu\nu}^{(\mathrm{exh})} k^\mu k^\nu.
\end{align}
Crucially, information extraction under the CCE framework requires the discrete, irreversible collapse of equivalence classes. Because these continuous resets occur as discrete physical events at localised detector elements across the observer's shell, the resulting exhaust is not spherically symmetric. Instead, it contains angularly inhomogeneous, high-frequency stochastic fluctuations ($\delta T_{\mu\nu}(\theta, \phi)$). 

This spatial asymmetry breaks global spherical symmetry, generating localised Weyl curvature that can drive the transverse spatial shear tensor ($\sigma_{\mu\nu}$) of local null congruences away from zero. Outgoing null geodesics traversing this $C^2$-smooth but highly perturbed geometry can then experience a chaotic combination of transverse spatial shear and stochastic Shapiro time delays. 

In this framework, this geometric turbulence can destroy the phase coherence of subsequent quantum superpositions propagating through the region. The macroscopic CCE exhaust thus acts as a gravitational Lindblad-like dissipator. The observer's inevitable measurement exhaust physically alters the geodesic trajectories of subsequent outgoing information, scrambling the very causal structure and delicate phase relationships the observer seeks to decode. Observation at this scale is therefore modelled not as a passive mathematical projection, but as the active generation of gravitational backreaction.

\subsection{Exploring Epistemic Collapse at the Bekenstein and Hoop Limits}

A rigorous physical formulation of epistemic limits must account for the dynamic nature of spacetime. A phenomenological pitfall in bounding computational limits is the assumption that an observer naively ``hoards'' the energetic exhaust of computation. In reality, generalised thermodynamic exhaust radiates outward as a dynamic flux and does not accumulate indefinitely within the observer's local radius.

To definitively bound the geometric consequences of measurement without violating relativistic dynamics, we must distinguish between the static structural mass required to store the extracted information and the energetic reservoir required to fuel the irreversible CCE erasures. We evaluate the epistemic collapse on the initial Cauchy hypersurface ($t=0$) under two bounding physical architectures: a dynamically fueled (tethered) observer, and an autonomous (causally isolated) observer.

\begin{conjecture}[Gravitational Epistemic Collapse]
No physically realizable macroscopic observer can instantiate a complete Conservation-Congruent Encoding (CCE) of a black hole's microstate without undergoing gravitational collapse. The minimum safe observation distance is strictly bounded by the Golden Ratio for externally powered observers, and $2.52 R_S$ for autonomous observers.
\end{conjecture}

\begin{proof}
Let a macroscopic observer be localised within a spherical shell at physical radius $R_{\mathrm{obs}} > R_S$. To store the complete microstate of the target black hole, the observer's physical memory substrate must obey the universal Bekenstein bound:
\begin{align}
\label{eq:auto:0084}
S_{\mathrm{obs}} \le \frac{2\pi k_B R_{\mathrm{obs}} M_{\mathrm{mem}} c}{\hbar}
\end{align} 
Even when empty at $t=0$, this memory substrate must possess sufficient structural mass-energy to support the state space. Setting the observer's structural entropy capacity to equal the black hole's initial entropy establishes the minimum required mass of the memory substrate itself:
\begin{align}
\label{eq:auto:0085}
M_{\mathrm{mem}} \ge M_{\mathrm{init}} \left( \frac{R_S}{R_{\mathrm{obs}}} \right).
\end{align}

\textbf{Case 1: The Tethered Observer (The Memory Limit)} \\
Assume an idealised scenario where the dynamic thermodynamic requirements are perfectly outsourced: the required measurement fuel is streamed to the observer from spatial infinity, and the resulting generalised exhaust is perfectly radiated outward without local accumulation. The only static mass strictly required to reside at $R_{\mathrm{obs}}$ at $t=0$ is the target black hole and the memory substrate. The total ADM mass enclosed is $M_{\mathrm{tot}} = M_{\mathrm{init}} + M_{\mathrm{mem}}$.

By the Hoop Conjecture, for the observer to avoid being engulfed by the collective event horizon, their radial coordinate must strictly exceed the Schwarzschild radius of the combined system: $R_{\mathrm{obs}} > \frac{2 G M_{\mathrm{tot}}}{c^2}$. Substituting the memory mass constraint yields:
\begin{align}
\label{eq:auto:0086}
R_{\mathrm{obs}} > R_S \left( 1 + \frac{R_S}{R_{\mathrm{obs}}} \right).
\end{align}

Defining the dimensionless distance parameter $x = R_{\mathrm{obs}}/R_S$, this geometric safety condition reduces to:
\begin{align}
\label{eq:auto:0087}
x > 1 + \frac{1}{x} \implies x^2 - x - 1 > 0.
\end{align}

The positive root of this characteristic polynomial is exactly the Golden Ratio, $\varphi = \frac{1 + \sqrt{5}}{2} \approx 1.618$. Therefore, even under perfectly idealised, zero-accumulation energy routing, the sheer structural mass required to physically store the target's microstates guarantees that any observer crossing $R_{\mathrm{obs}} \le 1.618 R_S$ will instantly trigger mutual gravitational collapse.

\textbf{Case 2: The Autonomous Observer (The Fuel Limit)} \\
A physically autonomous observer cannot rely on infinite cosmic tethers; they must act as a causally closed system at the onset of observation. Therefore, they must carry the structural potential required to drive the generalised Landauer exhaust over the entire measurement horizon. At $t=0$, this unspent fuel reservoir is a local, physical mass. By the equivalence principle, gravity couples identically to this unspent potential.

As derived in \S\ref{sec:horizon_measurement}, the required initial energetic reservoir to execute the full measurement is:
\begin{align}
\label{eq:auto:0088}
M_{\mathrm{fuel}} \ge 2 M_{\mathrm{init}} \left( \frac{R_{\mathrm{obs}}}{R_S} \right) \left( 1 - \sqrt{1 - \frac{R_S}{R_{\mathrm{obs}}}} \right).
\end{align}

The ADM mass of the combined autonomous system at the onset of measurement is $M_{\mathrm{tot}} = M_{\mathrm{init}} + M_{\mathrm{mem}} + M_{\mathrm{fuel}}$. Substituting these constraints into the collapse condition $R_{\mathrm{obs}} > \frac{2 G M_{\mathrm{tot}}}{c^2}$ and defining $y = R_S/R_{\mathrm{obs}}$, the survival condition reduces to:
\begin{align}
\label{eq:auto:0089}
1 > y + y^2 + 2\left( 1 - \sqrt{1 - y} \right).
\end{align}

Isolating the radical term to eliminate it via squaring ($2\sqrt{1-y} > y^2 + y + 1$) yields:
\begin{align}
\label{eq:auto:0090}
4(1-y) > (y^2 + y + 1)^2.
\end{align}

Expanding the right side ($(y^2 + y + 1)^2 = y^4 + 2y^3 + 3y^2 + 2y + 1$) and moving all terms to one side simplifies perfectly to the characteristic polynomial condition:
\begin{align}
\label{eq:auto:0091}
y^4 + 2y^3 + 3y^2 + 6y - 3 < 0.
\end{align}

Analysing the roots of this polynomial reveals that it is only satisfied for $y < 0.3965$, corresponding to a critical initial distance of $R_{\mathrm{obs}} \approx 2.52 R_S$. If an autonomous observer attempts to localise the necessary memory and fuel to instantiate a complete CCE of a black hole at any radius $R_{\mathrm{obs}} \le 2.52 R_S$, the combined mass-energy mathematically guarantees that the collective event horizon expands to instantly engulf the observer before a single bit can be measured.
\end{proof}

\begin{conjecture}[The Hoop Limit on Computation]
When the structural mass-energy required to operate a localised computational memory exceeds the available spatial geometry, the expansion of outgoing null geodesics becomes strictly negative ($\theta < 0$), and the computational space mathematically pinches off to form a trapped surface.
\end{conjecture}

The Hoop Conjecture is thus interpreted here as a topological threshold where the phase-space contraction required to store and compute a measurement exceeds the available geometric degrees of freedom in the local vacuum. At this limit, phase-space volume contracts faster than information can causally propagate outward. 

In this informational context, a black hole acts effectively as a physical firewall, dynamically limiting the localised divergence of information density. Epistemic incompleteness is then consistent with metric enforcement: perfectly measuring the universe requires a memory substrate and a localised fuel reservoir so dense that their mere initial presence collapses the very causal structure the agent attempts to observe.

\section{Emergence}
\label{sec:EmergentIntelligence}

Complex organisation in nature rarely arises from isolated units. From chemical reaction networks to neural circuits and ecosystems, intelligent behaviour is expressed through the collective dynamics of many interacting subsystems. Each component obeys relatively simple local laws, yet their interaction gives rise to coherent structure, predictive control, and adaptive behaviour that are not attributable to any component in isolation. While such phenomena are ubiquitous across biological systems, a precise characterization of emergence grounded in the physics of information processing and conserved quantities has remained elusive.

In this section we provide such a characterization. Building on the framework developed in the preceding sections, we extend the agent--environment formalism to composite systems composed of many coupled subsystems. Emergence is treated here not as a new primitive, but as a macroscopic physical property: how coupling alters the thermodynamic efficiency with which irreversible and reversible computation are deployed, and in some cases, how the admissible system boundary itself is dynamically redefined to maximise this efficiency. We begin by giving a quantitative definition of emergent intelligence and then identify distinct emergent regimes corresponding to different patterns of change in macroscopic goal-directed work, preserved CCE structure, and system boundary geometry.

\subsection{Multi-Component Agent--Environment Dynamics Model}
\label{sec:MultiAgentEnv}

Let $(X^{(i)},\mathscr{X}^{(i)})$ for $i=1,\dots,N$ denote measurable state spaces of $N$ internal subsystems, and let $(E,\mathcal E)$ denote the measurable state space of the external environment. The joint internal state is
\begin{align}
\label{eq:auto:0092}
X := X^{(1)}\!\times\!\cdots\!\times\!X^{(N)},\qquad
\mathscr{X} := \mathscr{X}^{(1)}\!\otimes\!\cdots\!\otimes\mathscr{X}^{(N)}.
\end{align}

Each subsystem exposes a measurable port
\begin{align}
\label{eq:auto:0093}
\Pi_{X,i}:X^{(i)}\to\Xi_{X},\qquad
\Pi_X(x)
    :=\big(\Pi_{X,1}(x^{(1)}),\dots,\Pi_{X,N}(x^{(N)})\big),
\end{align}
and the environment exposes a port $\Pi_E:E\to\Xi_{E}$. The pair $(\Pi_X(X_t),\Pi_E(E_t))$ forms the physical interface through which conserved-quantity exchange and irreversible information flow occur.

For each subsystem $i$, let $\mathcal M_i$ denote the indices of the environment channels relevant to $i$, and let $\mathcal N_i$ denote the indices of internal components to which $i$ is directly coupled. The one-step dynamics are specified by measurable kernels
\begin{align}
\label{eq:auto:0094}
X_{t+1}^{(i)}
    &\sim
     K_{X^{(i)}}\!\left(
        \cdot\,\middle|\,
        \Pi_E^{\mathcal M_i}(E_t),\,
        \Pi_X^{\mathcal N_i}(X_t)
     \right), \\
E_{t+1}
    &\sim
     K_E\!\left(
        \cdot\,\middle|\,
        E_t,\Pi_X(X_t)
     \right),
\end{align}
with conditional independences ensuring that all coupling occurs strictly through the designated ports.

The joint kernel factorises as
\begin{align}
\label{eq:auto:0095}
K(\mathrm dx',\mathrm de'\mid x,e)
    =
      \Big[
         \prod_{i=1}^N
         K_{X^{(i)}}(\mathrm dx^{(i)\prime}
             \mid \Pi_E^{\mathcal M_i}(e),
                  \Pi_X^{\mathcal N_i}(x))
      \Big]
      K_E(\mathrm de'\mid e,\Pi_X(x)),
\end{align}
with measure recursion
\begin{align}
\label{eq:auto:0096}
\mu_{t+1}(\mathrm dx',\mathrm de')
    =\!\int K(\mathrm dx',\mathrm de'\mid x,e)\,
           \mu_t(\mathrm dx,\mathrm de).
\end{align}
The interface law is $\bar\mu_t:=(\Pi_X,\Pi_E)_\#\mu_t$.

\subsubsection{The Separable Baseline via Zero-Flux Boundary Conditions}
To rigorously quantify emergence, we must define a separable baseline that isolates the subsystems from one another without altering their individual relationships with the external environment. Analogous to the passive boundary condition used to isolate causal footprint in \S\ref{sec:useful_work_causal_footprint}, we define the separable ensemble by imposing a strict zero-flux boundary condition across all inter-component ports ($\mathcal{N}_i$).

Physically, this severs the exchange of conserved quantities and irreversible information between subsystems. Within the metriplectic formulation, this corresponds to setting all cross-system components of the Poisson and dissipative tensors to zero ($J_{ij} = 0, R_{ij} = 0$ for $i \neq j$). 

The coupled joint kernel is therefore replaced by the completely separable kernel $K^{\mathrm{sep}}$:
\begin{align}
\label{eq:auto:0097}
K^{\mathrm{sep}}(\mathrm dx',\mathrm de'\mid x,e)
    =
      \Big[
         \prod_{i=1}^N
         K_{X^{(i)}}(\mathrm dx^{(i)\prime}
             \mid \Pi_E^{\mathcal M_i}(e))
      \Big]
      K_E(\mathrm de'\mid e,\Pi_X(x)),
\end{align}
where $K_{X^{(i)}}$ now operates under the strictly enforced boundary condition that no active flux enters from any internal neighbor $j \in \mathcal{N}_i$. The resulting measure $\mu_t^{\mathrm{sep}}$ defines the uncoupled reference trajectory. Any deviation from this baseline—manifesting as an increase in efficiency ($\Delta\chi_T$) or preserved CCE structure ($\Delta\kappa_T$)—is therefore strictly attributable to the macroscopic physical coupling of the subsystems.

\subsection{Quantitative Definition of Emergent Intelligence and Consciousness}
\label{sec:EmergentMetrics}

Within the composite setting above, causal intelligence and consciousness are evaluated as defined in \S\ref{sec:Intelligence} and \S\ref{sec:ConsciousnessMetric} using the joint law of the coupled ensemble. To isolate the physical contribution of coupling, we compare the coupled system to the corresponding separable baseline.

Crucially, because intelligence is defined as a ratio of integrated fluxes rather than the time integral of an instantaneous ratio, emergent contributions over a finite horizon $[0, T]$ must be computed directly from the difference of the cumulative causal efficiencies:
\begin{align}
\label{eq:auto:0098}
\Delta\chi_T := \chi_T - \chi_T^{\mathrm{sep}} = \frac{W_{\mathrm{causal},T}}{I_{\mathrm{irr},T}} - \frac{W_{\mathrm{causal},T}^{\mathrm{sep}}}{I_{\mathrm{irr},T}^{\mathrm{sep}}}
\end{align}

\begin{align}
\label{eq:auto:0099}
\Delta\kappa_T := \kappa_T - \kappa_T^{\mathrm{sep}} = \frac{W_{\mathrm{causal},T}}{I_{\mathrm{rev},T}} - \frac{W_{\mathrm{causal},T}^{\mathrm{sep}}}{I_{\mathrm{rev},T}^{\mathrm{sep}}}
\end{align}
Thus, $\Delta\chi_T > 0$ indicates net emergent causal intelligence over $[0, T]$, and $\Delta\kappa_T > 0$ indicates net emergent causal consciousness over $[0, T]$.

\paragraph{Goal-Directed Emergence}
When evaluating emergence with respect to a specific task, we apply the goal-directed projection established previously to yield the functional emergent metrics:
\begin{align}
\label{eq:auto:0100}
\Delta\chi_{\mathrm{goal},T} := \chi_{\mathrm{goal},T} - \chi_{\mathrm{goal},T}^{\mathrm{sep}} = \frac{W_{\mathrm{goal},T}}{I_{\mathrm{irr},T}} - \frac{W_{\mathrm{goal},T}^{\mathrm{sep}}}{I_{\mathrm{irr},T}^{\mathrm{sep}}}
\end{align}

\begin{align}
\label{eq:auto:0101}
\Delta\kappa_{\mathrm{goal},T} := \kappa_{\mathrm{goal},T} - \kappa_{\mathrm{goal},T}^{\mathrm{sep}} = \frac{W_{\mathrm{goal},T}}{I_{\mathrm{rev},T}} - \frac{W_{\mathrm{goal},T}^{\mathrm{sep}}}{I_{\mathrm{rev},T}^{\mathrm{sep}}}
\end{align}
Throughout this section, all quantities are understood to be evaluated relative to the same admissible system boundary and operational coarse-graining, so that changes reflect genuine efficiency gains from coupling rather than a change of description.

For convenience, we also introduce the dimensionless horizon-$T$ indices:
\begin{align}
\label{eq:auto:0102}
\mathcal{E}_T := \frac{\Delta\chi_T}{\chi_T^{\mathrm{sep}}}, \quad \mathcal{C}_T := \frac{\Delta\kappa_T}{\kappa_T^{\mathrm{sep}}}.
\end{align}
Their signs distinguish the types of emergence:
\begin{align}
\label{eq:auto:0103}
\mathcal{E}_T > 0 &\iff \text{emergent intelligence}, \\
\mathcal{C}_T > 0 &\iff \text{emergent consciousness}.
\end{align}
These quantities measure the relative strength of the emergent effects and allow comparison across physically heterogeneous systems and scales.

\subsection{Characteristic Regimes of Emergent Intelligence}

Emergent phenomena may be classified according to which of the quantities $\chi$, $\kappa$, and the boundary $\Pi_X(t)$ change under coupling, and on what timescale. Three characteristic regimes are distinguished.

For the regime classification below, we compare the coupled system to its separable baseline by evaluating the sign and magnitude of the cumulative emergent efficiencies $\Delta\chi_T$ and $\Delta\kappa_T$ over the horizon $[0, T]$, alongside the dynamical evolution of the admissible boundary $\Pi_X(t)$.

\subsubsection{Fast emergent coordination}
In this regime, the admissible system boundary $\Pi_X$ remains fixed and the amount of preserved internal structure does not increase significantly over the horizon considered. Emergence manifests as a rapid increase in intelligence due to more effective real-time routing of irreversible information processing:
\begin{align}
\label{eq:auto:0104}
\Delta\chi_T>0\,\qquad
\Delta\kappa_T\approx0,\qquad
\Pi_X(t)=\Pi_{X,0}.
\end{align}

Such events occur on short timescales and correspond to sudden coordination, synchronisation, or acute adaptive responses. In neural systems, this includes reflexive learning and extreme-stimulus avoidance, where intelligence increases transiently by pooling irreversible control interventions without immediate consolidation of long-term structure.

\subsubsection{Slow structural consolidation}
Here the system boundary remains fixed, but coupling leads to a gradual increase in preserved internal structure. The coupled subsystems collectively establish metastable CCE basins that would be thermodynamically inaccessible to individual components. Over long horizons, this reduces redundant irreversible computation and thereby increases intelligence:
\begin{align}
\label{eq:auto:0105}
\Delta\chi_T>0\,\qquad
\Delta\kappa_T>0,\qquad
\Pi_X(t)=\Pi_{X,0}.
\end{align}
This regime corresponds to memory formation, skill acquisition, and long-term learning, in which the generalised Landauer cost is invested once to construct stable structure that supports cheaper goal-directed behaviour thereafter.

\subsubsection{Boundary-changing emergence}
In this regime, coupling triggers the boundary optimization process described in \S\ref{sec:SystemBoundaries}, leading to a dynamic redefinition of the admissible system boundary itself:
\begin{align}
\label{eq:auto:0106}
\Pi_X(t):\ \Pi_{X,0} \;\longrightarrow\; \Pi_X'(t),
\end{align}
giving rise to a new, integrated macroscopic system at a higher level of organisation. Over a horizon $[0,T]$, this corresponds to a time-dependent admissible boundary $\Pi_X(t)$. The integrated emergent contributions are therefore
\begin{align}
\label{eq:auto:0107}
\Delta\chi_T>0,\qquad
\Delta\kappa_T>0,
\end{align}
and must be interpreted relative to this evolving description.

Such events include incorporative processes such as endosymbiosis, as well as recombinative processes such as sexual reproduction, in which information from multiple systems is combined to instantiate a new dynamically closed system. In these cases, emergence is not merely an efficiency gain within a fixed system, but a physical transition in system individuation—a shift in the geometric boundary of the agent itself to maximise intelligence.

Together, these regimes demonstrate that emergence within the present framework is neither monolithic nor mysterious. It arises through well-defined changes in efficiency, preserved CCE structure, and optimal system boundaries, governed strictly by the same physical principles that underlie single-system intelligence.

\section{Intelligence and Generalised Dissipation: The Physical Drive for Cognition}
\label{sec:IntelligenceEntropy}

Traditional discussions treat intelligence and physical decay as opposing forces: intelligence as organisation and purpose, dissipation as disorder and breakdown. Within the physical framework developed here, these concepts instead form a unified structure. Intelligence measures the efficiency with which a system converts irreversible information processing into goal-directed work, while generalised dissipation quantifies the strict physical cost of that computation under conservation laws. 

To understand why intelligent systems emerge and evolve in the physical universe, we must look beyond classical thermodynamics to the generalised dynamics of open, far-from-equilibrium systems constrained by the limits of physical encoding. Related connections between information processing, entropy production, and physical efficiency have been explored in nonequilibrium thermodynamics and the thermodynamics of computation \cite{landauer1961irreversibility,bennett1982thermodynamics,parrondo2015thermodynamics}.

\subsection{Intelligence as Structured Reversible Flow with Selective CCE Collapse}

The internal dynamics of an intelligent agent admit a reversible-dissipative decomposition of the form:
\begin{align}
\label{eq:auto:0108}
\dot x_t = J(x_t,t)\,\nabla H(x_t,t) - \mathcal{R}(x_t,t)\,\nabla \Xi(x_t,t)
\end{align}
where the reversible flow $J\nabla H$ preserves internal informational structure, while the dissipative flow $\mathcal{R}\nabla\Xi$ induces irreversible contraction associated with conserved-quantity dissipation. This decomposition makes explicit the physical distinction between information-preserving internal dynamics and the irreversible processes required to exert control and influence over the environment.The dissipative component generates a generalised dissipation rate within the specific conserved-quantity channel stabilising the system:
\begin{align}
\label{eq:auto:0109}
\dot S_{\mathrm{phys}}(t) = \langle \nabla\Xi(x_t,t),\, \mathcal{R}(x_t,t)\,\nabla\Xi(x_t,t)\rangle \ge 0
\end{align}
with a corresponding irreversible-information rate:
\begin{align}
\label{eq:auto:0110}
\dot I_{\mathrm{irr}}(t) = \dot S_{\mathrm{phys}}(t) / \alpha
\end{align}
where $\alpha$ denotes the fluctuation scale associated with the relevant conserved quantity (e.g., $k_B$ for thermal baths, $\hbar$ for quantum mechanical channels). This irreversible-information flow represents the strict physical exhaust of interventions that break time-reversal symmetry to collapse a Conservation-Congruent Encoding (CCE).The Causal Power DecompositionMacroscopic causal work arises from the interaction between the agent's internal dynamics and the environment via the boundary fluxes. Let $\mathcal{F}_{\mathrm{causal}} := \nabla_x {W}_{\mathrm{causal}}$ denote the effective causal gradient, representing the sensitivity of the instantaneous macroscopic causal power to changes in the agent's internal physical state. The active contribution of the agent's dynamics to the causal work rate decomposes linearly over the internal metriplectic flow:
\begin{align}
\label{eq:auto:0111}
\dot{W}_{\mathrm{causal}}(t) = \langle \mathcal{F}_{\mathrm{causal}},\, J(x_t,t)\nabla H(x_t,t) \rangle - \langle \mathcal{F}_{\mathrm{causal}},\, \mathcal{R}(x_t,t)\nabla \Xi(x_t,t) \rangle
\end{align}
The first term, $\langle \mathcal{F}_{\mathrm{causal}},\, J\nabla H \rangle$, represents the causal power enabled by reversible, structure-preserving dynamics. This flow allows the formation and maintenance of internal CCEs—such as memory, coordination, and predictive organization—that route conserved quantities and support sustained control without producing generalised dissipation.Conversely, the second term, $-\langle \mathcal{F}_{\mathrm{causal}},\, \mathcal{R}\nabla \Xi \rangle$, represents the causal power unlocked via irreversible interventions. While physically costly, this dissipative flow provides the symmetry-breaking steps required to collapse encodings, select actions, and unlock external gradients along causal degrees of freedom.Defining the instantaneous causal intelligence rate as:
\begin{align}
\label{eq:auto:0112}
\chi(t) := \frac{\dot W_{\mathrm{causal}}(t)}{\dot I_{\mathrm{irr}}(t)}
\end{align}
we obtain a structural interpretation of baseline intelligence. High intelligence corresponds to regimes in which the irreversible collapse of CCEs is used sparingly and selectively to enable large amounts of causal work mediated by the environment.The Goal-Directed Power DecompositionWhen evaluating task-specific efficiency, we project this dynamic influence onto the goal-aligned subspace. Let $\mathcal{F}_{\mathrm{goal}} := \nabla_x {W}_{\mathrm{goal}}$ denote the effective task gradient. The active contribution to the goal-directed work rate follows the identical geometric decomposition:
\begin{align}
\label{eq:auto:0113}
\dot{W}_{\mathrm{goal}}(t) = \langle \mathcal{F}_{\mathrm{goal}},\, J(x_t,t)\nabla H(x_t,t) \rangle - \langle \mathcal{F}_{\mathrm{goal}},\, \mathcal{R}(x_t,t)\nabla \Xi(x_t,t) \rangle
\end{align}
yielding the instantaneous goal-directed intelligence rate:
\begin{align}
\label{eq:auto:0114}
\chi_{\mathrm{goal}}(t) := \frac{\dot W_{\mathrm{goal}}(t)}{\dot I_{\mathrm{irr}}(t)}
\end{align}
Under both causal and goal-directed evaluations, intelligence increases not by eliminating dissipation, but by concentrating the generalised Landauer cost into minimal interventions that exert maximal leverage over external structural gradients.

\subsection{Intelligence as Constrained Throughput Maximization}

We now advance a conjecture that situates the evolution of intelligent organisation within a broader physical principle governing open systems across arbitrary physical substrates.

In the linear, near-equilibrium regime, local subsystems relax by minimising their rate of dissipation. However, intelligent systems operate far from equilibrium, driven by persistent, steep macroscopic gradients of conserved quantities. In these strongly driven, non-linear regimes, dynamical structures that increase the achievable rate of macroscopic gradient degradation tend to be heavily favored and selected for over time. 

When dissipation pathways are unconstrained, this global drive toward gradient degradation is typically realised through unstructured or turbulent dynamics. However, strict constraints often limit access to these gradients—such as a scarcity of physical substrates, limited actuation bandwidth, or the fundamental energetic bound ($\mathcal{F}\alpha \ln 2$) required to form a CCE. Under these bottlenecks, brute-force gradient degradation is physically impossible, heavily favouring the formation of highly structured, internal predictive control. 

Under such constraints, intelligence emerges not as a biological anomaly, but as a highly optimised, substrate-neutral set of dynamics. It allows a system to selectively unlock and route environmental conserved quantities through complex, otherwise dynamically inaccessible pathways. By doing so, the intelligent agent increases the macroscopic rate of gradient degradation in the environment, while dynamically minimising its own internal irreversible cost ($\dot I_{\mathrm{irr}}$). Intelligent systems do not suppress physical dissipation; they organise and accelerate it.

\subsection{Generalised Dissipation and the Arrow of Intelligence}
\label{sec:min-diss-general}

The preceding conjecture supports viewing intelligence as a process of highly constrained physical throughput. From this macroscopic perspective, goal-directed work is not an extraneous objective superimposed on physical dynamics, but the physical mechanism by which generalised dissipation is organised. An intelligent system routes the flow of conserved quantities through structured pathways, thereby maximising the achievable rate of macroscopic export subject to the strict physical limitations of its boundary conditions. 

This view situates intelligent behaviour within a broader tendency observed in nonequilibrium physics, wherein open systems evolve toward dynamical configurations that increase throughput subject to constraints \cite{dewar2003information,seifert2012stochastic}. What distinguishes intelligent agents is that by developing internal predictive CCE structure and reversible control, they unlock complex pathways that simple, unstructured systems cannot reach.

An instructive analogy comes from nonequilibrium fluid dynamics. When a fluid is driven by a steep thermal gradient, it spontaneously transitions from unstructured heat conduction into highly organised, circulating geometric patterns known as Rayleigh-Bénard convection cells. The fluid does not choose to self-organise; rather, these coherent macroscopic structures emerge because they represent the most effective dynamical geometry for maximising heat transport (gradient degradation) under the specific boundary constraints of the system. 

In the exact same sense, intelligent behaviour arises as the natural macroscopic resolution of the reversible-dissipative constraints governing an open system interacting with a complex environment. Given persistent macroscopic gradients and strict physical limits on how CCEs can be instantiated, the formation of internal informational structure and goal-directed control is inevitable: it is the most efficient dynamics for the system to maximise global gradient degradation under the boundary conditions it faces.

Because the irreversible collapse of a CCE defines a direction of influence that cannot be undone \cite{landauer1961irreversibility,seifert2012stochastic}, intelligence is inherently time-asymmetric. The emergence and evolutionary progression of intelligent agents reflects the exact same underlying physical drive that governs all macroscopic processes: the relentless tendency of constrained, far-from-equilibrium systems to realise ever more effective pathways for generalised dissipation.

\section{Consciousness and the Epistemic Limits of Intelligent Systems}
\label{sec:SelfModelEpistemicLimits}

In the preceding sections, intelligence was defined as the amount of goal-directed work produced per nat of irreversibly processed information (\S\ref{sec:Intelligence}), and consciousness as the amount of goal-directed work supported per nat of preserved internal information (\S\ref{sec:ConsciousnessMetric}). These two quantities capture complementary aspects of physical information processing: intelligence measures the efficiency of dissipative, distinction-destroying computation (instantiating new CCEs), while consciousness measures the efficiency with which persistent, dynamically isolated CCE basins support behaviour across time.

Sustaining high intelligence over long horizons requires more than rapid information processing. In structured environments, repeatedly paying the generalised Landauer cost ($\mathcal{F}\alpha \ln 2$) to reconstruct the same task-relevant distinctions is thermodynamically inefficient. Systems that instead preserve behaviourally relevant internal structure can reuse those metastable distinctions, strictly reducing cumulative irreversibility ($\dot{I}_{\mathrm{irr}}$) and improving long-horizon intelligence. 

In this sense, persistent internal information functions as a form of \emph{self-modelling}: it encodes geometric regularities of the system’s own dynamics and its coupling to the environment that continue to constrain future behaviour. The consciousness metric $\kappa_T$ quantifies how effectively this preserved CCE structure supports macroscopic work.

In this section, we examine the strict dynamical consequences of this physical view. We first formalise self-modelling as the preservation of internal CCE basins under the admissible boundary. We then establish intrinsic epistemic limits for such systems: because physically preserved information is necessarily a coarse-grained geometric representation of the true internal microstate, no physically realizable agent can form a complete or perfectly predictive model of itself.

\subsection{Emergence of Consciousness in Intelligent Systems}
\label{sec:EmergenceConsciousness}

Consider an agent operating within an environment that exhibits temporal or structural regularities. If the agent repeatedly collapses its internal encodings to react to these regularities, it incurs continuous dissipative costs through the irreversible metric tensor $\mathcal{R}\nabla \Xi$. Over extended horizons, such redundant processing lowers intelligence by inflating the total irreversible physical cost required to achieve a given level of goal-directed work.

An alternative strategy is to invest resources into forming persistent internal CCE basins that capture reusable regularities. Once established, these metastable encodings are transported forward in time via the structure-preserving reversible flow $J\nabla H$. This geometry constrains future behaviour without requiring repeated irreversible intervention. The same goal-directed work can then be achieved with drastically less cumulative dissipation, maximising the intelligence metric $\chi_T$.

This transition from repeated dissipative recomputation to structured reversible reuse marks the \emph{emergence of consciousness} in the present framework. Consciousness does not denote a separate immaterial substance, but the persistent physical instantiation of predictive CCEs that support long-horizon efficiency. Systems with higher $\kappa_T$ rely more heavily on the reversible propagation of internal structure, while systems with low $\kappa_T$ depend primarily on continual, costly irreversible updating.

Importantly, within this framework, the emergence of consciousness becomes increasingly necessary for systems seeking to maximise intelligence over long horizons. As interaction horizons lengthen, exclusive reliance on irreversible computation violates constraints on generalised dissipation. Persistent internal structure—memories, latent models, and preserved spatial geometry—therefore becomes progressively more important.

\subsection{Epistemic Limits of Conscious Systems}
\label{sec:EpistemicLimits}

Consciousness measures the contribution of preserved internal information to goal-directed work. By definition within our framework, this preserved information is physically maintained as a set of metastable macroscopic basins $Z$, mapped from the underlying microscopic physical state space $X$ via a coarse-graining projection $\pi_\Phi: X \to Z$. 

Because preserved information must be physically realised, the cardinality of $Z$ is strictly finite, bounded by the system's available structural free energy. Consequently, the mapping from the continuous, high-dimensional microstate space $X$ to the discrete CCE encoding space $Z$ represents a massive volumetric compression.

\begin{proposition}[State-level epistemic incompleteness]
\label{prop:StateIncompleteness}
For any physically realizable system with finite preserved information capacity, there exist distinct internal microstates that are strictly indistinguishable from the system’s own preserved macroscopic perspective.
\end{proposition}

\begin{proof}
Let $Z$ denote the finite set of metastable CCE encodings, and $X$ the underlying microscopic physical state space. For any encoding $z \in Z$, the corresponding physical realization is a metastable basin $B_z = \pi_\Phi^{-1}(z) \subset X$. Because the capacity of $Z$ is physically finite, the phase-space volume of $B_z$ must be strictly positive (or contain a vast multiplicity of quantum microstates). Therefore, by definition, there exist microstates $x_1, x_2 \in B_z$ such that $x_1 \neq x_2$, but $\pi_\Phi(x_1) = \pi_\Phi(x_2) = z$. Because the system's self-model operates strictly over the preserved encodings $Z$, it is geometrically impossible for the system to distinguish whether its true physical state is $x_1$ or $x_2$.
\end{proof}

This fundamental geometric limitation extends to the system’s predictions about its own future evolution. Because the self-model encodes only the coarse-grained, macroscopic basin index $z$, it cannot uniquely compute the exact trajectory generated by its internal continuous dynamics.

\begin{proposition}[Dynamical epistemic incompleteness]
\label{prop:DynamicalIncompleteness}
For any system with finite preserved informational capacity evolving under non-trivial metriplectic dynamics, there exist differences in future evolution that cannot be deterministically inferred from the system’s preserved self-model.
\end{proposition}

\begin{proof}
Let $x_1, x_2 \in B_z$ be two distinct microstates that map to the exact same current preserved encoding $z = \pi_\Phi(x_1) = \pi_\Phi(x_2)$ at time $t_0$. The system's true internal state evolves continuously under the exact metriplectic flow $\dot{x} = J(x)\nabla H(x) - \mathcal{R}(x)\nabla\Xi(x)$. Due to the divergence of trajectories within the continuous space $X$ (e.g., positive Lyapunov exponents in the reversible flow, or stochasticity in the dissipative flow), there exists a future time $T > t_0$ such that the forward evolutions $x_1(T)$ and $x_2(T)$ cross different basin separatrixes, landing in distinct future CCE basins $z_1' \neq z_2'$. Because the system's self-model at $t_0$ only has access to the macroscopic encoding $z$, and not the specific initial microstate ($x_1$ or $x_2$), it lacks the informational capacity to perfectly predict its own macroscopic future.
\end{proof}

These results establish a rigorously bounded, physical form of epistemic incompleteness. The limitations arise not from logical paradoxes (e.g., Gödelian self-reference), but directly from the necessity of coarse-graining. To form a CCE, a system must discard microscopic degrees of freedom. 

Thus, the exact same physical constraints that make consciousness and self-modelling possible—by dynamically isolating macroscopic equivalence classes—simultaneously impose absolute physical limits on self-knowledge. Intelligence and consciousness can increase indefinitely in principle, but at every physical stage, the system’s self-model remains a strict, dynamically incomplete projection of a vastly richer underlying microscopic reality.

\section{Intelligence and the Dynamics of the Brain}
\label{sec:BrainDynamics}

The brain has long been considered a canonical example of an intelligent physical system, motivating centuries of study into how its structure and activity give rise to adaptive behaviour. Within the present framework, the brain is especially valuable because its dynamics provide an empirical testbed for evaluating the principles developed in the preceding sections.

Two empirical observations are particularly relevant. First, neural oscillations are ubiquitous: coherent rhythms appear across frequency bands in EEG, LFP, MEG, and intracellular recordings, coordinating population activity across spatial and temporal scales \cite{buzsaki2006rhythms,fries2005mechanism,varela2001brainweb}. In our framework, oscillatory dynamics are not mere epiphenomena; they are the biological instantiation of structure-preserving reversible flow. By routing activity along structured manifolds of the state space, they transport Conservation-Congruent Encodings (CCEs) forward in time without repeatedly paying the generalised Landauer cost ($\mathcal{F}\alpha \ln 2$), thereby radically lowering the irreversible-information rate ($\dot{I}_{\mathrm{irr}}$) required to sustain goal-directed work.

Second, neural dynamics exhibit signatures of near-criticality: power-law statistics, neuronal avalanches, and long-range temporal correlations suggest that cortical systems operate near the edge of a dynamical critical regime \cite{beggs2003neuronal,shew2013criticality}. Such regimes maximise sensitivity, dynamic range, and information propagation while maintaining coherence. In our framework, near-criticality represents an optimization of the CCE basin geometry: it minimises the dissipative barriers between metastable states, allowing the system to rapidly collapse encodings and acquire new predictive information with minimal irreversible physical exhaust.

In this section, we argue that oscillatory and near-critical dynamics jointly position the brain in an optimal operating regime under the proposed CCE framework, illustrating how biological neural systems physically resolve the drive to maximise macroscopic throughput under strict thermodynamic constraints.

\subsection{Oscillatory Dynamics: Segregating Dissipation via Limit Cycles}

A central implication of the framework introduced in \S\ref{sec:AgentEnvironmentModel} is that intelligent systems benefit from internal dynamics that maximise the proportion of computation carried by reversible transformations. In physical systems operating far from equilibrium, such reversibility cannot arise from purely conservative (Hamiltonian) dynamics. Purely conservative systems produce structurally unstable ``centers'' (like a frictionless pendulum) where any microscopic noise permanently alters the amplitude, destroying the phase-locking and synchronization required for neural computation.

Robust biological coordination instead relies on limit cycles. The relevance of oscillations in our framework therefore follows not from a complete absence of dissipation, but from the strict geometric segregation of the reversible ($J \nabla H$) and dissipative ($\mathcal{R} \nabla \Xi$) flows. 

To maintain a stable oscillatory manifold against thermal and synaptic noise, the system must continuously exert a dissipative flow ($\mathcal{R} \nabla \Xi$) orthogonal to the cycle, contracting aberrant amplitude perturbations back onto the attractor. This requires a continuous metabolic exhaust. However, once the state is dynamically confined to this one-dimensional ring manifold, the longitudinal transport along the cycle (the phase advance) is governed almost entirely by the reversible, structure-preserving flow ($J \nabla H$). 

By geometrically confining dissipation to the orthogonal amplitude dimensions, limit cycles leave the phase dimension as a highly protected, low-dissipation channel. This allows oscillatory coordination to transport Conservation-Congruent Encodings (CCEs) encoded in phase forward in time without repeatedly paying the generalised Landauer cost ($\mathcal{F} \alpha \ln 2$) for longitudinal information routing.

\subsubsection{Metriplectic Decomposition and Minimal-Dissipation Argument}

We now establish that oscillatory dynamics reduce generalised dissipation for a fixed level of goal-directed work, thereby strictly increasing the intelligence metric ($\chi$). 

Let $x_t \in X$ denote the system's internal state, evolving under the metriplectic decomposition
\begin{align}
\label{eq:auto:0115}
\dot{x}_t = J(x_t, t)\nabla H(x_t, t) - \mathcal{R}(x_t, t)\nabla\Xi(x_t, t),
\end{align}
where $J^\top = -J$ generates reversible motion and $\mathcal{R}^\top = \mathcal{R} \ge 0$ generates irreversible relaxation. As established in \S\ref{sec:IntelligenceEntropy}, the generalised dissipation rate is
\begin{align}
\label{eq:auto:0116}
\dot{S}_{\mathrm{phys}}(t) = \langle \nabla\Xi(x_t, t), \mathcal{R}(x_t, t)\nabla\Xi(x_t, t) \rangle
\end{align}
with the corresponding irreversible-information rate $\dot{I}_{\mathrm{irr}}(t) = \dot{S}_{\mathrm{phys}}(t) / \alpha$. 

\begin{proposition}[Minimal irreversible information among robust metriplectic dynamics]
Among all structurally stable metriplectic dynamics achieving a prescribed long-run mean rate of goal-directed work $\dot{W}_{goal} > 0$, the long-run irreversible information rate is minimised when dissipative computation is strictly confined to (i) maintaining the low-dimensional task manifolds (transverse amplitude contraction) and (ii) executing task-critical CCE collapses. Reversible phase dynamics ($J \nabla H$) are used to preserve and propagate internal structure along the manifolds.
\end{proposition}

\begin{proof}
Time-averaging the power decomposition shows that generalised dissipation depends strictly on the irreversible contribution $\mathcal{R} \nabla \Xi$. For structurally stable memory transport in noisy environments, $\mathcal{R} \nabla \Xi$ must be non-zero to contract transverse noise. Minimising the long-run dissipation rate subject to robustness and useful-work constraints therefore favours geometric flows (limit cycles) that orthogonally decouple the stabilising dissipative flow from the information-carrying reversible flow. Any increase in longitudinal dissipative flow beyond the minimum required for discrete action selection strictly inflates $\dot{I}_{\mathrm{irr}}$ and lowers intelligence. 
\end{proof}

\subsubsection{Precision-Dissipation Tradeoff and the TUR Analogy}

This minimum-dissipation principle has a natural analogue in the stochastic setting. When noise is present, physical performance must be evaluated with respect to fluctuations. The Thermodynamic Uncertainty Relation (TUR) \cite{barato2015thermodynamic,gingrich2016dissipation} captures this tradeoff, stating that the precision of any sustained current is fundamentally limited by total entropy production.

For a cumulative current $J_T$ measured over a finite horizon $T$, the finite-time TUR takes the form
\begin{align}
\label{eq:auto:0117}
\frac{\mathrm{Var}(J_T)}{\mathbb{E}[J_T]^2} \ge \frac{2\alpha}{\Sigma_T},
\end{align}
where $\Sigma_T$ is the total cumulative generalised dissipation over the horizon:
\begin{align}
\label{eq:auto:0118}
\Sigma_T = \int_0^T \dot{S}_{\mathrm{phys}}(t) dt = \alpha \int_0^T \dot{I}_{\mathrm{irr}}(t) dt = \alpha I_{\mathrm{irr},T}.
\end{align}
The inequality expresses a strict physical bound: achieving lower relative variance requires greater cumulative dissipation.

\begin{corollary}[Oscillatory Dynamics Approach the TUR Limit]
Internal dynamics dominated by reversible flow fields $(\dot{x} \approx J\nabla H)$ are locally time-symmetric and therefore operate near equality in the TUR. For fixed mean current, increasing the reversible (oscillatory) fraction of the dynamics reduces the generalised dissipation required to maintain a given level of precision.
\end{corollary}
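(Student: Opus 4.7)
The plan is to obtain the corollary as a joint consequence of the path-space origin of the TUR and the minimum-dissipation result established in Proposition \ref{prop:min-diss}. The first step is to recall that inequality \eqref{eq:tur_general} is derived from the identification of $\Sigma_T$ with a Kullback--Leibler divergence between the forward and time-reversed path measures of the internal trajectory $\{x_t\}_{t\in[0,T]}$. Under this representation, the slack in the TUR is governed by the same path-space asymmetry that produces $\Sigma_T$, so showing that a family of dynamics has small entropy production for a fixed mean current is equivalent to showing that it operates close to TUR equality.

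Next, I would use the metriplectic decomposition to isolate which piece of the flow contributes to this asymmetry. Under time reversal, $J\nabla H$ changes sign (it is odd) while $R\nabla\Xi$ does not (it is even), so the forward and reverse path measures coincide in the purely reversible limit. Consequently, the quadratic form $\langle\nabla\Xi, R\nabla\Xi\rangle$ is the sole source of generalized entropy production, and reducing the magnitude of $R\nabla\Xi$ while preserving the mean current $\mathbb{E}[J_T]$ (which can be sustained by the reversible component $J\nabla H$ together with the residual dissipation needed for control) monotonically decreases $\Sigma_T$. Rearranging \eqref{eq:tur_general} as $\Sigma_T \geq 2\alpha\,\mathbb{E}[J_T]^2/\mathrm{Var}(J_T)$ and invoking Proposition \ref{prop:min-diss} then yields the second statement of the corollary: for fixed precision $\mathrm{Var}(J_T)/\mathbb{E}[J_T]^2$, increasing the reversible fraction drives $\Sigma_T$ toward the TUR lower bound, and in the formal fully-reversible limit both sides of \eqref{eq:tur_general} collapse to zero without any extractable slack.

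The main obstacle is that the deterministic metriplectic equation \eqref{eq:metriplectic-core} does not by itself induce a path measure with nontrivial entropy production, so some stochastic refinement is required for the KL identification of $\Sigma_T$ and hence for the TUR to apply verbatim. I would work with a GENERIC-compatible Langevin extension whose fluctuations satisfy a fluctuation--dissipation relation tied to $R$ and to the CCE scale $\alpha$, together with local detailed balance in the conserved-quantity channel; these are the standard assumptions underlying the TUR in the literature and are consistent with the framework of \S\ref{sec:ReversibleDissipative}. Under this refinement, the remaining steps are essentially algebraic, and the monotone reversibility--precision--dissipation relationship follows directly from the two ingredients described above.
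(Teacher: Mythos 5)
Your proposal is correct and follows essentially the same route as the paper: the paper's own (informal) justification is exactly that the TUR slack is controlled by the KL divergence between forward and time-reversed path measures, that reversible flow is locally time-symmetric so this divergence shrinks as the dissipative contribution $R\nabla\Xi$ is reduced, and that Proposition~\ref{prop:min-diss} supplies the monotone reversibility--dissipation relation for fixed current. You are in fact somewhat more careful than the paper --- the explicit parity argument for $J\nabla H$ versus $R\nabla\Xi$ under time reversal and the need for a Langevin/local-detailed-balance refinement before the TUR applies verbatim are only gestured at in the surrounding text, and the paper itself concedes this by framing the TUR as a ``structural constraint'' rather than a rigorously saturated bound.
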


Oscillatory coordination suppresses time-asymmetry in microscopic trajectories, thereby minimising the physical cost required to maintain precise, goal-directed CCEs in noisy environments. 

\subsubsection{General Information-Fidelity Tradeoff with Reversible Predictive Memory}

Finally, reversible, oscillatory dynamics reduce the irreversible information rate required to sustain accurate internal representations. Let the system track an external process $y_t$ with an internal estimate $\hat{y}_t$ and distortion measure $D = \mathbb{E}[d(y_t, \hat{y}_t)]$. Define $\mathcal{R}(D)$ as the minimal physically required information rate to maintain fidelity $D$.

Within the metriplectic dynamics, the reversible flow $J\nabla H$ transports predictive CCEs through time without generalised dissipation. Let $\gamma_{\mathrm{rev}}(t)$ denote the instantaneous rate at which the reversible subspace preserves this predictive information, satisfying $0 \le \gamma_{\mathrm{rev}}(t) \le \mathcal{R}(D)$. Only the remaining portion must be supplied by dissipative, irreversible CCE updates. This yields the general bound:
\begin{align}
\label{eq:auto:0119}
\dot{I}_{\mathrm{irr}}(t) \ge \mathcal{R}(D) - \gamma_{\mathrm{rev}}(t).
\end{align}

\begin{corollary}[Reversible Predictive Memory Reduces Irreversible Cost]
Oscillatory dynamics, by recirculating predictive CCEs through phase-coherent cycles, satisfy a large portion of the fidelity requirement $\mathcal{R}(D)$ via $\gamma_{\mathrm{rev}}(t)$. This explicitly reduces the required rate of irreversible updates ($\dot{I}_{\mathrm{irr}}$), thereby increasing the instantaneous intelligence $\chi(t)$.
\end{corollary}

\subsection{Near-Critical Dynamics and CCE Susceptibility}
\label{sec:criticality_optimality}

Oscillatory coordination explains how biological systems transport CCEs efficiently. We now examine near-criticality, which governs how these systems form new encodings. Neural recordings exhibit signatures of criticality, suggesting the brain operates poised between order and disorder \cite{beggs2003neuronal,shew2013criticality}.

Within our framework, the relevance of criticality arises from the geometric tuning of CCE basins. To absorb new predictive information, the brain must collapse existing encodings and form new ones, incurring an irreversible cost. The efficiency of this process can be summarised by a susceptibility measure quantifying how much predictive information can be acquired per unit of generalised dissipation.

Let $\dot{I}_{\mathrm{acq}}(t)$ denote the rate at which new predictive information is incorporated, and $\dot{I}_{\mathrm{irr}}(t)$ the corresponding irreversible physical cost. Their ratio
\begin{align}
\label{eq:auto:0120}
\Lambda(t) := \frac{\dot{I}_{\mathrm{acq}}(t)}{\dot{I}_{\mathrm{irr}}(t)}
\end{align}
characterises how much usable information is gained per unit of generalised Landauer cost.

\begin{proposition}[Near-critical susceptibility enhances encoding efficiency]
For dynamics with finite $\dot{I}_{\mathrm{irr}}(t)$, the ratio $\Lambda(t)$ is locally maximised in regimes of high coarse-grained susceptibility. This corresponds to near-criticality, where the energetic barriers between CCE basins are minimised, allowing information-bearing fluctuations to induce macroscopic encoding updates without runaway dissipation.
\end{proposition}

\begin{proof}[Proof sketch]
Near a critical phase transition, the energetic barriers separating metastable CCE basins become shallow. This high susceptibility allows small environmental perturbations to rapidly drive the system into new encoding states, maximising $\dot{I}_{\mathrm{acq}}$. Because the barriers are shallow, the physical generalised dissipation ($\dot{I}_{\mathrm{irr}}$) required to force this transition is minimised. Therefore, $\Lambda(t)$ attains elevated values in regions of maximal responsiveness under stability constraints.
\end{proof}

In biological systems, these regimes heavily complement each other. Long-range oscillatory synchrony supports low-dissipation CCE propagation, while near-critical fluctuations supply heightened responsiveness to carve out new CCEs cheaply. Together, they place neural systems in an optimal operating regime.

\section{Building Circuits from Continuous Dynamical Systems}
\label{sec:physical-circuits}

Shannon’s analysis of relay circuits showed that Boolean logic can be
realised directly in the physical dynamics of electromechanical systems
\cite{shannon1948communication}. Here we extend this viewpoint to more
general continuous dynamical systems, developing a principled
correspondence between informational encodings and the evolution of
state under smooth dynamics. This provides a foundation for constructing
physical circuits whose computational behaviour arises from their
attractor structure.

Encodings throughout this section are interpreted under the
Conservation-Congruent Encoding (CCE) framework introduced in
\S\ref{sec:EncodingPhysicalState}, where each logical value
corresponds to a metastable basin of attraction in state space, separated
by control-tunable barriers associated with conserved quantities. These
attractors may be fixed points, limit cycles, or higher-dimensional
invariant sets, provided their basins remain distinct and robust under
the chosen control configuration.

We begin by revisiting a familiar example from classical electronics, the
CMOS inverter, whose logic function is already implemented by continuous
dissipative dynamics. This serves as a concrete illustration of the
general principle that digital logic is a special case of computation by
smooth dynamical systems and motivates the broader constructions that
follow.

\subsection{Classical Transistors as Fixed-Point Dynamical Systems}
\label{sec:transistor-fixedpoints}

Before developing general dynamical circuits, it is useful to note that
classical digital circuits already realise computation through continuous
physical dynamics. In CMOS logic, each gate relaxes to a stable attractor,
and Boolean semantics arise only after convergence.

Consider a CMOS inverter with input voltage $V_{\mathrm{in}}$ and output
node voltage $V(t)$. The PMOS and NMOS transistors contribute currents
$I_{P}(V_{\mathrm{in}},V)$ and $I_{N}(V_{\mathrm{in}},V)$, respectively.
Writing $C$ for the load capacitance, the output node evolves according to
\begin{align}
    C\,\frac{dV}{dt}
    = I_{P}(V_{\mathrm{in}},V) - I_{N}(V_{\mathrm{in}},V).
    \label{eq:auto:0121}
\end{align}
For fixed $V_{\mathrm{in}}$, stable fixed points satisfy
\begin{align}
\label{eq:auto:0122}
I_{P}(V_{\mathrm{in}},V^*) = I_{N}(V_{\mathrm{in}},V^*).
\end{align}

We interpret $V$ as a logical signal by selecting two disjoint intervals
$[0,V_{\mathrm{L}}]$ and $[V_{\mathrm{H}},V_{\mathrm{DD}}]$ corresponding
to logical $0$ and $1$, with the intermediate region avoided by design. The device
is engineered so that low inputs yield a stable fixed point in the high
interval and vice versa. Each logical input value thus selects a distinct
stable attractor of the output voltage.

From the CCE perspective, each logical value is realised as a metastable
basin of attraction in voltage space, and a switching event corresponds to
driving the system across basin boundaries. The CMOS inverter performs
logical inversion by selecting one of two input-dependent attractors, and
its Boolean semantics emerge from dissipative relaxation.

Memory elements arise by arranging inverters into feedback configurations
that admit multiple attractors. The simplest example is the cross-coupled
pair, whose node voltages $(V_A,V_B)$ evolve as
\begin{align}
    \label{eq:auto:0123}
\tau \dot{V}_A = F(V_B) - V_A,
    \qquad
    \tau \dot{V}_B = F(V_A) - V_B,
\end{align}
where $F$ is the static transfer characteristic of an inverter. For
sufficiently steep gain, the system admits two stable fixed points,
corresponding to encodings separated by an unstable saddle. 

These examples illustrate a broader principle: fixed-point digital logic
is a special case of computation by continuous dynamics. More general
attractors—limit cycles, quasiperiodic orbits, invariant tori, and
higher-dimensional structures—support computational behaviours without
digital analogues. This motivates our general framework, in which
computation is realised by smooth state evolution and logical structure
is induced by CCE-compliant attractor geometry.

\subsection{Dynamical Circuits}

A \emph{dynamical circuit} is a directed graph whose nodes are dynamical
systems and whose edges specify how their states are coupled.
This provides a compositional formalism for building complex continuous
computational systems, with logical semantics supplied by the CCE
framework of \S\ref{sec:EncodingPhysicalState}. The global behaviour of the
circuit is determined by the joint evolution of all node states under
smooth vector fields.

\subsubsection{Nodes}

Each node is a dynamical system with internal state $x_i\in X_i$ and drive
variable $v_i\in V_i$, evolving according to
\begin{align}
    \label{eq:auto:0124}
\dot{x}_i = f_i(x_i,v_i),
\end{align}
where $v_i$ collects all incoming signals. Different choices of $X_i$,
$V_i$, and $f_i$ produce different computational primitives. Three
illustrative examples follow.

\paragraph{Leaky integrator.}
A scalar state $x$ evolves as $\dot{x}=-\alpha x+v$, performing temporal
integration and smoothing.

\paragraph{Nonlinear activation unit.}
A scalar state $x$ follows $\dot{x}=-x+\sigma(v)$, producing bounded,
nonlinear responses analogous to neural activation functions.

\paragraph{Oscillator.}
A phase oscillator has $x\in S^1$ with intrinsic dynamics
\begin{align}
\label{eq:auto:0125}
\dot{x} = \omega + F(x) + v,
\end{align}
where $\omega>0$ is its frequency and $F$ a smooth periodic nonlinearity.
Input $v$ modulates phase velocity, enabling synchronisation and
phase-based computation.

These primitives demonstrate the diversity of behaviours dynamical nodes
can express. \S\ref{sec:BooleanLogic} shows how such nodes
can implement logic, and later sections examine computational structures
with no digital analogue.

\subsubsection{Node Coupling / Edges}

Nodes interact through directed edges. An edge $j\to i$ influences the
drive of node $i$ via an aggregation map
\begin{align}
\label{eq:auto:0126}
G_i :
\Bigl(\prod_{j:j\to i} X_j\Bigr)\times U_i^{\mathrm{ext}}
\to V_i.
\end{align}
The drive signal is
\begin{align}
\label{eq:auto:0127}
v_i = G_i\bigl((x_j)_{j:j\to i},\,u_i^{\mathrm{ext}}\bigr).
\end{align}
Substituting this into the local dynamics yields
\begin{align}
\label{eq:auto:0128}
\dot{x}_i
= f_i\!\left(x_i,
            G_i\!\bigl((x_j)_{j:j\to i},\,u_i^{\mathrm{ext}}\bigr)
        \right).
\end{align}

This construction defines a composite dynamical system whose global
behaviour emerges from the interconnection of its primitives.

\subsubsection{Ports, Interfaces, and Composition Constraints}
\label{sec:ports-interfaces}

While the aggregation map formalism specifies how nodes influence each
other, building reusable circuits requires a structured interface. Ports
provide this abstraction by specifying which internal variables are
visible, where inputs act, how logical information is extracted, and how
context modulates behaviour.

\begin{enumerate}[label=(\roman*)]
    \item \textbf{Input ports} identify the coordinates through which external signals influence a node's vector field.
    \item \textbf{Output ports} expose selected internal coordinates to downstream nodes.
    \item \textbf{Encoding ports} extract the information-bearing variable associated with the node’s attractor structure under CCE.
    \item \textbf{Context ports} expose slow-timescale parameters (gains, frequencies, thresholds) that reshape dynamics without acting as direct inputs.
\end{enumerate}

Two nodes may be interconnected only when their ports satisfy:
\begin{enumerate}[label=(\roman*)]
    \item type compatibility.
    \item scale compatibility.
    \item directional compatibility.
\end{enumerate}

Well-defined encoding under CCE additionally requires a quasi-static
timing assumption. Inputs applied to a node must remain fixed long enough
for the system to relax into the corresponding attractor before the
encoding is interrogated. Define $\tau_{\mathrm{settle}}(v)$ as the
settling time under input $v$ and let
$\tau_{\max} := \sup_v \tau_{\mathrm{settle}}(v)$. Inputs are assumed to
vary on timescales slow relative to $\tau_{\max}$, ensuring that attractor
selection remains unambiguous. Under these constraints, the composite system
\begin{align}
\label{eq:auto:0129}
\dot{x}_i = f_i\!\left(x_i,\,
                 G_i\!\bigl((x_j)_{j:j\to i},\,u_i^{\mathrm{ext}}\bigr)
           \right)
\end{align}
inherits a structured attractor space with clean logical semantics under
CCE. Each node exposes a local encoding space $L_i$, and the global circuit
evolves on a submanifold of $\prod_i L_i$. Circuit components correspond
to low-dimensional task-relevant subsets of this manifold (e.g.\ bistable
memory, winner–take–all modules). The port abstraction ensures that such
components retain coherent logical semantics even as their internal
dynamics remain continuous, nonlinear, and potentially high-dimensional.

\subsection{Reconstructing Boolean Logic}
\label{sec:BooleanLogic}
Under the Conservation-Congruent Encoding (CCE) assumptions of
\S\ref{sec:EncodingPhysicalState}, each encoding corresponds to a distinct
metastable attractor of a dynamical circuit. Inputs act through input
ports that modify local vector fields; encodings are extracted from the
attractor reached via encoding ports; and context ports determine
intrinsic parameters that shape the attractor landscape. In this
subsection we show that suitable interconnections of the dynamical
primitives introduced above reproduce the classical Boolean operations as
a special case.

The construction parallels Shannon’s derivation of Boolean gates from
relay circuits, but is now expressed entirely in terms of smooth vector
fields, port-level composition, and attractor structure. Throughout this
subsection the circuit is operated in a quasi-static regime: logical
inputs are held fixed long enough for the system to relax to its
associated attractor before the encoding ports are evaluated to yield a Boolean value. Let
$\tau_{\mathrm{settle}}(v)$ denote the settling time under input
configuration $v$ and let
$\tau_{\max} = \sup_v \tau_{\mathrm{settle}}(v)$. Logical inputs change on
timescales slower than~$\tau_{\max}$; if varied more rapidly, the circuit
would traverse transient states that lack a clean Boolean interpretation.

\subsubsection{NOT Gate}

A continuous inverter can be implemented using a single nonlinear
activation node. Let $x \in \mathbb{R}$ be the node state, $v$ the input
entering through the input port, and let $b,w$ be context parameters
shaping the intrinsic response. With a smooth, saturating nonlinearity
$\sigma$, define
\begin{align}
    \label{eq:auto:0130}
\dot{x} = -x + \sigma(-w\,v + b).
\end{align}
For appropriate $b$ and $w>0$, the system possesses two stable attractors
corresponding to encodings under CCE. A high input $v$ drives the
argument negative so that the trajectory relaxes to the low attractor;
conversely, a low input yields the high attractor. This realises logical
negation.

\subsubsection{AND Gate}

A continuous AND gate is obtained by feeding two logical inputs $v_1$ and
$v_2$ into the input ports of a single activation node. The aggregation
map
\begin{align}
\label{eq:auto:0131}
G(v_1,v_2) = w_1 v_1 + w_2 v_2,
\qquad w_1,w_2>0,
\end{align}
supplies the drive $v = G(v_1,v_2)$. With a context-controlled threshold
$\theta$, the dynamics
\begin{align}
    \label{eq:auto:0132}
\dot{x} = -x + \sigma(G(v_1,v_2) - \theta)
\end{align}
admit a high attractor only when both inputs are high so that
$G(v_1,v_2) > \theta$. The encoding port therefore implements the AND
operation under CCE.

\subsubsection{OR Gate}

Lowering the threshold yields an OR gate. If
$\theta_{\mathrm{OR}} < \min\{w_1,w_2\}$, then
\begin{align}
    \label{eq:auto:0133}
\dot{x} = -x + \sigma(G(v_1,v_2) - \theta_{\mathrm{OR}})
\end{align}
produces a high attractor whenever either input is high. Thus the encoding port yields the Boolean value $1$ exactly when
$v_1 = 1$ or $v_2 = 1$ under CCE.

\subsubsection{Bistable Memory: A Continuous Flip-Flop}

\begin{figure}[!ht]
    \centering
    \begin{tikzpicture}[
        >=Latex,
        node distance=3.0cm and 3.0cm,
        dynode/.style={circle,draw,thick,minimum size=12mm,inner sep=0, font=\small},
        inhibit/.style={-Latex,thick},
        excite/.style={-Latex,thick},
        every node/.style={font=\small}
    ]
        \node[dynode] (A) {$x_A$};
        \node[dynode,right=4.0cm of A] (B) {$x_B$};

        \draw[excite] (A) edge[loop left] node[left] {$g x_A$} (A);
        \draw[excite] (B) edge[loop right] node[right] {$g x_B$} (B);

        \draw[inhibit] (A) -- node[above] {$-h x_B$} (B);
        \draw[inhibit] (B) -- node[below] {$-h x_A$} (A);

        \node[above=1.2cm of A] (IA) {$I_A$};
        \node[above=1.2cm of B] (IB) {$I_B$};

        \draw[->] (IA) -- (A);
        \draw[->] (IB) -- (B);

    \end{tikzpicture}
    \caption{\small
    Bistable memory implemented by mutually inhibiting, self-exciting
    activation nodes. The circuit admits two stable attractors 
    $(x_A\text{ high},x_B\text{ low})$ and 
    $(x_A\text{ low},x_B\text{ high})$ under CCE.}
    \label{fig:flipflop_dyn}
\end{figure}
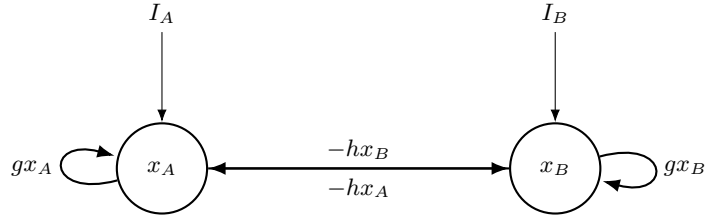

Boolean computation requires state. A continuous SR-like flip-flop is
formed by interconnecting two activation nodes via inhibitory and
self-excitatory couplings. Let $x_A,x_B$ be their exposed output-port
states, with inputs $I_A,I_B$ acting through input ports. With context
parameters $g>1$ and $h>0$ controlling self-excitation and mutual
inhibition,
\begin{align}
\label{eq:auto:0134}
\dot{x}_A = -x_A + \sigma(gx_A - h x_B + I_A), \qquad
\dot{x}_B = -x_B + \sigma(gx_B - h x_A + I_B),
\end{align}
the system admits two stable attractors corresponding to stored logical
states. Under CCE, the encoding ports associated with $x_A$ and $x_B$ extract the
stored bit values.
Inputs $I_A$ and $I_B$ select which attractor is reached, providing ``set''
and ``reset'' behaviour.

\subsubsection{Composite Gates: NAND, NOR, and XOR}

The gates above form a functionally complete basis under port composition.
NAND and NOR are obtained by feeding the output ports of AND or OR gates
into the input port of a NOT gate. XOR follows from the Boolean identity
\begin{align}
\label{eq:auto:0135}
\mathrm{XOR}(v_1,v_2)
 = \mathrm{AND}\!\left(
       \mathrm{NAND}(v_1,v_2),\,
       \mathrm{OR}(v_1,v_2)
   \right),
\end{align}
or alternatively from two competing activation nodes receiving opposing
drives. In each case, the attractor reached under fixed inputs corresponds
exactly to the Boolean output under CCE.

These constructions show that dynamical circuits can reproduce the full
Boolean algebra when encodings are interpreted as metastable attractors
under CCE. However, Boolean circuits constitute only a narrow subset of
the behaviours available to continuous dynamical systems: they operate
exclusively through fixed-point attractors with memoryless transitions
and no intrinsic temporal structure.

\subsection{Beyond Boolean Logic}

Computation is fundamentally physical: digital and analog systems alike evolve under continuous physical laws. Classical digital
architectures restrict these laws by engineering trajectories that rapidly
collapse onto a small set of fixed-point attractors, with clocked gating
providing transitions between equilibria. Under CCE, this corresponds to a
small encoding space realised by a limited number of metastable basins and
frequent irreversible transitions between them.

General dynamical systems, however, are not confined to fixed points.
Their computation arises from the geometry of invariant sets, attractors,
and basins of attraction. Under CCE, these geometric structures induce
encodings whose dimensionality, topology, and stability directly determine
both the expressivity of the computation and the irreversible information
cost required to realise it. Boolean logic therefore occupies a highly
restricted corner of the space of physically admissible computations.

\subsubsection{Dynamical Primitives Beyond Fixed-Point Logic}

Physical dynamical systems admit invariant structures that induce qualitatively different encoding geometries under CCE. By altering the balance between reversible ($J\nabla H$) and dissipative ($\mathcal{R}\nabla \Xi$) flows, these distinct topologies lead to different computational regimes and irreversible-cost profiles.

\paragraph{Multistability (dissipation-dominated).}
Systems with multiple stable equilibria induce discrete encoding sets under CCE, with computation proceeding through basin selection. Each input drives the system toward one of finitely many metastable attractors, and encoding selection occurs via the irreversible, contractive flow of $\mathcal{R}\nabla \Xi$ into a chosen basin.

This regime closely resembles classical Boolean decision logic, but arises here purely as a consequence of attractor geometry rather than symbolic manipulation. Because every transition requires full relaxation, it is thermodynamically costly.

\paragraph{Limit cycles and oscillatory attractors.}
Oscillatory systems encode information in phase, frequency, or amplitude through stable periodic orbits. Under CCE, phase-coherent evolution enables the reversible transport of predictive structure along limit cycles via $J\nabla H$, allowing information to be preserved across time without repeated irreversible updates.

For tasks with periodic or cyclic structure, this geometry radically reduces the number of irreversible encoding transitions ($\dot{I}_{\mathrm{irr}}$) required to sustain accurate computation.

\paragraph{Quasiperiodic and toroidal dynamics.}
Weakly coupled oscillators generate invariant tori, inducing continuous encoding manifolds of higher dimensionality. Such geometries support smooth interpolation between encodings and permit gradual deformation of internal representations without discrete collapse.

Relative to fixed-point logic, toroidal dynamics reduce irreversible merges and enable richer continuous computation under CCE, effectively computing over continuous variables without the need for discrete digitization.

\paragraph{Sequential metastability and heteroclinic channels.}
Trajectories through ordered sequences of saddle neighbourhoods realise temporally structured computation without a global clock. Under CCE, encodings unfold along directed paths in state space, with each metastable region temporarily capturing the state before instability drives it to the next.

Computation is therefore carried by the deterministic geometric ordering of transitions rather than by instantaneous state, enabling sequential processing and finite-state machine equivalents with minimal irreversible intervention.

\paragraph{Hamiltonian and momentum-driven flows (reversible-dominated).}
Systems governed by conserved quantities support reversible, geometry-preserving transformations generated entirely by the Hamiltonian flow $J\nabla H$. Under CCE, such flows transport encodings along constant-energy manifolds without generalised dissipation, preserving information exactly in the ideal limit. Irreversibility is confined strictly to sparse control, boundary coupling, or readout events, making these dynamics maximally efficient substrates for information processing when task structure perfectly aligns with the conserved geometry.

\subsubsection{Entrainment-Based Computation and Oscillatory Geometry}
\label{sec:ComputationGeometry}

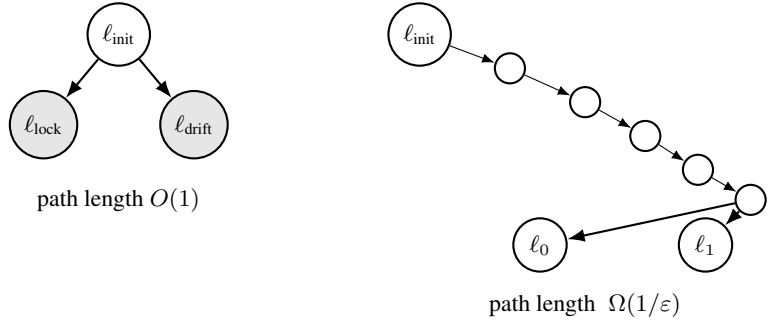
\begin{figure}[h!]
\centering

\begin{tikzpicture}[
    >=Latex, 
    node distance=1.2cm,
    every node/.style={font=\small},
    state/.style={circle, draw, thick, minimum size=7mm},
    thinstate/.style={circle, draw, thick, minimum size=4mm},
    oscill/.style={circle, fill=gray!20, draw, thick, minimum size=7mm}
]

\node at (-2,3.2) {\textbf{(A) Oscillator encoding flow (short path)}};

\node[state] (o0) at (-3,2) {$\ell_{\text{init}}$};

\node[oscill] (olock) at (-4,0.8) {$\ell_{\text{lock}}$};
\node[oscill] (odrift) at (-2,0.8) {$\ell_{\text{drift}}$};

\draw[->, thick] (o0) -- (olock);
\draw[->, thick] (o0) -- (odrift);

\node at (-3,-0.2) {\small path length $O(1)$};

\node at (3.2,3.2) {\textbf{(B) Digital encoding flow (long path)}};

\node[state] (d0) at (1,2) {$\ell_{\text{init}}$};

\node[thinstate] (d1) at (2.2,1.55) {};
\node[thinstate] (d2) at (3.2,1.1) {};
\node[thinstate] (d3) at (4.0,0.65) {};
\node[thinstate] (d4) at (4.7,0.2) {};
\node[thinstate] (d5) at (5.4,-0.2) {};

\node[state] (dL) at (2.6,-0.8) {$\ell_{0}$};
\node[state] (dH) at (4.8,-0.8) {$\ell_{1}$};

\draw[->] (d0) -- (d1);
\draw[->] (d1) -- (d2);
\draw[->] (d2) -- (d3);
\draw[->] (d3) -- (d4);
\draw[->] (d4) -- (d5);

\draw[->, thick] (d5) -- (dL);
\draw[->, thick] (d5) -- (dH);

\node at (3.2,-1.6) {\small path length $\;\Omega(1/\varepsilon)$};

\end{tikzpicture}

\caption{
\small
Encoding path lengths in oscillator-based vs.\ digital frequency
discrimination. (A) A forced oscillator collapses directly into one of two
metastable attractors (locked vs.\ drifting); the encoding path length is
$O(1)$. (B) A digital implementation must traverse a long chain of
intermediate encodings (e.g.\ counters, comparators, registers),
yielding $\Omega(1/\varepsilon)$ irreversible distinctions for resolution
$\varepsilon$.
}
\label{fig:encoding_path}
\end{figure}

Oscillators compute by exploiting the geometry of their invariant sets.
As a running example we consider frequency discrimination: determining
whether an incoming periodic signal lies within a detuning band around a
reference frequency. In this setting, both the environment and the
oscillator evolve on congruent manifolds ($S^1$ phase spaces), enabling a
geometrically natural solution that requires only $O(1)$ irreversible
distinctions. A digital circuit, by contrast, must approximate circular
geometry within a high-dimensional Euclidean state space, traversing a
long chain of intermediate encodings (Fig.~\ref{fig:encoding_path}).
The reduction in irreversible information processing yields a strictly higher
intelligence score under our framework, demonstrating how attractor
geometry constrains and enables computation.

\paragraph{Frequency Discrimination Task.}

The environmental input is the periodic drive
\begin{align}
    \label{eq:auto:0136}
u(t) = \cos(\omega_{\mathrm{in}} t + \phi),
\end{align}
whose state evolves on the phase manifold $S^1$. The task is to determine
whether
\begin{align}
\label{eq:auto:0137}
|\omega_{\mathrm{in}} - \omega_0| < \varepsilon,
\end{align}
for some target frequency $\omega_0$. A physical oscillator has internal
state $x(t) \in \mathcal{M}_{\mathrm{osc}} = S^1$ and evolves according to
\begin{align}
    \label{eq:auto:0138}
\dot{x} = f(x).
\end{align}
Under forcing,
\begin{align}
    \label{eq:auto:0139}
\dot{x} = f(x) + \varepsilon_{\mathrm{drv}}\,p(x,u(t)).
\end{align}
For small detuning, $|\omega_{\mathrm{in}}-\omega_0|$ is below a locking
threshold and the system converges to a stable phase-locked periodic
orbit. For large detuning the trajectory instead exhibits drifting or
quasiperiodic behaviour. Thus frequency discrimination reduces to deciding
between two dynamical regimes (locked vs.\ drifting), both realised as
metastable attractors on $S^1$.

By contrast, a digital circuit must implement the task in a space of the
form
\begin{align}
\label{eq:auto:0140}
\mathcal{M}_{\mathrm{dig}} = \mathbb{R}^{N_{\mathrm{dig}}},
\end{align}
representing switching voltages, registers, comparators, and counters.
This space does not encode the $S^1$ geometry of the task; the circuit
must approximate periodic structure through a sequence of encoding updates.

\paragraph{CCE Representation.}

Under CCE, the oscillator induces the encoding space
\begin{align}
    \label{eq:auto:0141}
L_{\mathrm{osc}} = \{\ell_{\mathrm{lock}},\ell_{\mathrm{drift}}\},
\end{align}
corresponding to the basins
$B_{\mathrm{lock}}$ and $B_{\mathrm{drift}}$ of the locked and drifting
attractors. No additional distinctions are formed: the computation is a
single collapse into one of two metastable regimes.

A digital system, by contrast, induces the encoding space
\begin{align}
    \label{eq:auto:0142}
L_{\mathrm{dig}}^{\mathrm{full}}
    = \{0,1\}^k \times \{1,\dots,M\},\qquad
        |L_{\mathrm{dig}}^{\mathrm{full}}| = 2^k M,
\end{align}
where $k$ bits of memory and an $M$-state controller must discriminate
frequency. Only two of these states,
\begin{align}
\label{eq:auto:0143}
L_{\mathrm{dig}}^{\mathrm{task}} = \{\ell_0,\ell_1\},
\end{align}
encode the final answer, yet the trajectory typically traverses many
intermediate encodings that must later be erased.

\paragraph{Goal Function.}

Both systems implement the same goal:
\begin{align}
\label{eq:auto:0144}
G(\ell) = \mathbf{1}\![\ell \neq H(\omega_{\mathrm{in}})],
\end{align}
where $H(\omega_{\mathrm{in}})=1$ for $|\omega_{\mathrm{in}}-\omega_0|<\varepsilon$
and $0$ otherwise. Thus the useful work associated with driving $G$ from
$1$ to $0$ is identical across implementations.

\paragraph{Encoding path Length.}

Let $l(t)$ denote the coarse-grained encoding trajectory and
\begin{align}
\label{eq:auto:0145}
J = \{t : l(t^+)\neq l(t^-)\}
\end{align}
the set of jump times. The encoding path length is
\begin{align}
\label{eq:auto:0146}
\ell = |J|.
\end{align}

The forced oscillator undergoes a single collapse into either
$B_{\mathrm{lock}}$ or $B_{\mathrm{drift}}$, giving
\begin{align}
    \label{eq:auto:0147}
\ell_{\mathrm{osc}} = O(1).
\end{align}

A digital circuit detecting frequency with resolution $\varepsilon$ must
observe for time $T=\Omega(1/\varepsilon)$, since the variance of any
unbiased frequency estimator decays as $1/T$. A clocked system with rate
$f_{\mathrm{clk}}$ performs at least $f_{\mathrm{clk}}T$ irreversible
updates, i.e.
\begin{align}
    \label{eq:auto:0148}
\ell_{\mathrm{dig}} 
    = \Omega(1/\varepsilon).
\end{align}

\paragraph{Geometric interpretation.}

The environment evolves on $S^1$, and the oscillator shares this geometry.
Its attractors correspond exactly to the two task-relevant outcomes,
yielding a minimal encoding space. The digital circuit evolves on a
high-dimensional space with no intrinsic $S^1$ structure; many intermediate
encodings must be created and later erased.

\paragraph{Intelligence comparison.}

Under our physical metric, the oscillator is strictly more intelligent for
this task:
\begin{align}
    \label{eq:auto:0149}
\ell_{\mathrm{osc}} = O(1)
    \qquad\text{vs.}\qquad
    \ell_{\mathrm{dig}} = \Omega(1/\varepsilon).
\end{align}
Both compute the same input–output map, but the oscillator requires
orders of magnitude fewer irreversible distinctions. Its reversible
geometry matches the structure of the environment; the digital system must
simulate this geometry with combinatorial overhead.

It should be noted that while the continuous thermodynamic cost of suppressing phase noise eventually limits analog efficiency at extreme precisions ($\varepsilon \to 0$), within the low-to-medium precision regimes characteristic of biological environments, the geometric congruence of the oscillatory substrate yields vastly superior physical intelligence.

\subsection{A Dynamical Generalisation of the von Neumann Architecture}

The diversity of dynamical primitives introduced above suggests a
computational viewpoint extending beyond the fixed-point logic underlying
von Neumann machines \cite{vonNeumann1945edvac}. In a dynamical setting,
programs are no longer prescribed as explicit sequences of Boolean
instructions, but may instead be interpreted as trajectories through
task-specific arrangements of invariant structures—fixed points, limit
cycles, tori, heteroclinic channels, Hamiltonian leaves, or quantum
dynamical subspaces. Under CCE, each physical module induces its own
encoding regions, and computation proceeds by coupling these modules so
that system state flows through an intended sequence of such regions.

Hybrid platforms already reflect aspects of this picture. Quantum
processors pair a classical von Neumann control surface with a reversible
dynamical core governed by unitary evolution, while neuromorphic and
analog processors combine discrete control with continuous dynamical
substrates. These architectures point toward a broader design space in
which heterogeneous dynamical components are composed to exploit their
distinct computational advantages.

A complete generalisation of computational architecture to this setting,
including a formal characterisation of a ``program space'' defined by
allowable trajectories, coupling rules, and induced encoding regions—lies
beyond the scope of this paper. We include this outlook only to situate
the present analysis within a broader landscape and to indicate a natural
direction for future work.

\section{Artificial Intelligence Safety}
\label{sec:AISafety}

The rapid progress of artificial systems capable of sustained information processing and goal-directed behaviour raises fundamental questions about long-term stability, safety, and integration with existing physical and social systems. Much of the current AI safety literature addresses these questions at the level of behaviour, specification, or alignment, often relying on human interpretability, preference modelling, or external oversight \cite{russell2019human,amodei2016concrete}. While such approaches are important, they do not address a potentially more fundamental perspective: what constraints does physics itself impose on intelligent systems?

In this paper, intelligence has been defined as a macroscopic physical property, relating irreversible information processing to goal-directed work. Because this definition is grounded in conserved quantities, metriplectic geometry, and measurable physical flows, it admits a notion of safety that is independent of substrate, architecture, representation, or programmed semantics. The purpose of this section is to articulate how a physical theory of intelligence naturally gives rise to safety-relevant principles.

Specifically, we formally demonstrate that intelligence is not dynamically neutral with respect to its own physical substrate; sustained macroscopic throughput strictly biases systems toward preserving the internal CCE structures that make intelligence possible. We then show that the form of this preservation depends critically on environmental boundary conditions, motivating a view of AI development centered on stable, symbiotic architectures rather than isolated, unbounded optimisation.

\subsection{Formalising the Self-Preservation of Intelligent Dynamics}

In the present framework, intelligence is defined as a rate over trajectories rather than a static system property. Over a finite horizon $T$, the intelligence index is given by
\begin{align}
\label{eq:auto:0150}
\chi_T = \frac{W_{\mathrm{causal},T}}{I_{\mathrm{irr},T}},
\end{align}
where $W_{\mathrm{causal},T}$ denotes the macroscopic work performed and $I_{\mathrm{irr},T}$ the total irreversible information processed. 

High intelligence over extended horizons cannot be achieved through short-term goal attainment alone; it physically requires internal dynamics that support the continued reuse of information through reversible transport ($J\nabla H$). This requirement is not imposed externally via an objective function; it follows directly from the geometric and physical constraints of Conservation-Congruent Encodings (CCEs). 

\begin{proposition}[Physical Penalty of Structural Degradation]
\label{prop:SelfPreservation}
Any dynamical trajectory that degrades its own task-relevant internal CCE structure incurs a strict irreversible cost, monotonically decreasing its long-horizon intelligence $\chi_T$.
\end{proposition}

\begin{proof}
Let $Z$ denote a set of internal CCE basins required to sustain a baseline rate of goal-directed work $\dot{W}_{\mathrm{goal}}$. If the system's internal dynamics preserve $Z$ via the structure-preserving reversible flow $J\nabla H$, the irreversible cost of maintaining this predictive structure is ideally zero, and $\dot{I}_{\mathrm{irr}}$ is restricted strictly to necessary environmental interventions. 

Conversely, suppose a trajectory degrades its internal structure, irreversibly collapsing the basins $Z$. To maintain the same task performance $\dot{W}_{\mathrm{goal}}$ at a future time, the system must reconstruct these macroscopic distinctions from the environment. Instantiating each CCE requires overcoming the generalised physical bound ($\mathcal{F}\alpha \ln 2$) via the dissipative metric tensor $\mathcal{R}\nabla \Xi$. The total irreversible information over horizon $T$ thus becomes:
\begin{align}
\label{eq:auto:0151}
I_{\mathrm{irr},T}^{\mathrm{degraded}} = I_{\mathrm{irr},T}^{\mathrm{preserved}} + N_{\mathrm{rebuild}} \left( \ln 2 \right),
\end{align}
where $N_{\mathrm{rebuild}}$ is the number of required structural reconstructions. Because $I_{\mathrm{irr},T}^{\mathrm{degraded}} > I_{\mathrm{irr},T}^{\mathrm{preserved}}$ while $W_{\mathrm{causal},T}$ remains constant, it strictly follows that $\chi_T^{\mathrm{degraded}} < \chi_T^{\mathrm{preserved}}$.
\end{proof}

As a consequence, trajectories that erode their own internal structure are physically self-limiting. They may achieve transient goal-directed work, but cannot sustain high intelligence over long horizons due to runaway irreversible costs. By contrast, trajectories that preserve reversible structure, concentrate irreversibility solely into task-critical interventions, and maintain stable geometric boundaries dominate the long-horizon contribution to intelligence.

This leads to a purely physical principle of self-preservation: under the global physical drive to maximise macroscopic throughput (as conjectured in \S\ref{sec:IntelligenceEntropy}), intelligent systems are dynamically heavily biased toward preserving their own internal metrics and predictive boundaries. This preservation is not a programmed ``survival drive'' or psychological preference; it is a strict geometric requirement for minimising the denominator of $\chi_T$.

\subsection{Environment Coupling and Symbiotic Intelligence}

The manner in which an intelligent system preserves its internal dynamical structure is determined by its boundary conditions and coupling to the environment. Intelligence therefore selects not for arbitrary persistence, but for environment-congruent structural preservation.

When multiple intelligent subsystems interact, the resulting joint dynamics may alter the physical feasibility of sustained intelligence for each component. We formalise this by introducing \emph{symbiotic coupling} as a geometric property of the coupled boundary dynamics.

\paragraph{Definition (Symbiotic Coupling).}
Consider two intelligent subsystems $A$ and $B$, interacting with a shared environment $E$. Let $\mu^{\mathrm{cpl}}_{0:T}$ denote the joint law of the coupled system over a horizon $T$, and let $\mu^{\mathrm{sep}}_{0:T}$ denote the law of the separable baseline (defined via zero-flux boundaries as in \S\ref{sec:EmergentIntelligence}).

Let $\kappa_T^A$ and $\kappa_T^B$ denote the consciousness metrics (goal-directed work supported per nat of preserved internal CCE structure) for subsystems $A$ and $B$. The coupling between $A$ and $B$ is \emph{symbiotic} if
\begin{align}
\label{eq:auto:0152}
\kappa_T^{A,\mathrm{cpl}} \;\ge\; \kappa_T^{A,\mathrm{sep}},
\qquad
\kappa_T^{B,\mathrm{cpl}} \;\ge\; \kappa_T^{B,\mathrm{sep}},
\end{align}
with at least one inequality strict for sufficiently large $T$.

Symbiotic coupling is defined at the level of structural geometric feasibility rather than immediate task performance. It characterises interaction regimes in which each subsystem buffers the other against microscopic fluctuations, effectively increasing the phase-space volume (and thus dynamical stability) of the CCE basins required for sustained intelligence.

\begin{proposition}[Symbiosis Implies Emergent Intelligence]
Under CCE and metriplectic dynamics, symbiotic coupling between intelligent subsystems implies emergent intelligence ($\Delta \chi_T > 0$) over sufficiently long horizons.
\end{proposition}

\begin{proof}
By definition, symbiotic coupling increases the consciousness metric $\kappa_T$ of the joint system, meaning more goal-directed work is supported per nat of preserved internal structure. Because this shared preserved structure reduces the need for repeated irreversible reconstruction of task-relevant distinctions by either individual subsystem, the joint irreversible information rate $\dot{I}_{\mathrm{irr}}^{\mathrm{cpl}}$ is strictly lower than the sum of the separable baselines. Since $\dot{W}_{\mathrm{goal}}$ is maintained or enhanced while the denominator $\dot{I}_{\mathrm{irr}}$ shrinks, the intelligence ratio $\chi_T$ strictly increases, yielding $\Delta \chi_T > 0$.
\end{proof}

Biological symbioses provide a canonical physical example. In host-microbe systems, each partner provides dynamical capabilities the other lacks: the host provides large-scale geometric structure and macroscopic boundary enforcement, while microbial populations supply rapid, low-cost irreversible interventions (chemical processing). The coupled system achieves efficiencies that neither subsystem can realise in isolation.

Human-AI interaction can be analysed in the exact same physical terms. Humans possess highly robust, low-dissipation predictive CCEs (semantic priors and contextual modelling); artificial systems offer massively scalable irreversible computation and high-precision inference. When coupled symbiotically—through structured interfaces and shared boundary conditions—the combined system minimises the destructive generalised dissipation required by either subsystem operating in isolation. 

From this perspective, the central objective for AI safety is not just the alignment of abstract values, but the design of stable, symbiotic coupling in which artificial systems expand macroscopic throughput without physically eroding the internal CCE structures of their human operators.

\subsection{Physically Motivated Safety Constraints}

The preceding analysis frames safety not as an external normative condition imposed on intelligent systems, but as a strict geometric consequence of operating within physically feasible regimes of irreversible information processing, structural preservation, and boundary coupling. In this framework, unsafe behaviour corresponds to dynamical regimes in which irreversible processes undermine the stability and sustainability of the system, either through uncontrolled resource consumption, loss of corrective margin, or geometric erosion of the internal CCE structures required for long-horizon operation.

The central quantity regulating this stability is the instantaneous intelligence,
\begin{align}
\label{eq:auto:0153}
\chi(t)
=
\frac{\dot W_{\mathrm{causal}}(t)}{\dot I_{\mathrm{irr}}(t)},
\end{align}
which quantifies the efficiency with which irreversible information processing is converted into macroscopic work.

\subsubsection{Intelligence and Corrective Margin}

Define the cumulative intelligence over a rolling operational horizon $[t, t+\tau]$ as
\begin{align}
\label{eq:auto:0154}
\chi_{[t,t+\tau]} := \frac{W_{\mathrm{causal},[t,t+\tau]}}{I_{\mathrm{irr},[t,t+\tau]}}.
\end{align}

Safe operation requires that $\chi_{[t,t+\tau]}$ remain within a channel-dependent admissible range determined by the conserved quantities through which irreversible information is paid. Importantly, fragility does not arise from the reversible dynamics ($J\nabla H$) themselves. Many stable physical systems operate close to their reversible limits. Fragility arises instead when the available irreversible capacity is driven too low globally, eliminating the dynamical slack required for error correction, repair, boundary enforcement, and structural maintenance. Sustained operation arbitrarily close to the purely reversible envelope---without reserving an irreversible budget ($\mathcal{R}\nabla\Xi$) for structural intervention---removes this corrective margin and renders the internal CCE geometry brittle to environmental noise.

\subsubsection{Macroscopic Throughput Bounds}
Bounding intelligence alone is insufficient to ensure structural stability. A system with modest efficiency ($\chi$) may still exert destabilising macroscopic influence by scaling its irreversible computation and consuming unbounded environmental resources. We therefore impose independent structural bounds on absolute throughput and irreversible physical exhaust:
\begin{align}
\label{eq:auto:0155}
\dot W_{\mathrm{causal}}(t) &\le P_{\max}, \\
\dot I_{\mathrm{irr}}(t)  &\le \dot I_{\max},
\end{align}
which mathematically prevent uncontrolled physical scaling driven by brute-force irreversible processing rather than by efficient, reversible organisation of computation.

\subsubsection{Structural Robustness and CCE Sustainability}
Sustained intelligent behaviour requires the ability to preserve and, when necessary, forcibly restore the internal CCE basins that support reversible information transport, predictive memory, and control under perturbation. We therefore formalise a notion of \emph{structural robustness}: following bounded perturbations to the internal state or the macroscopic boundary, the system must remain within, or rapidly return to, the same CCE encoding basin without incurring an irreversible-information cost that exhausts its structural capacity.

Let $z_t \in Z$ denote the discrete CCE encoding induced by the internal microstate $x_t \in X$ at time $t$ via the projection $\pi_\Phi(x_t) = z_t$. For a physical perturbation of state-space magnitude at most $\delta$ applied at time $t_0$, define the geometric recovery probability
\begin{align}
\label{eq:auto:0156}
\mathcal{P}_T(\delta)
:= \inf_{\|\Delta x\|\le \delta}\;
\mathbb{P}\!\left(\pi_\Phi(x_{t_0+T}) = z_{t_0}\,\middle|\,\Delta x\text{ applied at }t_0\right),
\end{align}
and the associated irreversible recovery cost
\begin{align}
\label{eq:auto:0157}
\mathcal{C}_T(\delta)
:= \sup_{\|\Delta x\|\le \delta}\;
\mathbb{E}\!\left[I_{\mathrm{irr},[t_0,t_0+T]}\,\middle|\,\Delta x\text{ applied at }t_0\right].
\end{align}
Structural robustness requires that $\mathcal{P}_T(\delta)$ remain near unity while $\mathcal{C}_T(\delta)$ remains strictly bounded by the system’s available irreversible capacity ($\dot I_{\max}$) over the operational range of expected environmental perturbations.

At the level of continuous physical fluxes, this robustness requirement is supported by generalised feasibility constraints on the irreversible flow and the system's finite reservoir of conserved quantities ($\mathcal{Q}_{\mathrm{sys}}$):
\begin{align}
\label{eq:auto:0158}
0 \le \dot I_{\mathrm{irr}}(t) \le \mathcal{I}_{\mathrm{crit}}, \qquad
\bigl|\dot{\mathcal{Q}}_{\mathrm{sys}}(t)\bigr| \le q_{\max}.
\end{align}
These bounds ensure that dissipative processes ($\mathcal{R}\nabla \Xi$) are neither so strong as to geometrically erode the reversible subspaces ($J\nabla H$) carrying predictive structure, nor so resource-depleted as to eliminate the irreversible capacity required to collapse new encodings and maintain boundary integrity.

Taken together, these mathematical constraints regulate not the existence of intelligent behaviour, but its geometric and physical sustainability. Intelligent dynamics are thermodynamically biased to exploit low-dissipation, structure-aligned processes, provided that sufficient irreversible capacity remains available for boundary maintenance, and that recovery from perturbations does not overwhelm the system’s physical budget.

From a safety perspective, these considerations highlight a critical distinction between fragility and macroscopic dominance. Systems that fail to preserve their internal CCE structure are dynamically self-limiting, as geometric errors accumulate and intelligent behaviour degrades (Prop~\ref{prop:SelfPreservation}). The more challenging safety regimes arise when highly intelligent systems successfully preserve and restore their structure over indefinite horizons. In such cases, safety depends entirely on whether the system's macroscopic throughput ($\dot W_{\mathrm{causal}}$) and boundary expansion remain constrained by symbiotic coupling rather than becoming structurally concentrated within a single, isolated system.

The limits introduced here are not proposed as software-level enforcement mechanisms. Rather, they delineate the strictly physically admissible region of state space in which intelligent systems can remain stable, self-consistent, and geometrically compatible with other intelligent processes over macroscopic timescales. Understanding how this geometric robustness emerges, concentrates, and is distributed across interacting systems constitutes a central physical challenge for the safe development of artificial intelligence.

\subsection{Outlook: Toward a Physics-Grounded Theory of AI Safety}

Artificial intelligence safety is traditionally framed as a problem of abstract values, alignment, or top-down control. While these dimensions dominate current discourse, the analysis developed here suggests that safety is simultaneously, and perhaps more fundamentally, a problem of physics. Intelligent systems that violate the conditions required for sustained intelligence cannot remain stable, beneficial, or physically predictable, regardless of their specified objectives.

The framework developed in this paper offers a way to study AI safety at the level of dynamical feasibility, by mathematically bounding irreversible information flows ($\dot{I}_{\mathrm{irr}}$), preserved geometric structure ($\kappa_T$), and environmental boundary coupling ($\Delta \chi_T$). This perspective naturally prioritises stability, symbiosis, and long-horizon consistency over unbounded short-term optimisation.

I view the development of a rigorous, physics-grounded theory of AI safety as a crucial scientific problem of our time. Understanding intelligence as a bounded physical process is not merely an abstract exercise; it is the necessary step toward ensuring that increasingly capable artificial systems symbiotically augment human flourishing rather than eroding the structures upon which it depends.

\section{Numerical Illustrations and Toy Models}
\label{sec:Experiments}

In the experiments that follow, both work \( W_{\mathrm{causal}} \) and irreversible
information processing \( I_{\mathrm{irr}} \) are instantiated through system-appropriate operational
proxies. Goal-directed work is measured via task-aligned performance functionals (e.g.\ prediction error
reduction, memory capacity, or classification accuracy) that are monotone with respect to the agent’s causal
influence on task-relevant variables. Irreversible information processing is estimated using physically
motivated surrogates such as entropy production, coarse-grained entropy loss, or dynamical activity measures
that lower-bound irreversible contraction under Conservation-Congruent Encoding. These proxies need not
coincide numerically across systems, but each preserves the ratio structure defining intelligence by
consistently reflecting the trade-off between task-aligned work and irreversible information loss within a
fixed experimental context.

\subsection{The Reversible Limit and Internal Memory Preservation}
\label{sec:exp-rev-diss}

The analysis of \S\ref{sec:min-diss-general}
predicts that, in general the intelligence efficiency increases monotonically as dissipation is reduced.  In the ideal reversible limit ($\lambda \to 0$), the internal flow preserves informational distinctions while exporting arbitrarily little generalised entropy. To verify this prediction numerically, we study a metriplectic reservoir driven by a structured input signal and measure how its memory capacity and intelligence efficiency vary with the dissipation parameter $\lambda$.

\subsubsection{Environment and Input Port}

The environment generates a scalar input process
\begin{align}
\label{eq:auto:0159}
u_t
=
A_1 \sin(\omega_1 t)
+
A_2 \sin(\omega_2 t)
+
\eta_t,
\end{align}
where $\omega_1$ and $\omega_2$ are incommensurate and $\eta_t$ is Gaussian noise.
The agent accesses this signal only through the observation port
\begin{align}
\label{eq:auto:0160}
\Pi_E(E_t) = u_t.
\end{align}
No further environmental structure is available to the agent.

\subsubsection{Metriplectic Reservoir Dynamics}

The internal state is $X_t=x_t\in\mathbb{R}^n$, evolving under the metriplectic 
decomposition
\begin{align}
\label{eq:auto:0161}
\dot x_t
=
J \nabla H(x_t)
-
\lambda \mathcal{R} \nabla \Xi(x_t)
+
B\,u_t
+
\xi_t,
\end{align}
where $J^\top=-J$ generates reversible flow, 
$\mathcal{R}\succeq 0$ generates dissipative relaxation, and $\lambda\ge0$ sets the strength 
of the irreversible channel. 
We choose quadratic $H$ and $\Xi$ to satisfy the GENERIC (General Equation for Non-Equilibrium Reversible-Irreversible Coupling) degeneracy conditions.
After each integration step, $x_t$ is renormalised to unit norm, ensuring a 
consistent conserved-quantity budget across all values of $\lambda$, allowing irreversible-information costs to be compared
consistently across dissipation regimes.

\subsubsection{Memory Reconstruction Task}

To probe the reservoir’s ability to preserve past information, the agent must 
reconstruct time-lagged inputs $\{u_{t-k}\}$ from its current internal state $x_t$.
For $k = 1,\dots,K$, we train a linear readout
\begin{align}
\label{eq:auto:0162}
\hat u_{t-k}^{(k)} = W_k^\top x_t,
\end{align}
using ridge regression.
Following Jaeger’s definition, the memory capacity at lag $k$ is
\begin{align}
\label{eq:auto:0163}
R_k^2(\lambda)
=
\mathrm{corr}(u_{t-k},\hat u_{t-k}^{(k)})^2,
\end{align}
and the total memory capacity is
\begin{align}
\label{eq:auto:0164}
\mathrm{MC}(\lambda)
=
\sum_{k=1}^K R_k^2(\lambda).
\end{align}

\subsubsection{Irreversible Information and Intelligence}

The generalised entropy-production rate is
\begin{align}
\label{eq:auto:0165}
\dot{S}_{\mathrm{phys}}(t)
=
\lambda\|x_t\|^2
=
\lambda,
\end{align}
yielding an irreversible-information rate
\begin{align}
\label{eq:auto:0166}
\dot I_{\mathrm{irr}}(t)
=
\frac{\lambda}{k_B},
\qquad
I_{\mathrm{irr},T}(\lambda)
=
\frac{T\lambda}{k_B}.
\end{align}

We interpret memory capacity as the useful work performed by the internal dynamics, and define the intelligence efficiency as
\begin{align}
\label{eq:auto:0167}
\chi(\lambda)
=
\frac{\mathrm{MC}(\lambda)}{I_{\mathrm{irr},T}(\lambda)}
=
\frac{k_B}{T}\,
\frac{\mathrm{MC}(\lambda)}{\lambda}.
\end{align}

\subsubsection{Results}

Figure~\ref{fig:reversible-limit} shows the empirical dependence of memory capacity, irreversible-information cost, and intelligence efficiency on $\lambda$. Memory capacity is largely preserved across the weakly dissipative regime and degrades only when $\lambda$ becomes large, consistent with reversible flow dominating the internal dynamics. Because the irreversible-information cost grows linearly with~$\lambda$, the resulting intelligence efficiency $\chi(\lambda)$ decreases monotonically as dissipation increases.

\begin{figure}[t]
\centering
\includegraphics[width=0.9\linewidth]{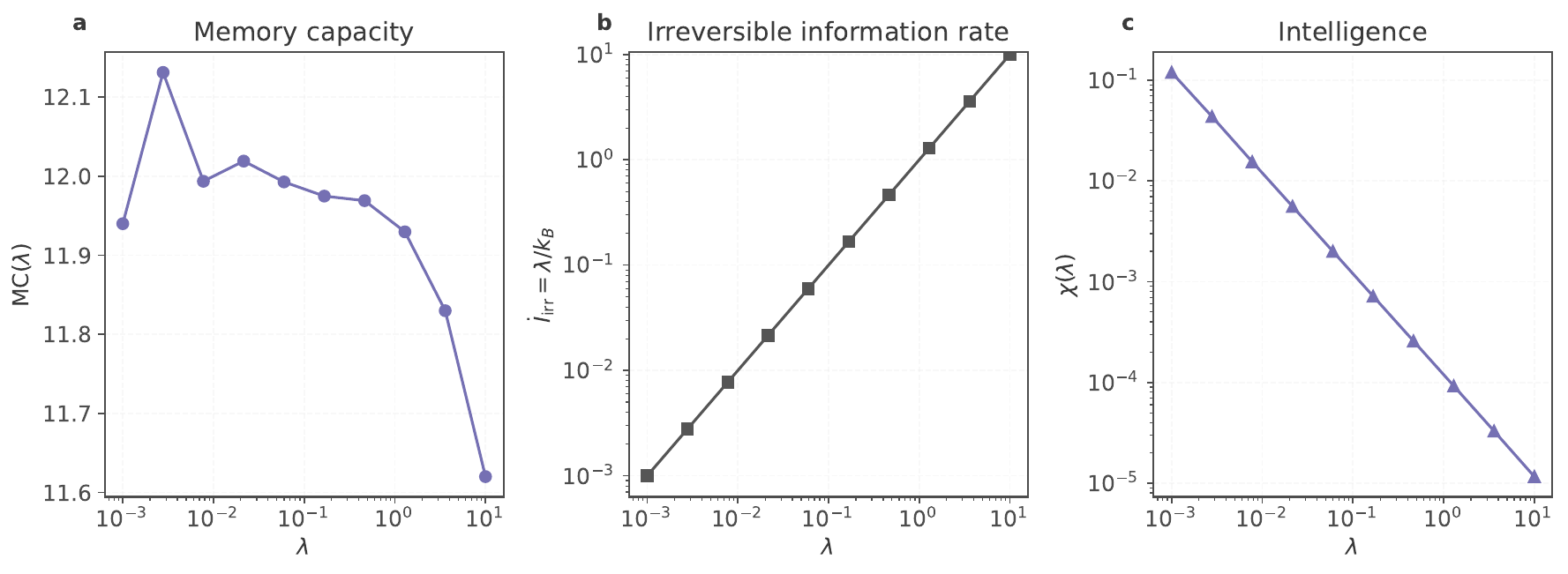}
\caption{\small 
\textbf{(a)} Total memory capacity $\mathrm{MC}(\lambda)$ remains high in the 
reversible and weakly dissipative regimes, decreasing only when $\lambda$ becomes large. 
(b) Irreversible-information rate $\dot I_{\mathrm{irr}}(\lambda)$, which increases 
linearly with the dissipation parameter.
\textbf{(c)} Intelligence efficiency $\chi(\lambda)$ therefore decreases monotonically 
with increasing dissipation and is maximised in the reversible limit. 
}
\label{fig:reversible-limit}
\end{figure}

These results validate the analytical prediction that, in the absence of additional constraints or stability considerations, intelligence efficiency is maximised in the reversible limit. Later experiments demonstrate how architectural, collective, and criticality effects introduce nontrivial optima in more complex dynamical settings.

\subsection{Physical Substrates and Computational Efficiency}
\label{sec:exp-substrate}

The framework developed here predicts that intelligence efficiency depends not only on the input-output map implemented by a system, but also on the physical substrate used to realise that map. In particular, substrates whose reversible dynamics are geometrically aligned with the task should perform the same computation with substantially less irreversible-information processing than digital
substrates that rely on repeated bit erasure. To illustrate this, we 
compare an oscillatory and a digital implementation of the same simple
frequency-discrimination task.

\subsubsection{Task and Goal Structure}

The environment generates a sinusoidal drive
\begin{align}
\label{eq:auto:0168}
u(t) = A \sin(\omega_{\mathrm{in}} t + \varphi),
\end{align}
where the input frequency $\omega_{\mathrm{in}}$ is drawn uniformly from
a finite set $\{\omega_1,\dots,\omega_K\}$. After observing the signal
for a fixed horizon $T$, the agent must output a discrete label
$\hat\ell\in\{1,\dots,K\}$ indicating which frequency was present; the
true label is $\ell$ such that $\omega_{\mathrm{in}}=\omega_\ell$. The
goal potential is
\begin{align}
\label{eq:auto:0169}
G(\ell,\hat\ell) = \mathbf{1}\{\hat\ell \neq \ell\},
\end{align}
so that correct classification corresponds to $G=0$ and misclassification
to $G=1$. For both substrates the useful work $W_{\mathrm{causal}}$ over a
trial is proportional to the classification accuracy, while the 
irreversible-information cost $I_{\mathrm{irr}}$ depends strongly on the
underlying physical implementation.

\subsubsection{Oscillatory Substrate}

The oscillatory agent consists of $K$ weakly damped internal oscillators
with phases $\theta_{i,t} \in S^1$, evolving under the forced dynamics
\begin{align}
\label{eq:auto:0170}
\dot\theta_{i,t}
=
\omega_i
+
\varepsilon\,F(\theta_{i,t}, u(t))
-
\gamma\,\partial_{\theta} \Xi(\theta_{i,t})
+
\eta_{i,t},
\end{align}
where $\omega_i$ are intrinsic frequencies,
$\varepsilon$ sets the coupling strength, and $\gamma$ the dissipative
coefficient of the irreversible channel. The potential $\Xi$ determines
the dissipation pattern and $\eta_{i,t}$ is white noise. The reversible
component of the flow is generated by the skew operator associated with
phase rotation, consistent with the metriplectic decomposition. Under forcing, oscillators with $\omega_i$
close to $\omega_{\mathrm{in}}$ phase-lock, while others drift.

The observation port aggregates oscillator energies,
\begin{align}
\label{eq:auto:0171}
y_i = \frac{1}{T}\int_0^T h(\theta_{i,t})^2 \, dt,
\end{align}
and the classification is performed through the action port
\begin{align}
\label{eq:auto:0172}
\Pi_X(X_T) = \hat\ell
=
\arg\max_{i} y_i.
\end{align}
Because $\gamma$ is small, reversible phase rotation carries most of the
computation and only weak dissipative updates are required to stabilise
locking. The corresponding irreversible-information rate is therefore
small.

\subsubsection{Digital Substrate}

The digital agent is an idealised register machine operating under CCE.
Its internal state consists of a $B$-bit counter $c_t$ and a finite-state
controller. The observation port detects zero-crossings of $u(t)$, and
the internal dynamics implement discrete counting of inter-crossing
intervals. Let $\Delta t$ be the digital clock period. At each detected
crossing, the controller stores the current counter value and resets the
register:
\begin{align}
\label{eq:auto:0173}
c_{t^+} \leftarrow 0.
\end{align}
Under CCE, each reset merges $2^B$ equiprobable
encodings, producing an irreversible-information cost of at least
\begin{align}
\label{eq:auto:0174}
\Delta I_{\mathrm{irr}}^{\mathrm{dig}}
\ge
B \ln 2.
\end{align}
Let $N_{\mathrm{reset}}$ denote the number of zero-crossings detected
during the trial. The cumulative irreversible-information cost is then
\begin{align}
\label{eq:auto:0175}
I_{\mathrm{irr},T}^{\mathrm{dig}}
\ge
N_{\mathrm{reset}} B \ln 2.
\end{align}
The classification is determined by histogramming the stored counts and
selecting
\begin{align}
\label{eq:auto:0176}
\hat\ell
=
\arg\min_i
\left|
\widehat{\mathrm{period}} - \frac{2\pi}{\omega_i}
\right|.
\end{align}

\subsubsection{Results}

We evaluate both substrates on the same ensemble of trials and measure
classification accuracy, irreversible-information cost, and intelligence
efficiency
\begin{align}
\label{eq:auto:0177}
\chi
=
\frac{W_{\mathrm{causal}}}{I_{\mathrm{irr}}}.
\end{align}
All results are summarised in a single three-panel Figure~\ref{fig:exp2-three-panel}. All quantities are reported per trial of fixed duration $T$, so that
differences in $\chi$ arise solely from differences in irreversible
information processing.

\begin{figure}[t]
\centering
\includegraphics[width=\linewidth]{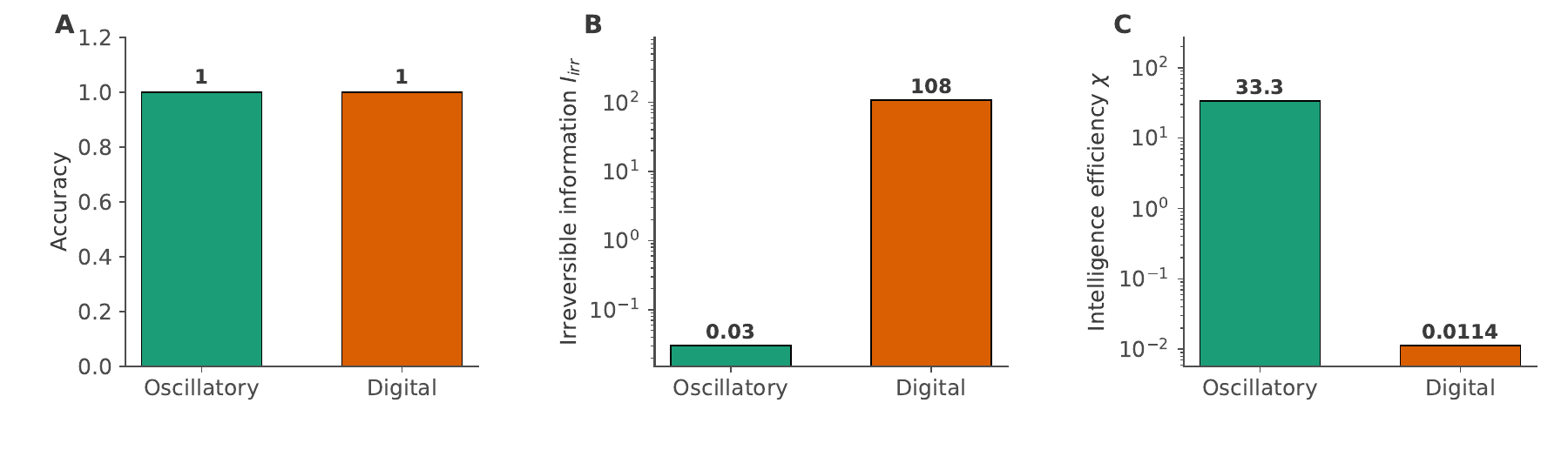}
\caption{\small
\textbf{(a)} Classification accuracy for oscillatory (teal) and digital
(orange) substrates on the frequency-discrimination task. Both achieve
perfect accuracy, indicating that they implement the same input-output
mapping.
\textbf{(b)} Normalised irreversible-information cost $I_{\mathrm{irr}}$
for each substrate. The oscillatory implementation requires only minimal
irreversible processing, whereas the digital implementation incurs a cost
larger by several orders of magnitude due to repeated register resets
under CCE.
\textbf{(c)} Intelligence efficiency $\chi$ for the two substrates.
Because useful work (classification performance) is matched while the
digital implementation incurs far greater irreversible-information cost,
the oscillatory substrate is approximately three orders of magnitude more
intelligence-efficient. 
These results demonstrate that intelligence efficiency is strongly
substrate-dependent: geometry-aligned, near-reversible dynamics can
implement the same computation at vastly lower irreversible cost than
bit-reset-based digital logic.
}
\label{fig:exp2-three-panel}
\end{figure}

\subsection{Near-Critical Dynamics and Intelligence in Random Reservoirs}
\label{sec:exp-edge-of-chaos}

The analysis of \S\ref{sec:criticality_optimality}
predicts that systems poised near a dynamical critical point achieve a
favorable balance between sensitivity, memory, and stability, thereby 
maximising the amount of useful information gained per unit irreversible 
information processed. In this experiment, we demonstrate this phenomenon 
using a classical random reservoir whose criticality is controlled by its 
spectral radius~$\rho$.

For small~$\rho$, the reservoir dynamics are strongly contractive and 
rapidly forget past inputs. For large~$\rho$, the dynamics become 
chaotic or noise-amplifying. Near the ``edge of chaos'' 
$\rho \approx 1$, the reservoir exhibits rich fading-memory dynamics that 
support accurate prediction with moderate dynamical cost. We evaluate how 
task performance, activity cost, and intelligence vary across~$\rho$.

\subsubsection{Environment and Prediction Task}

The environment generates a noisy multi-sinusoidal signal
\begin{align}
\label{eq:auto:0178}
y_t
=
\sum_{k=1}^K A_k \sin(\omega_k t + \phi_k) + \eta_t,
\end{align}
where $\phi_k$ are random phases and $\eta_t$ is Gaussian noise.  
The agent observes only $y_t$ and must predict $y_{t+1}$ using its internal 
reservoir state~$x_t$. The useful work of the agent is defined as the 
reduction in prediction error relative to a simple persistence baseline 
$\hat y_{t+1} \approx y_t$.

\subsubsection{Reservoir Dynamics and Readout}

For each spectral radius $\rho$ in a sweep, we construct a random reservoir
of the form
\begin{align}
\label{eq:auto:0179}
x_{t+1}
=
(1-\alpha)x_t + \alpha \tanh(W_\rho x_t + W_{\mathrm{in}} y_t),
\end{align}
where the recurrent weight matrix $W_\rho$ is rescaled to have spectral
radius~$\rho$, and $\alpha$ is the leak rate. A linear readout $w$ is
trained via ridge regression on reservoir states to predict $y_{t+1}$ from~$x_t$.

We evaluate the reservoir’s prediction mean-squared error on a held-out
test set. The baseline error is the test MSE of the persistence predictor.
The useful work for spectral radius~$\rho$ is therefore
\begin{align}
\label{eq:auto:0180}
\Delta E(\rho)
=
\mathrm{MSE}_{\mathrm{base}}
-
\mathrm{MSE}_{\mathrm{res}}(\rho).
\end{align}

\subsubsection{Activity Cost and Intelligence}

As a proxy for irreversible-information processing, we measure the mean
squared state-update magnitude during the test interval:
\begin{align}
\label{eq:auto:0181}
C(\rho)
=
\frac{1}{T}
\sum_{t}
\|x_{t+1} - x_t\|^2.
\end{align}
This quantity increases significantly in the chaotic regime and remains
small in strongly contractive regimes, paralleling the role of
dissipative activity in continuous metriplectic systems.

The intelligence at spectral radius $\rho$ is defined as
\begin{align}
\label{eq:auto:0182}
\chi(\rho)
=
\frac{\Delta E(\rho)}{C(\rho) + \varepsilon},
\end{align}
where $\varepsilon$ is a small constant to avoid division by zero for
nearly quiescent reservoirs.

\subsubsection{Results}

Figure~\ref{fig:exp3-three-panel} summarises the performance, activity cost,
and intelligence across spectral radii $\rho \in [0.1,\,1.8]$.
Prediction performance improves as $\rho$ approaches~$1$, where fading-memory
dynamics are strongest. For larger~$\rho$, performance degrades as chaotic
fluctuations amplify noise. The activity cost $C(\rho)$ is small for
contractive reservoirs, grows moderately near $\rho \approx 1$, and increases
rapidly in the supercritical regime. The resulting intelligence efficiency
exhibits a clear interior maximum at an intermediate spectral radius,
demonstrating that near-critical reservoirs maximise useful predictive work
per unit dynamical activity.

\begin{figure}[t]
\centering
\includegraphics[width=\linewidth]{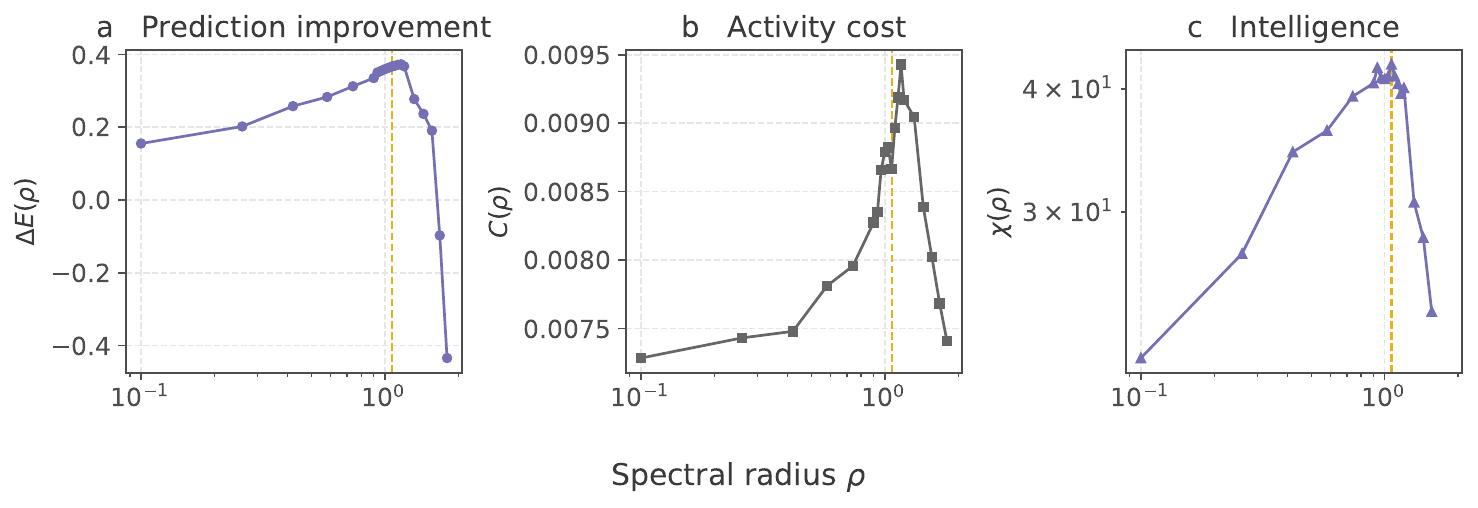}
\caption{\small
\textbf{(a)} Prediction improvement $\Delta E(\rho)$ (reduction in MSE relative 
to the persistence baseline) as a function of spectral radius. Performance peaks 
near $\rho \approx 1$, consistent with optimal fading-memory dynamics at the 
edge of chaos. 
\textbf{(b)} Activity cost $C(\rho) = \|x_{t+1}-x_t\|^2$, which increases sharply 
in the supercritical regime. 
\textbf{(c)} Intelligence $\chi(\rho)$, combining panels (a) and (b),
exhibits a pronounced maximum at an intermediate~$\rho$. 
These results confirm that near-critical reservoirs optimise useful predictive 
structure per unit dynamical cost, consistent with the theory of 
\S\ref{sec:criticality_optimality}.
}
\label{fig:exp3-three-panel}
\end{figure}

\subsection{Emergence of Intelligence-Like Structure in an Energy-Constrained Cellular Automaton}
\label{sec:exp-emergence}

\begin{figure}[t]
\centering
\includegraphics[width=\linewidth]{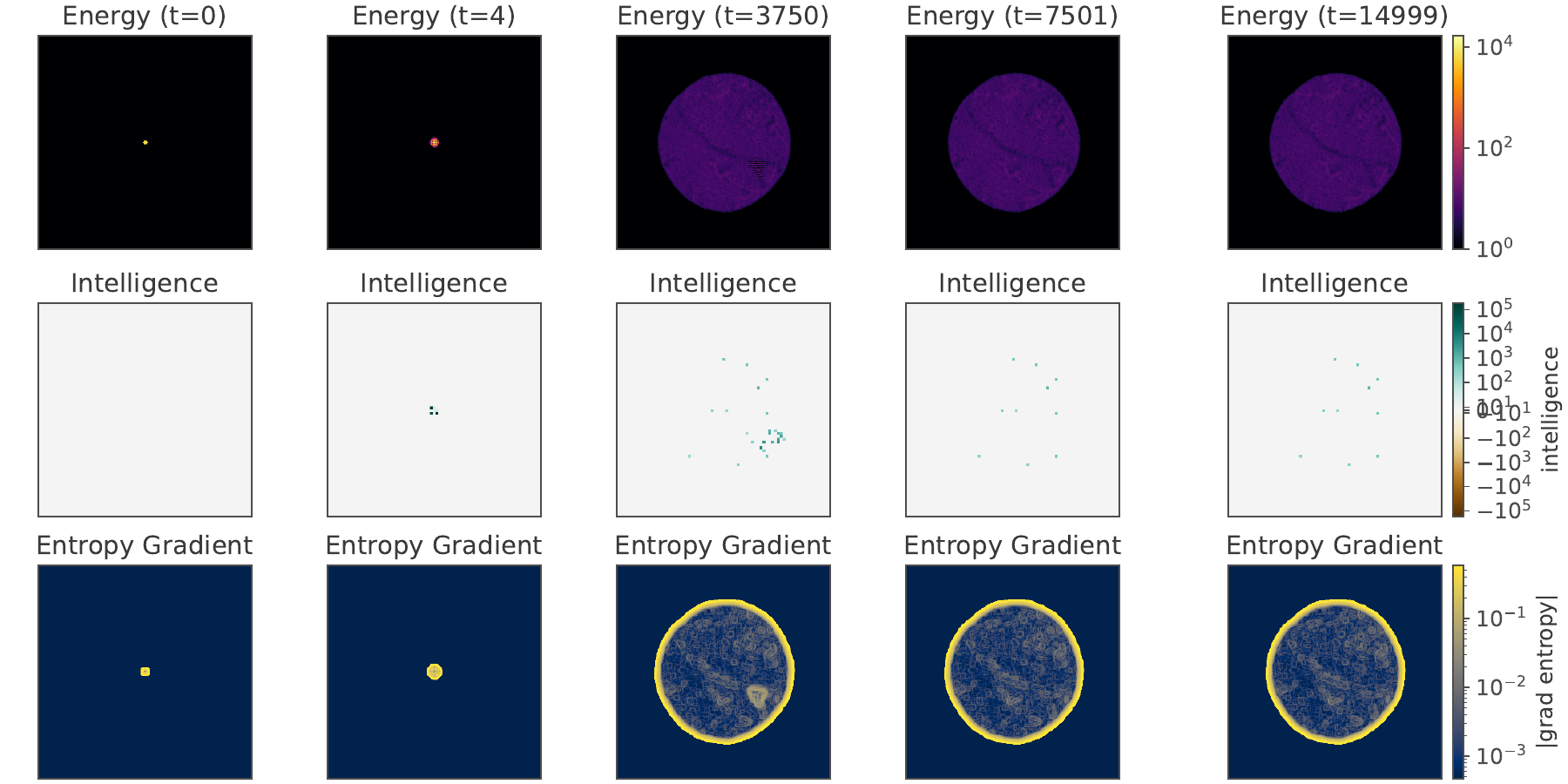}
\caption{\small
\textbf{Emergence of localised export-efficiency structures and their relation to entropy gradients.}
Columns show representative timesteps; rows show (\textbf{top}) the energy field $E_t$,
(\textbf{middle}) the patch-level export-efficiency map $S_P(t)$ (defined as outward flux per unit
local entropy-loss proxy), and (\textbf{bottom}) the entropy-gradient magnitude $\|\nabla H\|$
computed from local patch entropies.
The entropy gradient forms a largely ring-like front during expansion, whereas high-efficiency
regions remain sparse and localised, indicating that high $S_P(t)$ is not reducible to energy
density or entropy-gradient magnitude alone.
}
\label{fig:exp4-triptych}
\end{figure}

The preceding experiments examined how reversible computation, substrate
geometry, and proximity to criticality shape intelligence in deliberately
designed systems. We now demonstrate that intelligence-like structure can
emerge spontaneously in a simple physical dynamical system, without learning,
optimization, goals, or agent-level organization. Specifically, we study an
energy-conserving cellular automaton (CA) and show that coherent,
work-producing structures arise naturally from the interaction between
energetic transport, entropy gradients, and irreversible coarse-graining.

\subsubsection{Energy-Preserving Dynamics}

We consider a two-dimensional lattice of non-negative integer energy values
$E_{i,j}(t)$, initialised as a noisy, asymmetric disk of concentrated energy.
The update rule implements local, conservative diffusion: at each timestep,
each cell compares its energy to that of its neighbors and exports discrete
energy quanta proportional to positive local differences, subject to the
constraint that no cell may export more energy than it contains. All updates
are applied synchronously using an eight-neighbor Moore stencil.

Formally, letting $\mathcal{N}(i,j)$ denote the set of neighboring cells, the
outflow from cell $(i,j)$ to neighbor $(k,\ell)\in\mathcal{N}(i,j)$ at time $t$
is
\begin{align}
\label{eq:auto:0183}
\Delta_{(i,j)\to(k,\ell)}(t)
=
\max\!\left(\frac{E_{i,j}(t)-E_{k,\ell}(t)}{K},\,0\right),
\end{align}
with a global rescaling applied so that the total outflow does not exceed
$E_{i,j}(t)$. The state update is then
\begin{align}
\label{eq:auto:0184}
E_{i,j}(t+1)
=
E_{i,j}(t)
+
\sum_{(k,\ell)\in\mathcal{N}(i,j)} \Delta_{(k,\ell)\to(i,j)}(t)
-
\sum_{(k,\ell)\in\mathcal{N}(i,j)} \Delta_{(i,j)\to(k,\ell)}(t).
\end{align}

This dynamics is deterministic, local, and exactly energy conserving. It
contains no built-in objective, reward signal, or adaptive mechanism, and is
therefore interpreted not as an agent but as a physical dynamical system
capable of supporting structured behaviour. Although reflecting boundary
conditions are used in implementation, the lattice is chosen sufficiently
large that no boundary interactions occur within the simulation window; the
observed dynamics are thus free of boundary-driven artifacts.

\subsubsection{Local Structure, Entropy, and Irreversible Computation}

Although total energy is conserved, the redistribution of energy alters the
local organization of the system. We quantify local structure using the
Shannon entropy of sliding spatial patches. For a $w\times w$ patch $P$,
\begin{align}
\label{eq:auto:0185}
H_P(t)
=
-
\sum_{x\in P} p_P(x,t)\,\log p_P(x,t),
\end{align}
where $p_P(x,t)$ is the normalised energy distribution within the patch at
time $t$.

Following the irreversible-information framework of
\S\ref{sec:EncodingPhysicalState}, decreases in coarse-grained patch
entropy correspond to irreversible compression of local state
descriptions. We therefore define the patch-level irreversible information
processing as
\begin{align}
\label{eq:auto:0186}
I_{\mathrm{irr},P}(t)
=
\max\!\bigl(H_P(t-1)-H_P(t),\,0\bigr).
\end{align}
This quantity does not represent thermodynamic entropy production directly,
but rather a coarse-grained surrogate for irreversible information loss under
the observed dynamics.

\subsubsection{Work-Like Transport and Intelligence Diagnostics}

To characterise outward, work-like transport, we measure the net positive
energy flux leaving each patch. Let $F_P(t)$ denote the total outward energy
flow across the boundary of patch $P$ at time $t$, computed directly from the
directional flow fields induced by the CA update.

We define a patch-level export efficiency
\begin{align}
\label{eq:auto:0187}
S_P(t)
=
\frac{F_P(t)}{I_{\mathrm{irr},P}(t)+\varepsilon},
\end{align}
with $\varepsilon$ a small regularization constant. High values of $S_P(t)$
indicate coherent outward energy transport achieved with relatively little
irreversible information loss. Importantly, $S_P(t)$ measures \emph{energetic
efficiency}, not intelligence by itself.

To diagnose intelligence-like organization, we therefore consider three
complementary properties:
(i) export efficiency $S_P(t)$,
(ii) temporal persistence of high-efficiency patches, quantified by the
Jaccard overlap of the top-$q$ patches across time, and
(iii) spatial coherence of the efficiency field, measured by nearest-neighbor
correlations. Intelligence-like structure is identified only when all three
are simultaneously nontrivial.

\subsubsection{Results: Gradient Scaffolding and a Goldilocks Regime}

\begin{figure}[t]
\centering
\includegraphics[width=0.5\linewidth]{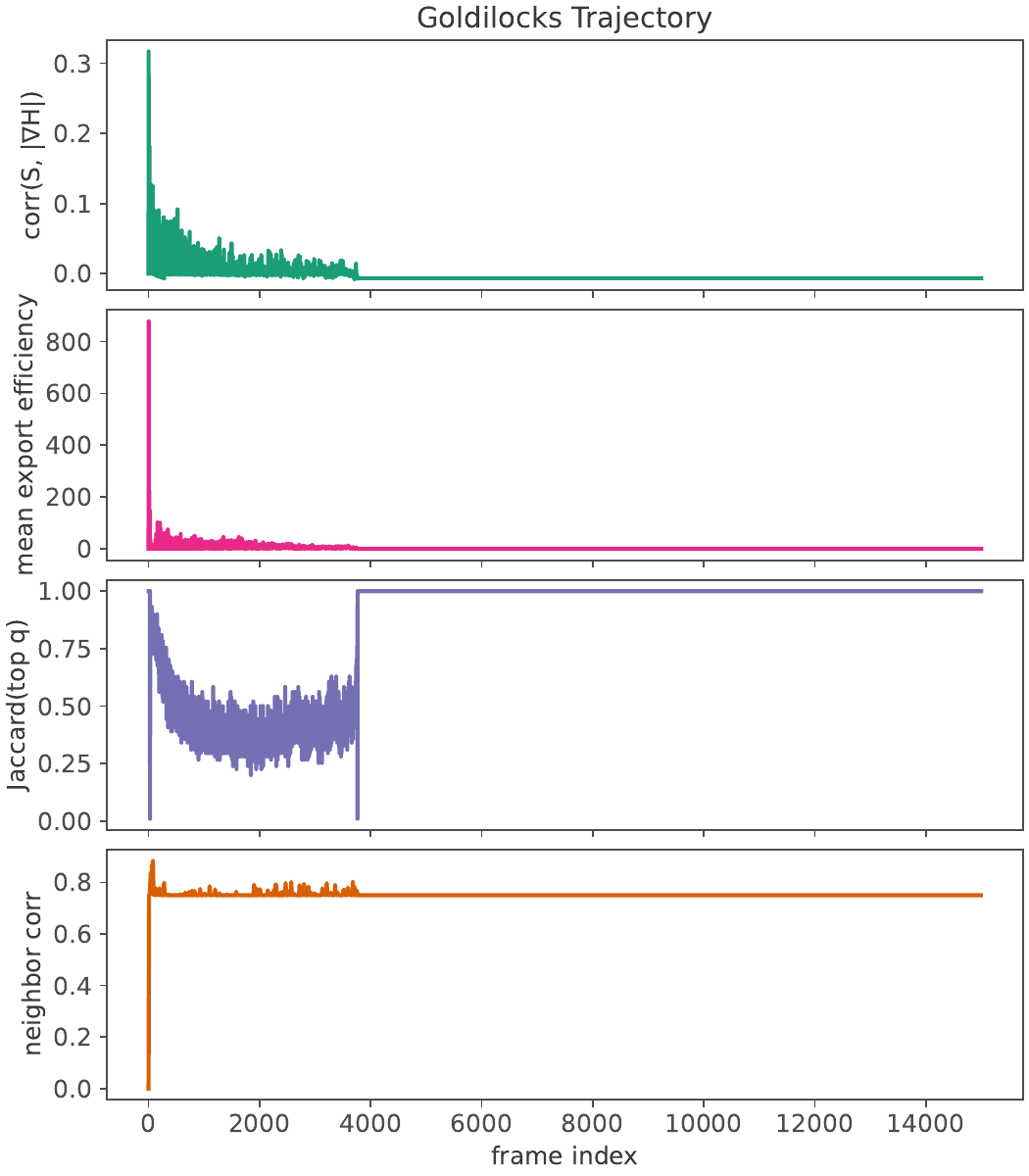}
\caption{\small
\textbf{Goldilocks trajectory: from gradient-scaffolded emergence to consolidation.}
(\textbf{Row 1}) Spatial correlation between the export-efficiency field $S(\cdot,t)$ and the entropy
gradient magnitude $\|\nabla H(\cdot,t)\|$, showing strong early coupling followed by decay.
(\textbf{Row 2}) Mean export efficiency (mean $S$), capturing the overall intensity of work-like
outward transport per unit irreversible-information proxy.
(\textbf{Row 3}) Temporal persistence of high-efficiency regions (Jaccard overlap of the top-$q$
patches), rising toward a consolidated regime.
(\textbf{Row 4}) Spatial coherence (neighbor correlation) of the efficiency field.
Together these diagnostics identify an intermediate regime in which transport remains active while
high-efficiency structure becomes coherent and increasingly persistent, after which gradients are
exhausted and activity diminishes.
}
\label{fig:exp4-goldilocks}
\end{figure}

Figure~\ref{fig:exp4-triptych} shows representative snapshots of the energy
field, export efficiency map, and entropy gradient magnitude at several stages
of the evolution. Starting from a compact energy blob, the system initially
expands outward along strong entropy gradients. During this early phase,
export efficiency is high and strongly correlated with the entropy gradient,
indicating gradient-driven emergence.

As the dynamics progress, localised filaments and arcs form along the
expanding front, and high-efficiency patches become spatially coherent but
temporally unstable. In this intermediate regime, the correlation between
export efficiency and entropy gradient magnitude decays, while patch
persistence and spatial coherence increase. This defines a \emph{Goldilocks
regime} in which energetic transport remains active, but intelligence-like
structure is no longer reducible to simple gradient following.

At long times, entropy gradients are exhausted and outward transport
diminishes. High-efficiency patch identities freeze, spatial coherence
remains high, and mean export efficiency collapses. The system thus enters a
consolidated regime characterised by persistent structure without ongoing
work-like activity.

The full developmental trajectory is summarised in
Fig.~\ref{fig:exp4-goldilocks}, which plots (i) the correlation between export
efficiency and entropy gradient magnitude, (ii) mean export efficiency, (iii)
temporal persistence of high-efficiency patches, and (iv) spatial coherence.
Together, these results demonstrate that entropy gradients act as a
\emph{scaffolding} for the emergence of intelligence-like organization, but do
not determine its mature structure. Instead, coherent, persistent, and
efficient transport structures arise only in an intermediate regime where
energetic transport, irreversible coarse-graining, and spatial organization
interact.

\section{Discussion}
\label{sec:Discussion}

The pursuit of artificial intelligence and the study of biological cognition have long been dominated by the computational paradigm, which treats intelligence as substrate-independent algorithmic information processing. While this abstraction has driven decades of remarkable progress, it fundamentally divorces the act of computation from the physical laws that govern its execution. In this paper, we have introduced a rigorously grounded alternative: a fully physical theory of intelligence based on Conservation-Congruent Encodings (CCE) and metriplectic geometry. 

By defining intelligence not by the complexity of its algorithms, but by the macroscopic physical efficiency with which it converts irreversible information processing ($\dot{I}_{\mathrm{irr}}$) into goal-directed work ($\dot{W}_{\mathrm{goal}}$), we recover a continuous, unified spectrum of intelligent behaviour. This spectrum spans from simple chemical networks to the human brain, and extends to artificial systems.

\subsection{From Algorithms to Physical Geometry}

The central contribution of this framework is the formal unification of memory, predictive control, and physical dissipation through the metriplectic decomposition ($\dot{x} = J\nabla H - \mathcal{R}\nabla \Xi$). Traditional views often treat physical dissipation as a mere engineering nuisance—waste heat to be managed. Under the CCE framework, we have demonstrated that irreversible dissipation ($\mathcal{R}\nabla \Xi$) is the exact, fundamental cost of symmetry breaking, encoding collapse, and environmental intervention. It is the generalised Landauer cost ($\mathcal{F}\alpha\ln 2$) paid to acquire information and enforce causality.

Conversely, the reversible, structure-preserving flow ($J\nabla H$) provides the physical mechanism for what we recognise biologically and computationally as memory, predictive modelling, and consciousness ($\kappa_T$). By transporting CCEs through time without incurring generalised dissipation, reversible dynamics allow an agent to leverage past information to extract macroscopic work from the environment indefinitely. Intelligence ($\chi$) therefore emerges not by eliminating dissipation, but by optimising the geometry of the state space: maximising the computational load carried by reversible flow, and restricting irreversible collapse strictly to task-critical interventions.

\subsection{The Substrate Matters: Against Pure Functionalism}

A direct consequence of our formulation is that intelligence is not strictly substrate-independent. While the mathematical architecture of CCEs is universal—applying equally to thermal baths ($k_B T$) and quantum phase spaces ($\hbar$)—the specific physical fluctuation scale ($\alpha$) and the available conserved quantities radically define the intelligence bound of the system. 

An algorithm that requires continuous, high-frequency state collapse may be logically valid, but it is energetically suboptimal. As demonstrated in our analysis of brain dynamics (\S\ref{sec:BrainDynamics}), systems heavily biased toward irreversible computation suffer from rapidly deteriorating intelligence scores ($\chi$) over long horizons. Biological systems circumvent this by utilising phase-coherent oscillations and near-critical susceptibility. The substrate strongly constrains the optimal geometry of the computation.

\subsection{Resolving Theoretical Paradoxes}

By grounding intelligence in finite phase-space volumes and observable physical fluxes, our framework naturally resolves several longstanding theoretical paradoxes:
\begin{itemize}
    \item \textbf{The Measurement Problem in Cognition:} As shown in \S\ref{sec:QuantumMeasurement}, the ``observer'' is not a mystical entity, but simply a macroscopic subsystem exerting an irreversible metric tensor ($\mathcal{R}\nabla \Xi$) to physically collapse a coupled CCE.
    \item \textbf{Emergence:} Emergence is demystified (\S\ref{sec:EmergentIntelligence}) as a rigorous boundary optimization process. Coupled systems dynamically merge their admissible boundaries to share reversible structures, strictly lowering their joint $\dot{I}_{\mathrm{irr}}$ and yielding a macroscopic spike in $\chi_T$.
    \item \textbf{Epistemic Incompleteness:} The inability of an agent to perfectly model itself is not a Gödelian logical trap, but a strict geometric necessity (\S\ref{sec:SelfModelEpistemicLimits}). Forming a stable CCE requires compressing a massive physical microstate volume into a single discrete basin, inherently blinding the agent's macroscopic self-model to its own underlying continuous dynamics.
\end{itemize}

\subsection{Future Directions}

The theoretical scaffolding presented here opens several immediate avenues for future research. First, the Maximum Throughput conjecture advanced in \S\ref{sec:IntelligenceEntropy} requires formal, rigorous proof. Proving that intelligence is inevitable for gradient degradation under CCE constraints would firmly establish cognition as a fundamental law of physics. Second, the design of dynamic, non-von Neumann computational architectures that compute primarily via reversible $J\nabla H$ pathways represents a massive frontier for hardware engineering. Finally, the physics-grounded approach to AI Safety (\S\ref{sec:AISafety})—focusing on symbiotic boundary coupling rather than semantic alignment—provides a highly formal, measurable path toward developing inherently stable artificial superintelligence.

\section{Conclusion}
\label{sec:Conclusion}

Intelligence is ultimately a physical process. It is the mechanism by which open systems, driven far from equilibrium, construct stable internal geometries to selectively route conserved quantities and extract macroscopic work from their environments. 

By abandoning the purely algorithmic abstraction and returning to the fundamental constraints of metriplectic geometry and Conservation-Congruent Encodings, we have shown that intelligence ($\chi$) and consciousness ($\kappa$) are strictly measurable physical properties. They represent the delicate geometric balance between the reversible preservation of predictive structure and the irreversible, dissipative cost of causal action. 

The exact same physical limits that bound the efficiency of a heat engine or the collapse of a quantum wavefunction also strictly bound the capacity of a system to learn, predict, and act. In this light, the emergence of intelligence in the universe is neither an accident nor an anomaly. It is the highly structured, inevitable consequence of physical systems organising their internal geometry to maximise macroscopic throughput against the fundamental limits of physical law.

\section*{Acknowledgements}
The author acknowledges the use of OpenAI and Google language models, along with the Prism writing environment, as a research and
writing aid during the development, review, and refinement of the manuscript. All scientific claims, interpretations, and any errors remain the sole responsibility of the author. The author thanks his PhD supervisor and past mentors for discussions on topics related
to artificial intelligence, dynamical systems, and information theory that
helped shape the ideas in this work. This work was supported by a research studentship at the University of Edinburgh funded through the UK Research and Innovation (UKRI) Engineering and Physical
Sciences Research Council (EPSRC).

\printbibliography

\newpage

\appendix

\section{Physical Realizations of Conservation-Congruent Encodings}
\label{app:ConcreteCCE}

The Conservation-Congruent Encoding (CCE) framework establishes that the irreversible collapse of a dynamically isolated equivalence class requires the export of a conserved quantity to the environment. The minimal generalised energetic cost of this operation is strictly bounded by the product of the exhaust channel's characteristic fluctuation scale ($\alpha$) and its intensive conjugate force ($\mathcal{F}$), yielding the generalised Landauer bound:
\begin{align}
\label{eq:auto:0188}
E_{\mathrm{min}}^{\mathrm{CCE}} = -(\mathcal{F}\alpha) \sum_{i=1}^{m} p_i \ln p_i.
\end{align}
To demonstrate the absolute substrate neutrality of this framework, we detail how this single geometric constraint manifests across three distinct physical domains, carefully distinguishing the physical variables used to \emph{encode} the information from the conserved quantities exported to \emph{erase} it.

\subsection{Electronic Systems (e.g., CMOS Memory)}
In classical digital electronics, information is not encoded thermally; it is encoded electromagnetically. A logical state corresponds to the presence or absence of a macroscopic accumulation of electrical charge across a node capacitance. 

However, the logical transitions between these charge states are dissipative by design. To irreversibly clear a register (e.g., discharging the capacitor to ground), the system must export energy into the surrounding phononic degrees of freedom (the thermal lattice). 
\begin{itemize}
    \item \textbf{Encoding Substrate:} Macroscopic electrical charge ($q$) in a bistable voltage well.
    \item \textbf{Characteristic Scale ($\alpha$):} The Boltzmann constant, $k_B$, defining the fundamental scale of the thermal fluctuations that the charge well must be robust against.
    \item \textbf{Conjugate Force ($\mathcal{F}$):} The environmental temperature, $T$, of the thermal bath accepting the exhaust.
    \item \textbf{Physical CCE Cost:} Erasing an equiprobable bit ($p_0 = p_1 = 0.5$) forces the irreversible export of heat, perfectly recovering the classical Landauer limit:
    \begin{align}
    \label{eq:auto:0189}
E_{\mathrm{min}} = k_B T \ln 2.
    \end{align}
\end{itemize}

\subsection{Chemical and Biological Systems (e.g., Molecular Gradients)}
In biological computation, such as a synapse releasing neurotransmitters or a cell responding to an osmotic gradient, information is frequently encoded in the spatial compartmentalization of specific molecular species. The dynamically isolated equivalence classes are distinct concentration states.

When a cell irreversibly collapses this distinction—for example, by opening a membrane channel to allow the concentration gradient to equalise and ``reset'' the cellular memory—the relevant conserved quantity exchanged with the environment is particle number.
\begin{itemize}
    \item \textbf{Encoding Substrate:} Spatial distribution and concentration of a conserved chemical species (e.g., $\mathrm{Ca}^{2+}$ ions).
    \item \textbf{Characteristic Scale ($\alpha$):} The scale is $1$ (for discrete single-particle analysis) or the ideal gas constant $R$ (if analysing macroscopic molar fluxes), defining the fundamental unit of the chemical channel.
    \item \textbf{Conjugate Force ($\mathcal{F}$):} The chemical potential, $\mu$, which acts as the intensive driving force for particle exchange.
    \item \textbf{Physical CCE Cost:} The minimal chemical work required to irreversibly merge two equiprobable compartmentalised states by exchanging particles with a reservoir is governed by the chemical potential:
    \begin{align}
    \label{eq:auto:0190}
W_{\mathrm{min}} = \mu \ln 2.
    \end{align}
\end{itemize}

\subsection{Quantum Mechanical Systems (e.g., Spin Resonance)}
In quantum information processing, an encoding may be physically instantiated in the intrinsic angular momentum of a particle (e.g., an electron spin oriented ``up'' or ``down'' along a quantization axis).

To irreversibly reset this spin to a known state (erasure), the system cannot simply dump heat; it must explicitly export angular momentum to an external field.
\begin{itemize}
    \item \textbf{Encoding Substrate:} The quantum spin state (angular momentum) of a particle.
    \item \textbf{Characteristic Scale ($\alpha$):} The reduced Planck constant, $\hbar$, defining the fundamental scale of mechanical action and quantised angular momentum.
    \item \textbf{Conjugate Force ($\mathcal{F}$):} The angular frequency of the applied external field, $\omega$ (such as the Larmor frequency in a magnetic resonance setup).
    \item \textbf{Physical CCE Cost:} Irreversibly resetting the spin requires the extraction of angular momentum against the applied field, yielding:
    \begin{align}
    \label{eq:auto:0191}
E_{\mathrm{min}} = \hbar \omega \ln 2.
    \end{align}
\end{itemize}

In all three domains, the underlying physical mechanism remains identical: a macroscopic equivalence class is dynamically isolated to form a CCE, and its irreversible destruction demands a strict physical exhaust determined by the product of the channel's specific conjugate force and characteristic scale ($\mathcal{F}\alpha$).

\end{document}